\documentclass[11pt]{article}

\usepackage[margin=1in]{geometry}
\usepackage{amsmath,amsthm,amssymb,amsfonts,enumerate,tikz}
\usepackage{algpseudocode,algorithm,algorithmicx}
\usepackage{hyperref}
\usepackage[mathscr]{eucal}
\usepackage{setspace}
\usepackage[capitalise]{cleveref} 
\usepackage{caption}
\usepackage{subcaption}

\newtheorem{theorem}{Theorem}
\newtheorem*{theorem*}{Theorem}
\newtheorem{proposition}[theorem]{Proposition}
\newtheorem{lemma}[theorem]{Lemma}

\theoremstyle{definition}

\newtheorem{definition}[theorem]{Definition}

\newtheorem{thmx}{Theorem}
 
\numberwithin{theorem}{section} 


\renewcommand{\subset}{\subseteq}

\renewcommand{\tilde}{\widetilde}
\renewcommand{\epsilon}{\varepsilon}

\newcommand{\commentout}[1]{}
\def\sign{\text{sign}}

\def\quadand{\quad\text{and}\quad} 
\def\calP{\mathcal{P}}

\def\<{\langle}
\def\>{\rangle}
\def\({\Big(}
\def\){\Big)}
\def\Lip{{\rm Lip}}
\def\bfb{\boldsymbol{b}}
\def\bfe{\boldsymbol{e}}
\def\bfj{\boldsymbol{j}}
\def\bfk{\boldsymbol{k}}
\def\bff{\boldsymbol{f}}
\def\bfr{\boldsymbol{r}}
\def\bfs{\boldsymbol{s}}
\def\bfx{\boldsymbol{x}}
\def\bfy{\boldsymbol{y}}
\def\bfu{\boldsymbol{u}}

\def\bfalpha{\boldsymbol{\alpha}}
\def\bfbeta{\boldsymbol{\beta}}
\def\bfgamma{\boldsymbol{\gamma}}
\def\bfl{\boldsymbol{\ell}}
\def\A{\mathcal{A}}
\def\calA{\mathcal{A}}
\def\calB{\mathcal{B}}

\def\calE{\mathcal{E}}
\def\calF{\mathcal{F}}

\def\N{\mathbb{N}}
\def\calN{\mathcal{N}}
\def\calN{\mathcal{N}} 

\def\R{\mathbb{R}}

\def\calY{\mathcal{Y}}
\def\Z{\mathbb{Z}}

\def\andspace{\quad\text{and}\quad}
\def\forallspace{\quad\text{for all}\quad}

\numberwithin{equation}{section}

\title{Approximation of functions with one-bit neural networks}
\author{C. Sinan G\"unt\"urk\footnote{Courant Institute, New York University. Email: gunturk@cims.nyu.edu} \and Weilin Li\footnote{City University of New York, City College. Email: wli6@ccny.cuny.edu}}
\begin{document}
	\maketitle
	
	\begin{abstract}
	The celebrated universal approximation theorems for neural networks roughly state that any reasonable function can be arbitrarily well-approximated by a network whose parameters are appropriately chosen real numbers. This paper examines the approximation capabilities of one-bit neural networks -- those whose nonzero parameters are $\pm a$ for some fixed $a\not=0$. One of our main theorems shows that for any $f\in C^s([0,1]^d)$ with $\|f\|_\infty<1$ and error $\epsilon$, there is a $f_{NN}$ such that $|f(\bfx)-f_{NN}(\bfx)|\leq \epsilon$ for all $\bfx$ away from the boundary of $[0,1]^d$, and $f_{NN}$ is either implementable by a $\{\pm 1\}$ quadratic network with $O(\epsilon^{-2d/s})$ parameters or a $\{\pm \frac 1 2 \}$ ReLU network with $O(\epsilon^{-2d/s}\log (1/\epsilon))$ parameters, as $\epsilon\to0$. We establish new approximation results for iterated multivariate Bernstein operators, error estimates for noise-shaping quantization on the Bernstein basis, and novel implementation of the Bernstein polynomials by one-bit quadratic and ReLU neural networks. 
	\end{abstract}

\medskip \noindent
{\bf Keywords:} neural network, quantization, one-bit, Bernstein, polynomial approximation

\medskip \noindent
{\bf MSC2020:} 41A10, 41A25, 41A36, 41A63, 42C15, 68T07


\section{Introduction}

\subsection{Motivation}

In this paper, we address the following question regarding the approximation capabilities of quantized neural networks: what classes of functions can be approximated by neural networks of a given size such that their weights and biases are constrained to be in a small set of allowable values, especially with regards to the extreme one-bit case? While this is an interesting mathematical question by itself and warrants special attention given the growing importance of machine learning across numerous scientific disciplines, here we illuminate two particularly important broader questions that motivate this theoretical study. 

Our first motivation comes from a practical problem. State-of-the-art neural networks oftentimes contain a massive number of parameters and are trained on enormous computational machines. It appears that in regards to performance of neural networks, the ``bigger is better" philosophy largely holds true  \cite{canziani2016analysis}. As higher resolution audio, image, and video data become increasingly more common, the size of high performance networks will only continue to grow and require more computational resources to utilize. This conflicts with the desire to use them on portable and low power devices such as smartphones. Quantization is the process of replacing high resolution floating point numbers with coarser ones. It is a natural solution to this issue, since simpler binary operations can help alleviate the costly computational burden that comes with using expensive floating point operations. 

Our second motivation is related to the over-parameterization phenomena. Since state-of-the-art neural networks often contain many more parameters than both the number of training samples and data dimensionality \cite{belkin2019reconciling}, it is widely believed that there is not a unique set of parameters that can be used to represent a given function. If we imagine that the set of all parameters generating a prescribed function is a manifold embedded in a high dimensional parameter space, then our main question is closely related to whether this manifold is sufficiently close to some lattice point. Since certain parameter choices may be more desirable than others, this is a central question in not only quantization, but also in neural network compression and model reduction.

\subsection{Quantized neural networks}

In this paper, we exclusively examine strict neural networks. We use the adjective ``strict" to emphasize that such networks do not have any skip connections and apply the same activation function to each node except for the final affine layer. More specifically, fix a function $\beta\colon\R\to\R$, and slightly abusing notation, for each $m\geq 1$, we extend it to a map $\beta\colon \R^m\to\R^m$ defined as $\beta(\bfx):=(\beta(x_1),\dots,\beta(x_m))$. 

\begin{definition}
	\label{def:strictNN}
	A strict neural network with activation $\beta$ is any function $f\colon \R^d\to\R^m$ of the form,
	\begin{equation*}
		f(\bfx):= W_L\beta(W_{L-1} \cdots \beta(W_1(\bfx))), \quad W_\ell(\bfu):=A_\ell \bfu + \bfb_\ell \quad \text{for} \quad  \ell=1,\dots,L. 
	\end{equation*}	
\end{definition}

In this definition, each $A_\ell\in \R^{N_\ell \times N_{\ell-1}}$ is referred to as a weight matrix, $\bfb_\ell\in \R^{N_\ell}$ is called a bias vector, $L$ is the number of layers, and $N_0=d$ and $N_L=m$. For each $1\leq \ell\leq L-1$, we refer to $\bfu_\ell:=\beta(W_\ell\beta(\cdots \beta(W_1(\bfx)))$ as the $\ell$-th layer's output. This network has $L$ layers, and applies the same activation function to each node except for the final linear layer. It has $N_\ell$ nodes in layer $\ell$, hence has $\sum_{\ell=1}^L N_\ell$ nodes in total, and we define the number of parameters as however many nonzero entries in $\{A_\ell\}_{\ell=1}^L$ and $\{\bfb_\ell\}_{\ell=1}^L$ there are. Since we place no restrictions on the weight matrices' structures, our framework allows for fully connected and convolutional networks. We say a function $g$ can be implemented by neural network provided it can be written in the above form with appropriate weights. 

Other common definitions of a neural network allow for additional operations and flexibility. For instance, some conventions allow for the use of skip connections whereby $\bfu_{\ell+1}$ is allowed to depend on any previous layer's outputs $\bfu_{\ell},\dots, \bfu_1$. Another example is the use of different activation functions per node and possibly in each layer, including the identity function which effectively bypasses an activation. Following the terminology from \cite{gribonval2022approximation}, to differentiate between the strict notion we use versus less stringent notions, we refer to the latter as generalized neural networks.

We are interested in strict quantized neural networks. While it is convenient to treat the entries of $\{A_\ell\}_{\ell=1}^L$ and $\{\bfb_\ell\}_{\ell=1}^L$ as real numbers for theoretical analysis, when used for computations, each entry is traditionally stored as a 32-bit float in memory. To reduce the number of bits, we consider a fixed finite $\calA\subset \R$ called the alphabet. 

\begin{definition}
	\label{def:quanNN}
	A strict $\calA$-quantized neural network is a strict neural network where all nonzero entries of the weight matrices $\{A_\ell\}_{\ell=1}^L$ and bias vectors $\{\bfb_\ell\}_{\ell=1}^L$ belong to $\calA$. 
\end{definition}

An alphabet $\calA$ that only consists of a small set of allowable values, such as the extreme one-bit case, is theoretically interesting as it poses stringent constraints and is computationally relevant as one can take advantage of special hardware and software for one-bit floating point operations. Since the weights and biases are selected from the same alphabet $\calA$, this definition imposes a particular scaling on the associated network, which can be altered by dilating the domain.

\subsection{The challenge: approximation by coarsely quantized networks}

\label{sec:challenge}

The problem of neural network quantization can be studied from an approximation theory perspective. Let $\calN:=\calN(\calA,\beta,L,N,P)$ be the set of functions that can be expressed as a strict $\calA$-quantized neural network with activation $\beta$ and has at most $L$ layers, $N$ nodes, and $P$ parameters. For a prescribed function class $\calF$, and distortion measure $\calE\colon \calF\times \calN\to\R$, we study the
$$
\text{approximation error} \quad := \quad \sup_{f\in \calF} \ \ \inf_{g\in \calN}  \ \ \calE(f,g).
$$ 

The approximation error achieved by generalized neural networks has been extensively studied in the traditional case where there is no quantization. Well known classical universal approximation theorems \cite{cybenko1989approximation,barron1994approximation} are qualitative statements for shallow networks. Modern versions \cite{yarotsky2017error,shaham2018provable,lu2021deep,daubechies2021nonlinear,gribonval2022approximation} provide quantitative approximation rates in terms of the function class, number of layers, parameters, etc. This is only a partial list of references, and additional ones can be found in the bibliography of a recent comprehensive survey \cite{devore2021neural}. Perhaps this is a suitable place to mention that the set of strict neural networks is a subset of their generalized counterparts, so any function class that can be approximated by strict quantized neural networks can also be well-approximated by generalized unquantized ones.

It is natural to wonder if these results or their proof strategies can be adapted to strict quantized networks. If $\calA$ is of sufficiently high resolution and covers a wide range of numbers, such as $\calA=\delta\Z \cap [-M,M]$, for sufficiently large $M$ and small $\delta>0$, then the aforementioned approximation results for unquantized generalized networks can be suitably adapted, such as in \cite[Lemma 3.7]{bolcskei2019optimal}. However, this approach requires using an increasingly higher resolution alphabet (reducing $\delta$) to achieve smaller errors. Major difficulties come into play once we fix an alphabet $\calA$ and a function class $\calF$, and ask to approximate any $f\in\calF$ up to any error $\epsilon$ by a $\calA$-quantized strict neural network. 

Many of the aforementioned papers employ the following ubiquitous strategy. For a prescribed $f\in \calF$, we approximate $f$ by a particular linear combination, $\sum_{k=1}^N a_k \phi_k$,
where the coefficients $\{a_k\}_{k=1}^N$ depend on $f$ and the span of $\{\phi_k\}_{k=1}^\infty$ is dense in $\calF$. Examples of $\{\phi_k\}_{k=1}^\infty$ include local polynomials, ridge functions, and wavelets. While it is possible that $\phi_k$ is not implementable as a generalized neural network, it is enough to find a $\psi_k$ that is a close approximation of $\phi_k$ which is  implementable. After $\{\psi_k\}_{k=1}^N$ are implemented, the summation  $\sum_{k=1}^N a_k \psi_k$ is carried out by a linear last layer whose weights are $\{a_k \}_{k=1}^N$.

However, this approximation and implementation strategy becomes problematic when $\calA$ is a small set, especially for the one-bit case. For example, if we were to closely follow the same strategy, we need each $\psi_k$ to be implementable by a $\calA$-quantized strict neural network, and require $a_k\in \calA$ to perform the linear combination $\sum_{k=1}^N a_k \psi_k$. From this point of view, it is natural to desire the following three properties: 
\begin{enumerate}\itemsep-.25em
 	\item[($P_1$)] 
 	Approximation. Given a large function class $\calF$, finite linear combinations of $\{\phi_k\}_{k=1}^\infty$ with real coefficients can efficiently approximate any $f\in\calF$. 
 	\item[($P_2$)] 
 	Quantization. For any $f\in\calF$ and its approximation $\sum_{k=1}^N a_k\phi_k$, the real coefficients $\{a_k\}_{k=1}^N$ can be replaced with suitable ones from just $\calA$ without incurring to much additional error.
	\item[($P_3$)] 
	Implementation. For each $\phi_k$, there is a good approximant $\psi_k$ that can be implemented by a strict $\calA$-quantized neural network. 
\end{enumerate}

While there many satisfactory choices for ($P_1$), it is not immediate if any of those can be made compatible with ($P_2$) and ($P_3$). On the other hand, there are a few known choices of $\{\phi_k\}_{k=1}^\infty$ that satisfy ($P_2$). Some are for very restrictive function classes: when $f$ is bandlimited and $\{\phi_k\}_{k=1}^\infty$ contain shifts of a sinc-kernel \cite{DD,exp_decay} or when $f$ is a power series of a single complex variable and $\{\phi_k\}_{k=1}^\infty$ is the standard polynomial power basis \cite{gunturk2005approximation}. 

The approach taken in this paper builds upon our recent publication \cite{onebitBernstein}, and there, we showed that any continuous function on $[0,1]$ can be approximated by a $\pm 1$ linear combination of Bernstein polynomials. A Bernstein polynomial of order $n$ and index $\bfk=(k_1,\dots,k_d)$ with $0\leq k_\ell\leq n$ is the function $p_{n,\bfk}\colon\R^d\to[0,1]$ defined as 
$$
p_{n,\bfk}(\bfx)
=\binom{n}{\bfk} \bfx^{\bfk} (1-\bfx)^{n-\bfk}
=\prod_{\ell=1}^d \binom{n}{k_\ell} x_\ell^{k_\ell} (1-x_\ell)^{n-k_\ell}.
$$
Continuing with this line of research, we investigate the multivariate Bernstein system as a potential candidate that satisfies all three of our desired properties. 

\subsection{Main contributions}

The main theorems are proved by decomposing the total approximation error into three Bernstein related terms, 
\begin{equation*}
	f-f_{NN}
	\quad =\underbrace{f-f_B}_{\text{Bern. approx. error}} + \underbrace{f_B-f_Q}_{\text{Bern. quan. error}} + \underbrace{f_Q-f_{NN}}_{\text{Bern. implementation error}}. 
\end{equation*}
Here, $f_B$ is a linear combination of multivariate Bernstein polynomials whose coefficients are real and appropriately bounded, $f_Q$ is a $\pm a$ linear combination of multivariate Bernstein polynomials for appropriate $a$, and $f_{NN}$ is a function implementable by a strict $\{\pm a\}$-quantized neural network. 

Our first main theorem concerns $f-f_Q=(f-f_B)+(f_B-f_Q)$. It shows that any smooth multivariate $f$ can be approximated by a $\pm 1$ linear combination of Bernstein polynomials with a quantitative rate that exploits smoothness of $f$. In the following, $\|\cdot\|_{C^s}$ is a norm on the space of $s$-times continuously differentiable functions and $\|\cdot\|_{C^1\Lip}$ is a norm on the space of continuously differentiable functions whose first order partial derivatives are Lipschitz. 

\begin{thmx}
	\label{thm:mainbernstein}
	Let $s,d,n\geq 1$, $\mu\in (0,1)$, and $f\in C^{s}([0,1]^d)$ with $\|f\|_\infty\leq \mu$. If $s\geq 3$, also assume that
	$
	n\geq \displaystyle \frac{\sqrt{2^{s+1}}d}{4(1-\mu)}\|f\|_{C^1\Lip}. 
	$
	Then for any $1\leq \ell\leq d$, there exists a sequence $\{\sigma_{\bfk}\}_{0\leq \bfk\leq n}$ such that $\sigma_{\bfk}\in \{\pm 1\}$ for each $\bfk$ and for all $\bfx\in [0,1]^d$,
	$$
	\Big|f(\bfx)-\sum_{0\leq \bfk\leq n} \sigma_{\bfk} p_{n,\bfk}(\bfx)\Big|
	\lesssim_{s,d,\mu} \|f\|_{C^s}  \min\big(1,n^{-s/2}x_\ell^{-s} (1-x_\ell)^{-s} \big).
	$$
\end{thmx}

\cref{thm:mainbernstein} is proved constructively by first approximating $f$ with a suitable $f_B$ of the form $\sum_{0\leq \bfk\leq n} a_{\bfk} p_{n,\bfk}$. Then secondly, the coefficients $\{a_{\bfk}\}_{0\leq \bfk\leq n}$ are fed into an algorithm called $\Sigma\Delta$ quantization to produce the desired one-bit sequence $\{\sigma_{\bfk}\}_{0\leq \bfk\leq n}$ from which we get $f_Q$ of the form $\sum_{0\leq \bfk\leq n} \sigma_{\bfk} p_{n,\bfk}$. The approximant $f_Q$ is a $\pm 1$ linear combination of Bernstein polynomials and can be numerically computed, provided that we have point samples of $f$ on a sufficiently dense grid, as summarized in \cref{alg:binarybernalg}. The  $\Sigma\Delta$ algorithm falls under a general class of ``noise-shaping" methods, where the main idea is to compute the signs $\{\sigma_{\bfk}\}_{0\leq \bfk\leq n}$ so that the ``noise" $\{a_{\bfk}-\sigma_{\bfk}\}_{0\leq \bfk\leq n}$, when fed into the synthesis operator $c\mapsto \sum_{0\leq \bfk\leq n} c_{\bfk} p_{n,\bfk}$, is small in a suitable sense. 

Our second main result deals with the implementation error $f_Q-f_{NN}$. It shows that any one-bit linear combination of Bernstein polynomials is implementable by strict one-bit neural networks, with either the quadratic $\rho(t)=\frac{1}{2} t^2$ or ReLU $\sigma(t)=\max(t,0)$ activation. The proof is constructive and schematic diagrams for the constructed networks are shown in Figures \ref{fig:Bnetwork} and \ref{fig:Bpascal}. 

\begin{thmx}
	\label{thm:mainimplementation} 
	For any integers $d,n\geq 1$ and function $f=\sum_{0\leq\bfk\leq n}\sigma_{\bfk} p_{n,\bfk}$ where $\sigma_{\bfk}\in \{\pm 1\}$ for each $\bfk$, the following hold. 
	\begin{itemize}
		\item 
		There is a $\{\pm 1\}$-quantized quadratic neural network $f_{NN,\rho}$ that has $O(n)$ layers and $O(n^d)$ nodes and parameters, as $n\to\infty$, such that $f_{NN,\rho}=f$.
		\item 
		For each $\epsilon$, there exists a $\{\pm \frac{1}{2}\}$-quantized ReLU neural network $f_{NN,\sigma}$ with $O(n\log (n/\epsilon))$ layers and $O( (n^2+n^d)\log (n/\epsilon))$ nodes and parameters, as $n\to\infty$ and $\epsilon\to0$, such that $\|f-f_{NN,\sigma}\|_\infty\leq \epsilon$.
	\end{itemize}
\end{thmx}

Combining the previous theorems, we obtain the final main theorem of this paper. 

\begin{thmx}
	\label{thm:main}
	Let $s,d,n\geq 1$, $\mu\in (0,1)$, and $f\in C^{s}([0,1]^d)$ with $\|f\|_\infty\leq \mu$. If $s\geq 3$, also assume that
	$
	n\geq \displaystyle \frac{\sqrt{2^{s+1}} d}{4(1-\mu)}\|f\|_{C^1\Lip}. 
	$ 
	For any $1\leq \ell\leq d$, there exist functions $f_{NN,\rho}$ and $f_{NN,\sigma}$ such that:
	\begin{itemize}
		\item 
		for $f_{NN}\in \{f_{NN,\rho}, f_{NN,\sigma}\}$ and any $\bfx\in [0,1]^d$, 
		\begin{equation*}
			|f(\bfx)-f_{NN}(\bfx)|
			\lesssim_{s,d,\mu} \|f\|_{C^s}  \min\big(1,n^{-s/2}x_\ell^{-s} (1-x_\ell)^{-s} \big).
		\end{equation*}
		\vspace{-2em}
		
		\item 
		$f_{NN,\rho}$ is implementable by a strict $\{\pm 1\}$-quantized quadratic neural network that has $O(n)$ layers and $O(n^d)$ nodes and parameters, as $n\to\infty$.
		\item 
		$f_{NN,\sigma}$ is implementable by a strict $\{\pm \frac{1}{2}\}$-quantized ReLU neural network that has $O(n\log n)$ layers and $O( (n^2+n^d)\log n)$ nodes and parameters, as $n\to\infty$. 
	\end{itemize}
\end{thmx}

\begin{figure}
	\centering
	\begin{subfigure}{0.45\textwidth}
		\includegraphics[width=\textwidth]{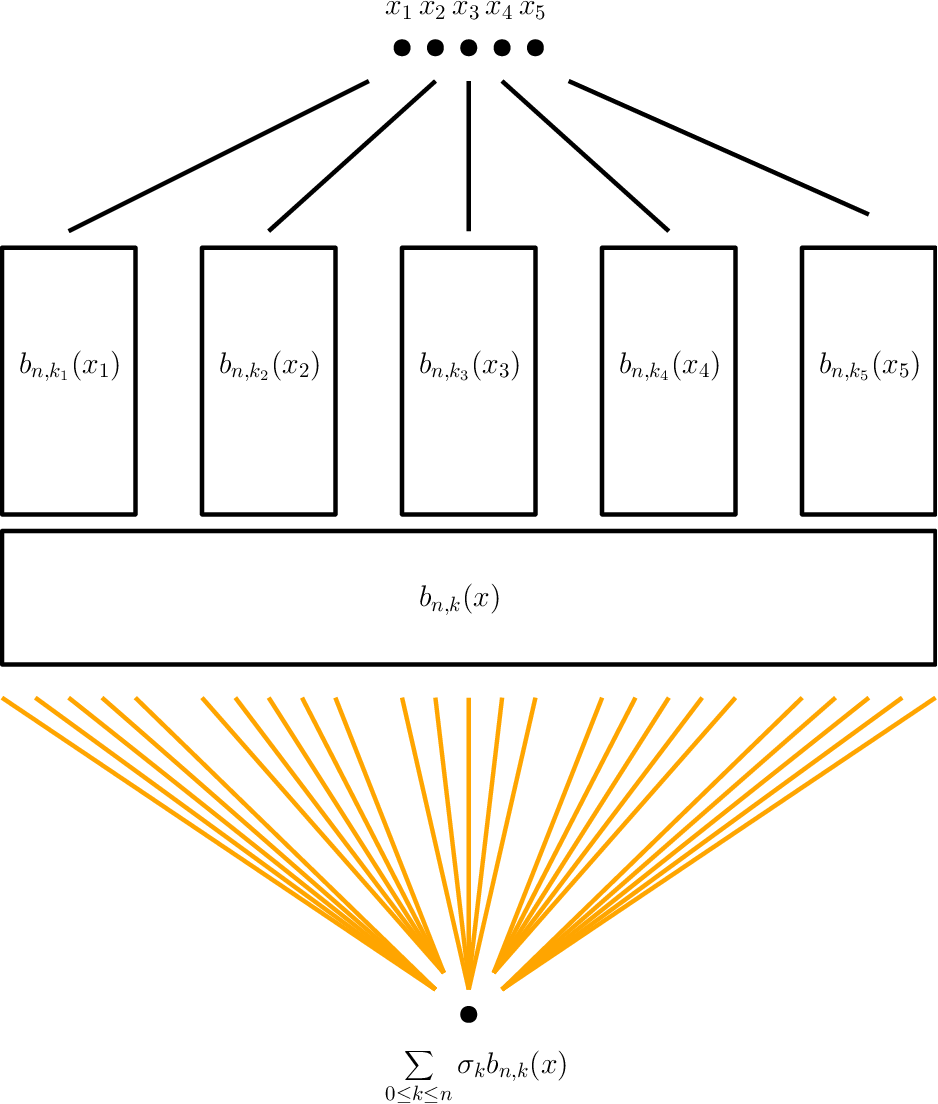}
	\caption{One-bit Bernstein neural network}
	\label{fig:Bnetwork}
	\end{subfigure}
	\quad 
	\begin{subfigure}{0.45\textwidth}
		\centering
		\vspace{1em}
		\includegraphics[width=\textwidth]{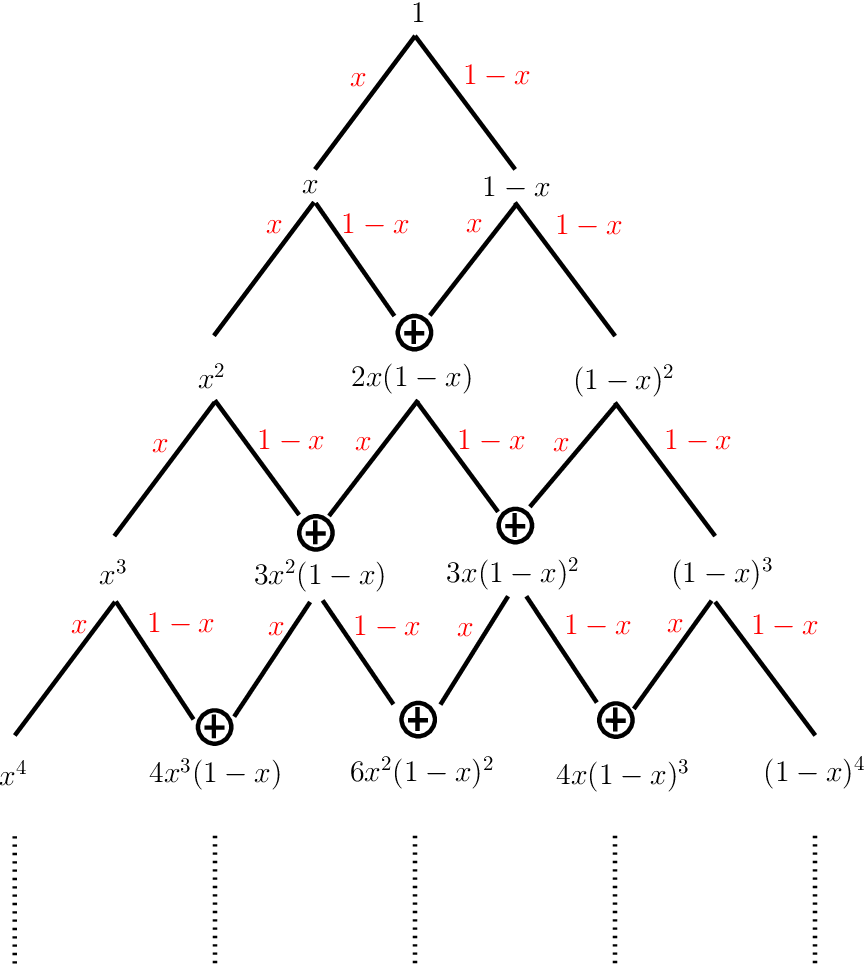}
		\caption{Pascal-Bernstein triangle}	
		\label{fig:Bpascal}
	\end{subfigure}
	
	\caption{Schematic diagrams of the constructed one-bit neural networks.}
\end{figure}

\subsection{Why Bernstein? Other contributions}

In this paper, we show that the multivariate Bernstein polynomials $\{p_{n,\bfk}\}_{0\leq \bfk\leq n}$ of order $n$ satisfy the three important properties ($P_1$), ($P_2$), and ($P_3$), which are then used to prove the main theorems.

\begin{enumerate}
	\item[($P_1$)] 
	Approximation. In \cref{thm:approxsmooth}, we use iterated Bernstein operators to approximate any $f\in C^s([0,1]^d)$, with a rate of approximation that exploits smoothness of $f$, which generalizes the one-dimensional results of Micchelli \cite{micchelli1973} and Felbecker \cite{felbecker1979}. This allows us to avoid a known saturation result \cite[Chapter 10, Theorem 3.1]{DL}, which says that the usual Bernstein operator is unable to exploit higher order derivatives of the target function. In \cref{thm:bapprox}, we convert the resulting iterated approximant into a linear combination of Bernstein polynomials without much amplification in the resulting coefficients.
	
	\item[($P_2$)] 
	Quantization. In \cref{thm:rsigmadelta}, we show that any linear combination of Bernstein polynomials with real coefficients can be replaced with a suitable sequence of $\pm 1$ linear combination, without significant error. This step is done constructively through a directional $\Sigma\Delta$ algorithm. This is perhaps surprising because previous applications of noise-shaping quantization utilize some notion of redundancy in the generating system, whereas the Bernstein polynomials are linearly independent. One explanation is that the Bernstein system of order $n$ and of a single variable span a subspace whose numerical rank is approximately $\sqrt n$.

	\item[($P_3$)] 
	Implementation. For both the ReLU and quadratic activation functions, our implementation strategy exploits a natural Pascal triangle interpretation of the univariate Bernstein polynomials, as shown in Figure \ref{fig:Bpascal}. This connection is vital in being able to implement the Bernstein polynomials in a stable and efficient way. Since we exclusively employ strict neural networks, our constructions are more constrained than those found in papers that employ generalized neural networks. The constructions are found in \ref{sec:NNappendix}. 
\end{enumerate}

\subsection{Additional related work} 

This paper addresses the universal approximation capabilities of coarsely quantized neural networks: whether they can arbitrarily well-approximate large function classes. Related work on approximation by unquantized neural networks or with variable high resolution alphabets were discussed in \cref{sec:challenge}. There we also explain why the problem of universal approximation with coarsely quantized networks is significantly different and requires novel technical developments.

On the other hand, several algorithms for neural network quantization have been developed and their performances have been evaluated empirically, see the survey article \cite{guo2018survey}. In essence, existing algorithms take a pre-trained network and replace each layer with a quantized approximant, and/or directly train the network by quantizing the back-propagation vectors. While it has been empirically observed that they can compress networks without sacrificing substantial accuracy, there is little theory explaining why. Several recent works \cite{ashbrock2021stochastic,lybrand2021greedy} address this gap by proposing gradient-based quantization methods with provable guarantees. 

This paper's material builds upon our previous work \cite{gunturk2022approximation}. There we analyzed the approximation properties of one-bit linear combinations of univariate Bernstein polynomials, and Theorem \ref{thm:main} is a multivariate generalization of \cite[Theorem 7]{gunturk2022approximation}. While \cite[Appendix B]{gunturk2022approximation} briefly touches upon implementation by quadratic neural networks, it only pertained to a single dimension and did not provide quantitative bounds for the size of such networks. On the other hand, the results in this paper deal with arbitrary dimensions, and for both the quadratic and ReLU activation functions. 

This version improves upon the first draft of this paper in two ways: we implement the approximation strategy given in \cref{thm:mainbernstein} with strict neural networks, and the ReLU case only necessitates a one-bit alphabet. 

\subsection{Organization}

The organization of subsequent sections follow the ordering ($P_1$), ($P_2$), and ($P_3$). The Bernstein approximation error $f-f_B$ is treated in  \cref{sec:Bapprox}. The quantization error in the Bernstein basis $f_B-f_Q$ is dealt with in \cref{sec:Bquan}. The implementation error of these one-bit approximations $f_Q-f_{NN}$ for both the quadratic and ReLU activation functions is studied in \cref{sec:Bimplement}, while the
constructions of $f_{NN}$ are carried out in \cref{sec:NNappendix}. Proofs of Theorems \ref{thm:mainbernstein}, \ref{thm:mainimplementation}, and \ref{thm:main} are provided in Sections \ref{sec:proofthmmain1}, \ref{sec:proofthmmain2}, and \ref{sec:proofthmmain3}, respectively. Final remarks and other aspects of this paper, including the computational method and entropy considerations, are contained in \cref{sec:finalremarks}. 

\subsection{Basic notation}

We let $\R$ be the reals and $\N$ be the natural numbers including zero. For reasons that will become evident, we let $\log$ denote the base 2 logarithm. The ceiling and floor functions are denoted $\lceil \cdot \rceil$ and $\lfloor\cdot \rfloor$, respectively. We use the notation $A \lesssim B$ to mean that there is a universal constant $C$ such that $A\leq CB$. When we write $A \lesssim_{a,b,c} B$, we mean that there is a $C>0$ that possibly depends on $a,b,c$ for which $A\leq CB$. 

Let $\bfe_1,\dots,\bfe_d$ denote the canonical orthonormal basis for $\R^d$. For any $p\in [1,\infty]$, let $\|\cdot\|_p$ be the usual $p$-th norm on vectors in $\R^d$. We also use the same notation for the $\ell^p$-norm of a function defined on a countable set, and the $L^p$ norm of a function. 

We denote multi-indices and tuples with boldface letters. For $\bfk,\bfl\in\N^d$ and $m\in\N$, we define the following. We write $\bfk\leq \bfl$ as shorthand for $k_j\leq \ell_j$ for each $j$. Likewise, we let $\bfk\leq m$ (or $m\leq \bfk$) to mean that $k_j\leq m$ (or $m\leq k_j$) for each $j$. For any $\bfx\in\R^d$, let $\bfx^{\bfk}:=x_1^{k_1}\cdots x_d^{k_d}$, and $|\bfk|=k_1+\cdots+k_d$. The degree of $\bfx^{\bfk}$ is defined to be $|\bfk|$. For any $t\in\R$, we let $t\bfk=(tk_1,\dots,tk_d)$. We follow standard convention for dealing with combinatorial factors: 
$$
\binom{\bfk}{\bfl}:= \prod_{j=1}^d \binom{k_j}{\ell_j}, 
\andspace
\binom{m}{\bfk}:= \prod_{j=1}^d \binom{m}{k_j}.
$$

\section{Approximation error}
\label{sec:Bapprox} 

In this section, we concentrate on the error incurred by approximating a smooth $f$ with $f_B=\sum_{0\leq \bfk\leq n}a_{\bfk} p_{n,\bfk}$, a linear combination of Bernstein polynomials of order $n$, where each coefficient $a_{\bfk}$ can be suitably controlled in terms of $f$. For reasons that will become apparent later on when we discuss the quantization error, we will construct a $f_B$ whose coefficients in the Bernstein basis are not much larger than $\|f\|_\infty$. More specifically, the results of this section will provide us with upper bounds on the approximation error by Bernstein polynomials, 
$$
\calE_{n,c}^{\text{approx}}(f)
:=\inf_{\{a_{\bfk}\}_{0\leq \bfk \leq n}, \, |a_{\bfk}|\leq c} \Big\| f-\sum_{0\leq \bfk\leq n} a_{\bfk} p_{n,\bfk}\Big\|_\infty. 
$$

A famous theorem of Bernstein showed that $B_n(f)$ converges uniformly to $f$ on $[0,1]$ provided that $f$ is continuous, which generalizes to higher dimensions, see \cite{hildebrandt1933linear} for the two-dimensional case. While $B_n(f)$ may appear to be a natural candidate for $f_B$, there is a well known saturation result which says that, even for infinitely differentiable $f$, the fastest rate of decay is $1/n$, see \cite[Chapter 10, Theorem 3.1]{DL}. 

To circumvent this saturation, we follow an approach of Micchelli \cite{micchelli1973}, and show in \cref{thm:approxsmooth}, that appropriate iterates of multivariate Bernstein operators achieve improved convergence rates that exploit smoothness of $f$. \cref{thm:iteratetolinear} allows us to convert these iterated approximations to a linear combination of Bernstein polynomials and \cref{thm:bapprox} provides us with the existence of a suitable $f_B$. 

\subsection{Background on Bernstein polynomials}

For a fixed integer $n\geq 1$, we denote the set of univariate Bernstein polynomials by $\calB_n:=\{p_{n,k}\}_{k=0}^n$. Each $p_{n,k}\colon [0,1]\to [0,1]$ is a polynomial of degree $n$ and 
$$
p_{n,k}(x):= \binom{n}{k}x^k (1-x)^{n-k}. 
$$
The Bernstein polynomials are nonnegative, form a partition of unity for $[0,1]$, and form a basis for the vector space of $n$ degree algebraic polynomials. They can be used to give a constructive proof of the classical Weierstrass theorem, by showing that any continuous $f$ on $[0,1]$ can be uniformly approximated by the Bernstein polynomial of $f$, 
$$
B_n(f)(x)
:=\sum_{k=0}^n f\(\frac{k}{n}\) p_{n,k}(x). 
$$
For reasons that will be apparent later, it will be convenient for us to extend the index $k$ beyond $n$ in the definition of $p_{n,k}$. If $k<0$ or $k > n$, then we define $p_{n,k}:= 0$. 

The Bernstein polynomials can also be viewed from a probabilistic perspective. For any $x\in [0,1]$, the quantity $p_{n,k}(x)$ is the probability that $k$ successes occur in $n$ independent Bernoulli trials each with probability of success $x$. Since the expected value is $nx$, for each integer $s\geq 0$, the $s$-th central moment is 
\begin{equation*}
	T_{n,s}(x) := \sum_{k=0}^n (k-nx)^{s} p_{n,k}(x). 
\end{equation*}
It is known that $T_{n,s}$ is a polynomial in $x$ of degree at most $s$ and in $n$ of degree at most $\lfloor s/2 \rfloor$. We will employ a few specific formulas for small $s$. For each $x\in[0,1]$, with the short hand notation $X:=x(1-x)$, we have
\begin{equation}
	\label{eq:Tprop0}
	\begin{split}
		&T_{n,0}(x) = 1, \quad T_{n,1}(x) = 0, \quad T_{n,2}(x) = nX, \\
		&T_{n,3}(x) = n(1-2x)X, \quad T_{n,4}(x)=3n^2X^2+n(X-6X^2).
	\end{split}
\end{equation}
For each integer $s\geq 0$, there is a constant $A_s$ such that for all $n \geq 1$ and $x \in [0,1]$, we have
\begin{equation} 
	\label{eq:Tbound}
	0\leq  T_{n,2s}(x) \leq A_s n^{s}.
\end{equation}
See \cite[Chapter 10]{DL} for proofs of the above results. Although we will not need explicit upper bounds for $A_s$, some can be found in \cite{adell2015estimates,molteni2022explicit}. 

For dimension $d\geq 1$ and integer $n\geq 1$, Bernstein polynomials are defined to be tensor products of single variable Bernstein polynomials. 

\begin{definition}
	\label{def:Bernstein}
	The multivariate Bernstein polynomials of order $n$ is $\calB_n:=\{p_{n,\bfk}\}_{0\leq \bfk\leq n}$, where each $p_{n,\bfk}\colon [0,1]^d\to [0,1]$ is defined as, 
	$$
	p_{n,\bfk}(\bfx)
	:=p_{n,k_1}(x_1) \cdots p_{n,k_d}(x_d)
	=\binom{n}{\bfk} \bfx^{\bfk} (1-\bfx)^{n-\bfk}. 
	$$
\end{definition}

Whenever there is a $1\leq\ell\leq d$ such that either $k_\ell<0$ or $k_\ell > n$, then we define $p_{n,\bfk} = 0$. It is important to mention that we exclusively use multivariate Bernstein polynomials that are formed as tensor products. They are significantly different from Bernstein polynomials on the canonical $d$-dimensional simplex. 

Several properties of multivariate Bernstein polynomials can be deduced by exploiting their tensor product structure. They form a partition of unity for $[0,1]^d$, and in particular, 
\begin{equation}
	\label{eq:partitionunity}
	\sum_{k_\ell=0}^n p_{n,k_\ell}(x_\ell) = 1 \forallspace 1\leq \ell\leq d \andspace \bfx\in[0,1]^d.
\end{equation}
The Bernstein polynomial of a multivariate $f$ is defined similar to before, 
\begin{align*}
	B_n(f)(\bfx)
	&:=\sum_{0\leq \bfk\leq n} f\(\frac{\bfk}{n}\) p_{n,\bfk}(\bfx)
	=\sum_{k_1=0}^n \cdots \sum_{k_d=0}^n f\(\frac{k_1}{n},\cdots,\frac{k_d}{n}\)\, p_{n,k_1}(x_1)\cdots p_{n,k_d}(x_d). 
\end{align*}
Central moments of multivariate Bernstein polynomials can be readily extracted. For any $\bfalpha\in\N^d$, we define
$$
T_{n,\bfalpha}(\bfx)
:=\sum_{0\leq \bfk\leq n} (\bfk-n\bfx)^{\bfalpha} p_{n,\bfk}(\bfx)
=T_{n,\alpha_1}(x_1)\cdots T_{n,\alpha_d}(x_d).
$$
Letting $A_{\bfalpha}:=A_{\alpha_1}\cdots A_{\alpha_d}$, it follows from \eqref{eq:Tbound} that 
\begin{equation}
	\label{eq:Tbound2}
	0
	\leq T_{n,2\bfalpha}(\bfx)
	\leq A_{\bfalpha}n^{|\bfalpha|}.	
\end{equation}

\subsection{Approximation by the Bernstein operator}

We start by investigating the approximation properties of the Bernstein operator. We say a function $f\colon [0,1]^d\to\R$ is Lipschitz continuous if there is a $L\geq 0$ for which $|f(\bfx)-f(\bfy)|\leq L\|\bfx-\bfy\|_2$ for all $\bfx,\bfy\in [0,1]^d$. We let $|f|_\Lip$ be the smallest such $L$ for which this inequality holds. The following is an elementary observation regarding the approximation of  Lipschitz functions by the Bernstein operator, see also \cite{heitzinger2002simulation}. 

\begin{proposition}
	\label{prop:lipbern}
	For any Lipschitz continuous $f$ on $[0,1]^d$, we have
	$$
	\|f-B_n(f)\|_\infty 
	\leq \frac{|f|_{\Lip}}{2}\sqrt{\frac{d}{n}}. 
	$$
\end{proposition}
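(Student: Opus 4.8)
The plan is to use the probabilistic interpretation of the multivariate Bernstein polynomials together with the Lipschitz bound and the second moment estimates recorded in \eqref{eq:Tprop0} and \eqref{eq:Tbound}. Fix $x\in[0,1]^d$. Since $\{p_{n,k}(x)\}_{0\le k\le n}$ is a probability distribution on the lattice points $k/n$ (it is the law of $(S_1/n,\dots,S_d/n)$ where $S_j$ are independent $\mathrm{Binomial}(n,x_j)$ random variables), the partition of unity property \eqref{eq:partitionunity} lets us write
\begin{equation*}
	f(x) - B_n(f)(x) = \sum_{0\le k\le n} \Big( f(x) - f\big(\tfrac{k}{n}\big)\Big) p_{n,k}(x),
\end{equation*}
so that, using $|f(x)-f(k/n)| \le |f|_{\Lip}\, \|x - k/n\|_2$ and nonnegativity of $p_{n,k}$,
\begin{equation*}
	|f(x) - B_n(f)(x)| \le |f|_{\Lip} \sum_{0\le k\le n} \Big\| x - \tfrac{k}{n}\Big\|_2\, p_{n,k}(x).
\end{equation*}

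Next I would apply the Cauchy–Schwarz inequality with respect to the probability weights $p_{n,k}(x)$ to pass from the $\ell^2$-norm to its square: the sum is bounded by $\big(\sum_k \|x - k/n\|_2^2\, p_{n,k}(x)\big)^{1/2}$. The summand $\|x-k/n\|_2^2 = \sum_{j=1}^d (x_j - k_j/n)^2$, and by the tensor-product structure the expectation of $(x_j - k_j/n)^2$ against $\{p_{n,k}(x)\}$ is exactly $T_{n,2}(x_j)/n^2 = x_j(1-x_j)/n$ by \eqref{eq:Tprop0}. Summing over $j$ and using $x_j(1-x_j)\le 1/4$ gives
\begin{equation*}
	\sum_{0\le k\le n} \Big\| x - \tfrac{k}{n}\Big\|_2^2\, p_{n,k}(x) = \frac{1}{n}\sum_{j=1}^d x_j(1-x_j) \le \frac{d}{4n}.
\end{equation*}
Taking square roots and combining with the previous display yields $|f(x) - B_n(f)(x)| \le \tfrac{|f|_{\Lip}}{2}\sqrt{d/n}$, and taking the supremum over $x\in[0,1]^d$ finishes the proof.

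The argument is essentially routine; the only mild subtlety is justifying the separation of the cross terms in $\|x-k/n\|_2^2$ when taking the expectation, which is immediate once one notes that the multivariate weights factor as $p_{n,k}(x) = \prod_{\ell} p_{n,k_\ell}(x_\ell)$ and that the $j$-th coordinate marginal of this distribution is the univariate Bernstein distribution $\{p_{n,k_j}(x_j)\}_{k_j=0}^n$, so that $\sum_k (x_j-k_j/n)^2 p_{n,k}(x) = \sum_{k_j=0}^n (x_j - k_j/n)^2 p_{n,k_j}(x_j)$ after summing out the other coordinates via \eqref{eq:partitionunity}. Everything else is a direct application of Cauchy–Schwarz and the moment formula $T_{n,2}(x)=nx(1-x)$, so I do not anticipate any real obstacle.
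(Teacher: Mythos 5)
Your proof is correct and follows essentially the same route as the paper's: partition of unity to express the error, the Lipschitz bound, Cauchy--Schwarz against the probability weights $\{p_{n,k}(x)\}$, reduction of the squared $\ell^2$ distance to coordinate-wise second moments via the tensor-product structure, and the identity $T_{n,2}=nx(1-x)\le n/4$. The only cosmetic difference is that the paper works with $\|nx-k\|_2$ and an extra factor of $1/n$ pulled out front, whereas you keep $\|x-k/n\|_2$ throughout.
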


\begin{proof}
	Fix any $\bfx\in [0,1]^d$. By the partition of unity property \eqref{eq:partitionunity}, we have 
	$$
	\Big| f(\bfx) - \sum_{0\leq \bfk\leq n} f\( \frac{\bfk}{n}\) p_{n,\bfk}(\bfx) \Big|
	= \Big|\sum_{0\leq \bfk\leq n} \(f(\bfx)-f\( \frac{\bfk}{n}\)\) \ p_{n,\bfk}(\bfx)\Big|.
	$$
	We use the Lipschitz condition on $f$ to see that
	$$
	\Big|\sum_{0\leq \bfk\leq n} \(f(\bfx)-f\( \frac{\bfk}{n}\)\) \ p_{n,\bfk}(\bfx)\Big|
	\leq \frac{|f|_{\Lip}}{n} \sum_{0\leq \bfk\leq n} \|n\bfx-\bfk\|_2 \ p_{n,\bfk}(\bfx).
	$$
	By Cauchy-Schwarz, partition of unity property \eqref{eq:partitionunity}, and moments identity \eqref{eq:Tprop0}, we have
	\begin{align*}
		&\sum_{0\leq \bfk\leq n} \|n\bfx - \bfk\|_2 \ p_{n,\bfk}(\bfx) 
		\leq \( \sum_{0\leq \bfk\leq n} \|nx - \bfk\|^2_2 \ p_{n,\bfk}(\bfx) \)^{1/2} \( \sum_{0\leq \bfk\leq n}  p_{n,\bfk}(\bfx) \)^{1/2} \\
		&=\( \sum_{\ell=1}^d \sum_{0\leq \bfk\leq n}  (nx_\ell-k_\ell)^2\ p_{n,\bfk}(\bfx) \)^{1/2} 
		=\( \sum_{\ell=1}^d T_{n,2}(x_\ell) \)^{1/2} \leq \frac{\sqrt{n d}}{2}.
	\end{align*}
\end{proof}

For the one dimensional case, if the target function is twice differentiable, then it is possible to obtain a faster rate of decay in $n$, see \cite[Chapter 10]{DL}. We proceed to derive an analogous result for the multivariate case. Before we state it, for concreteness, let us introduce some notation. 

For a $\bfalpha\in\N^d$, we use the shorthand notation $\partial^{\bfalpha}_{\bfx}:=\partial^{\bfalpha}:=\partial^{\alpha_1}_{x_1}\cdots \partial^{\alpha_d}_{x_d}$. For any integer $s\geq 1$, we let $C^s([0,1]^d)$ be the set of continuous real functions $f$ defined on a $[0,1]^d$ such that $\partial^{\bfalpha} f$ is continuous for all $|\bfalpha|\leq s$. We define the semi-norm $|f|_{\dot C^s}
:=\sup_{|\bfalpha|=s} \|\partial^{\bfalpha} f\|_{\infty}$ and equip $C^s(U)$ with the norm,
$$
\|f\|_{C^s}
=\max \Big\{ \|f\|_\infty, \, \max_{1\leq k\leq s} |f|_{\dot C^k} \Big\}. 
$$
We also define $C^{s}\Lip$ to be the set of functions that are $s$ times differentiable and whose $s$-th order partial derivatives are Lipschitz continuous, and we equip $C^s\Lip$ with the norm
$$
\|f\|_{C^s\Lip}
=\max \Big\{ \|f\|_{C^s}, \, \max_{|\bfalpha|=s} |\partial^{\bf\alpha} f|_{\Lip} \Big\}.
$$

%
%
%

\commentout{
\begin{proposition}
	For any $f\in C^2([0,1]^d)$, we have
	$$
	\|f-B_n(f)\|_\infty 
	\leq \frac{d^2}{8n}\|f\|_{C^2}. 
	$$
\end{proposition}

\begin{proof}
	Fix any $\bfx\in [0,1]^d$. For each $0\leq \bfk\leq n$, by Taylor's theorem, there is a $\xi_{\bfk,\bfx}\in [0,1]^d$ such that 
	\begin{align*}
		f\( \frac{\bfk}{n}\)
		&=f(\bfx)+\sum_{\ell=1}^d \partial_{x_\ell} f(\bfx) \(\frac{k_\ell}{n}-x_\ell\) + \sum_{|\bfalpha|=2} \frac{\partial^{\bfalpha} f(\xi_{\bfk,\bfx})}{\bfalpha!} \(\frac{\bfk}{n}-\bfx\)^{\bfalpha}.
	\end{align*}
	This equation, the partition of unity property \eqref{eq:partitionunity}, and the moments identity \eqref{eq:Tprop0} (in particular that $T_{n,1}(x_\ell)=0$), we deduce 
	\begin{align*}
		B_n(f)(\bfx)-f(\bfx)
		&=\sum_{0\leq \bfk\leq n} \Big( f\( \frac{\bfk}{n}\)-f(\bfx)\Big)\, p_{n,\bfk}(\bfx)\\
		&=\frac{1}{n^2}\sum_{0\leq \bfk\leq n} \sum_{|\bfalpha|=2} \frac{\partial^{\bfalpha} f(\xi_{\bfk,\bfx})}{\bfalpha!} (\bfk-n\bfx)^{\bfalpha}\, p_{n,\bfk}(\bfx). 
	\end{align*}
	From this equation and that the Bernstein polynomials are nonnegative, we see that
	\begin{align*}
		|f(\bfx)-B_n(f)(\bfx)|
		&\leq \frac{\|f\|_{C^2}}{n^2} \sum_{0\leq \bfk\leq n} \sum_{|\bfalpha|=2} \frac{|(\bfk-n \bfx)^{\bfalpha}|}{\bfalpha!}\ p_{n,\bfk}(\bfx).
	\end{align*}
	Notice that for any $\bfu\in\R^d$, we have $\sum_{|\bfalpha|=2} |\bfu^{\bfalpha}|/\bfalpha!= \|\bfu\|_1^2/2$, and so it follows that
	\[
	\frac{\|f\|_{C^2}}{n^2} \sum_{0\leq \bfk\leq n} \sum_{|\bfalpha|=2} \frac{|(\bfk-n \bfx)^{\bfalpha}|}{\bfalpha!}\ p_{n,\bfk}(\bfx)
	= \frac{\|f\|_{C^2}}{2n^2} \sum_{0\leq \bfk\leq n} \|\bfk-n\bfx\|_1^2 \ p_{n,\bfk}(\bfx). 
	\]
	By the partition of unity property \eqref{eq:partitionunity}, and moment identities \eqref{eq:Tprop0}, we have 
	\begin{align*}
		\sum_{0\leq \bfk\leq n}  \|\bfk-n\bfx\|_1^2\ p_{n,\bfk}(\bfx)
		&\leq d\, \sum_{0\leq \bfk\leq n}  \|\bfk-n \bfx\|_2^2\ p_{n,\bfk}(\bfx)
		= d \, \sum_{\ell=1}^d T_{n,2}(x_\ell)
		\leq \frac{nd^2}{4}. 
	\end{align*}
	Combining the above inequalities and noting that $x$ was arbitrary completes the proof. 
\end{proof}
}

\begin{proposition}
	\label{prop:C2bern}
	For any $f\in C^1\Lip([0,1]^d)$, we have
	$$
	\|f-B_n(f)\|_\infty 
	\leq \frac{d}{8n}\|f\|_{C^1\Lip}. 
	$$
\end{proposition}

\begin{proof}
	Fix any $\bfx\in [0,1]^d$. For each $0\leq \bfk\leq n$, by Taylor's theorem, 
	\begin{align*}
		f\( \frac{\bfk}{n}\)
		&=f(\bfx)+\sum_{\ell=1}^d \partial_{\ell} f(\bfx) \(\frac{k_\ell}{n}-x_\ell\)  \\
		&\quad + \sum_{\ell=1}^d \(\frac{k_\ell}{n}-x_\ell\) \int_0^1 \Big[ \partial_\ell f\(\bfx+t\(\frac{k_\ell}{n}-x_\ell\)\boldsymbol{e}_\ell\)-\partial_\ell f(\bfx) \) \Big] \ dt.
	\end{align*}
	This equation, the partition of unity property \eqref{eq:partitionunity}, and the moments identity \eqref{eq:Tprop0} (in particular that $T_{n,1}(x_\ell)=0$ for each $\ell$), we deduce 
	\begin{align*}
		&B_n(f)(\bfx)-f(\bfx)
		=\sum_{0\leq \bfk\leq n} \Big( f\( \frac{\bfk}{n}\)-f(\bfx)\Big)\, p_{n,\bfk}(\bfx)\\
		&\quad = \sum_{0\leq \bfk\leq n} \sum_{\ell=1}^d \(\frac{k_\ell}{n}-x_\ell\)p_{n,\bfk}(x) \int_0^1 \Big[ \partial_\ell f\(\bfx+t\(\frac{k_\ell}{n}-x_\ell\)\boldsymbol{e}_\ell\)-\partial_\ell f(\bfx) \) \Big] \ dt. 
	\end{align*}
	From this equation, that the Bernstein polynomials are nonnegative, the Lipschitz assumption on the partial derivatives of $f$, and partition of unity \eqref{eq:partitionunity}, we see that 
	\begin{align*}
		|f(\bfx)-B_n(f)(\bfx)|
		&\leq \frac{\|f\|_{C^1\Lip}}{2n^2} \sum_{0\leq \bfk\leq n} \sum_{\ell=1}^d (k_\ell-nx_\ell)^2 p_{n,\bfk}(\bfx) = \frac{\|f\|_{C^1\Lip}}{2n^2} \sum_{\ell=1}^d T_{n,2}(x_\ell). 
	\end{align*}
	Applying the moments identity \eqref{eq:Tprop0} completes the proof. 
\end{proof}

There are other properties of Bernstein operators, such as convergence of partial derivatives \cite{fellhauer2016approximation} that we will not use in this paper.

\subsection{Approximation of smooth functions with iterated Bernstein}

As mentioned earlier, due to the saturation phenomenon for the Bernstein operator, there is no hope of improving the decay rate of $1/n$ in \cref{prop:C2bern} even under additional regularity assumptions on the target function. To overcome this, iterated univariate Bernstein operators were developed by Micchelli \cite{micchelli1973} and Felbecker \cite{felbecker1979} as alternative means of approximation. By viewing $B_n$ as an operator on $C([0,1]^d)$, for any integer $r\geq 1$, we define the following iterated Bernstein operator 
\begin{equation*}
	U_{n,r}:=I-(I-B_n)^r
	=\sum_{j=1}^r (-1)^{j-1} \binom{r}{j} B_n^j. 	
\end{equation*}
Note that $U_{n,1}=B_n$ coincides with the usual Bernstein operator. The following theorem generalizes the core results of Micchelli-Felbecker to higher dimensions.

\begin{theorem}
	\label{thm:approxsmooth}
	For any integers $s,d\geq 1$ and any $f\in C^{s}([0,1]^d)$, it holds that 
	$$
	\|f-U_{n,\lceil s/2 \rceil}(f)\|_\infty 
	\lesssim_{s,d} \|f\|_{C^s}n^{-s/2}. 
	$$
\end{theorem}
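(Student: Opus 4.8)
The plan is to prove the theorem by reducing the multivariate estimate to a one-dimensional computation via the tensor-product structure of $B_n$, and then to analyze the iterated operator $U_{n,r}$ through its relation to powers of $I - B_n$. Write $r = \lceil s/2 \rceil$, so that $I - U_{n,r} = (I - B_n)^r$; the goal is to show $\|(I-B_n)^r f\|_\infty \lesssim_{s,d} \|f\|_{C^s} n^{-s/2}$. The heuristic is that each application of $I - B_n$ contributes a factor of order $n^{-1}$ worth of smoothing (as in \cref{prop:C2bern}), so $r$ applications should cost $n^{-r} \leq n^{-s/2}$, but one must be careful because $I - B_n$ only gains two derivatives per application and because the constants must be tracked so that the final bound depends only on $\|f\|_{C^s}$ and not on intermediate $C^{2j}$ norms.

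First I would set up the multivariate Bernstein operator as a product of commuting coordinate-wise operators: writing $B_n^{(\ell)}$ for the univariate Bernstein operator acting in the $x_\ell$ variable, we have $B_n = B_n^{(1)} \cdots B_n^{(d)}$, and all of these commute. Then $I - B_n = I - \prod_\ell B_n^{(\ell)}$, which can be telescoped as a sum $\sum_\ell (I - B_n^{(\ell)})\, B_n^{(1)} \cdots B_n^{(\ell-1)}$. Iterating this and using that the univariate Bernstein operators are contractions on $L^\infty$ (they are averaging operators, so $\|B_n^{(\ell)} g\|_\infty \leq \|g\|_\infty$), $(I - B_n)^r f$ becomes a finite sum, with coefficients bounded by a constant depending only on $r$ and $d$, of terms of the form $\big(\prod_{j} (I - B_n^{(\ell_j)})\big) (\text{contraction}) f$ where the product runs over $r$ indices $\ell_1, \dots, \ell_r \in \{1, \dots, d\}$. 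So it suffices to bound, for each fixed choice of coordinates, the sup norm of a composition of $r$ univariate operators of the form $I - B_n^{(\ell)}$ applied to $f$.

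The core one-dimensional estimate I would establish (or invoke from Micchelli/Felbecker, but I would prove it to keep the paper self-contained) is: for a function $g$ that is $C^{2r}$ in a single variable, $\|(I - B_n)^r g\|_\infty \lesssim_r n^{-r} \sum_{j \leq 2r} \|\partial^j g\|_\infty$, with the key point that the right side uses only derivatives up to order $2r$. The mechanism is Taylor expansion: $B_n g(x) - g(x) = \sum_{k} (g(k/n) - g(x)) p_{n,k}(x)$, and expanding $g(k/n)$ around $x$ to order $2r$ produces, after using the moment formulas \eqref{eq:Tprop0}–\eqref{eq:Tbound}, a representation $B_n g - g = \sum_{j=1}^{r} n^{-j} q_j \cdot (\text{smooth of } g) + (\text{remainder})$, where $q_j$ is a fixed polynomial of degree $\leq j$ coming from the central moments $T_{n,2j}$ scaled by $n^{-j}$ (note $T_{n,2j}(x)/n^{-j}$... rather $n^{-2j} T_{n,2j}$ is $O(n^{-j})$). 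The crucial structural fact, which is really Micchelli's observation, is that the ``leading order'' part of $I - B_n$ is a differential operator of the form $n^{-1} L$ where $L$ is second order (namely $L g = -\tfrac12 x(1-x) g''$ up to sign, matching $T_{n,2}$); composing $r$ copies then gives leading term $n^{-r} L^r g$, a $2r$-th order differential operator applied to $g$, plus lower-order-in-$n$ corrections, and all error terms are controlled by $\|g\|_{C^{2r}}$ using the moment bounds \eqref{eq:Tbound}. Finally, since $s \leq 2r = 2\lceil s/2 \rceil \leq s+1$, derivatives of $f$ up to order $2r$ are controlled: for $j \leq s$ directly by $\|f\|_{C^s}$, and for $j = s+1$ (only when $s$ is odd, so $2r = s+1$) I would note that the degree-$r$ polynomial $q_r$ arising at the top order actually pairs $n^{-r} T_{n,2r}$-type weights against an $s$-th order derivative evaluated at an intermediate point, so only $\|\partial^s f\|_\infty$ enters — the Taylor remainder at order $s$ (not $s+1$) suffices because $2r - 1 \geq s$ means we can stop the expansion one step early on that term. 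This last bookkeeping — showing that one genuinely only needs $C^s$ and not $C^{2\lceil s/2\rceil}$ — is the main obstacle; it requires carefully choosing the Taylor expansion order term-by-term so that at the highest power of $n^{-1}$ one expands $g$ only to order $s$ and absorbs the rest into the moment bound \eqref{eq:Tbound}, while at lower powers of $n^{-1}$ one has room to spare.

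Assembling: combining the telescoping reduction of paragraph two with the univariate bound of paragraph three, and using that the number of coordinate-sequence terms is at most $d^r$ with each intermediate contraction costing nothing in $L^\infty$, yields $\|(I - B_n)^r f\|_\infty \lesssim_{r,d} n^{-r} \|f\|_{C^{2r}} \lesssim_{s,d} n^{-s/2} \|f\|_{C^s}$, using $r = \lceil s/2 \rceil \geq s/2$ and the reduction of the $C^{2r}$ norm to the $C^s$ norm. Since $\|f - U_{n,r} f\|_\infty = \|(I-B_n)^r f\|_\infty$, this is exactly the claimed estimate. I expect the multivariate-to-univariate reduction and the $L^\infty$-contractivity of $B_n$ to be entirely routine; the only delicate part, worth stating as a separate lemma, is the sharp univariate expansion $I - B_n = \sum_{j=1}^{r} n^{-j}(\text{fixed differential operator, degree}\leq 2j) + O(n^{-r})$-in-$C^{2r}$, and within that, the observation that the top-order term costs only $\|f\|_{C^s}$.
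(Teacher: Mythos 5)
Your proposed strategy genuinely differs from the paper's. The paper does not separate dimensions at all: it proves the bound by a direct strong induction on $s$, Taylor-expanding $f(k/n)$ multivariately to order $s$, representing $(B_n - I)f$ as $\sum_{0<\alpha\le s-1} n^{-\lceil |\alpha|/2\rceil} F_{n,\alpha} + G_{n,s}$ with $F_{n,\alpha} := \partial^\alpha f \cdot T_{n,\alpha}/(n^{\lfloor|\alpha|/2\rfloor}\alpha!)$, and then applying the inductive hypothesis to each $F_{n,\alpha}$. The engine that makes this close is the lemma that $\|F_{n,\alpha}\|_{C^{s-|\alpha|}} \lesssim_{s,d,\alpha} \|f\|_{C^s}$ \emph{uniformly in $n$} (proved via Bernstein's inequality for the polynomials $T_{n,\alpha}$), which trades one power of $n^{-1/2}$ (from $\lceil |\alpha|/2\rceil$) against exactly $|\alpha|$ derivatives, so the even/odd parity is absorbed automatically and no separate univariate result is ever invoked. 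Your route — telescope $I - B_n$ into a sum over directions, expand $(I-B_n)^r$ multinomially, and then feed each univariate factor into a Micchelli/Felbecker-type estimate — is a legitimate alternative organizational principle, and the preservation of $C^s$ regularity under the intermediate contractions $B_n^{(\ell)}$ (which you would need, since $\partial_{x_\ell} B_n^{(\ell)} g$ is a difference quotient and hence bounded by $\|\partial_{x_\ell} g\|_\infty$) is fine. What the paper's approach buys is a single self-contained induction that works in all dimensions and both parities simultaneously; what yours would buy, if completed, is modularity (a 1D black box plus a reduction).

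However, there is a genuine gap, and it is the one you yourself flag. Your one-dimensional core estimate is stated as $\|(I-B_n)^r g\|_\infty \lesssim_r n^{-r} \|g\|_{C^{2r}}$, which needs $2r$ derivatives; when $s$ is odd, $2r = s+1$, and a $C^s$ function need not have an $(s+1)$-st derivative at all, so this bound is simply not applicable. The remark that one should "stop the Taylor expansion one step early on that term" is the right instinct but it is not a proof — carrying it out is precisely where the induction in the paper (or Felbecker's odd-order argument) does its real work, and you never establish the needed inequality $\|(I-B_n)^{\lceil s/2\rceil} g\|_\infty \lesssim n^{-s/2}\|g\|_{C^s}$ for odd $s$. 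The multivariate reduction inherits the same difficulty: in the multinomial expansion of $(I-B_n)^r$, a term with multiplicities $m_1,\dots,m_d$ summing to $r$ naively consumes $2m_\ell$ derivatives in direction $\ell$, hence $2r = s+1$ total, which overshoots $\|f\|_{C^s}$ by one order. Repairing this requires designating exactly one coordinate $\ell^\ast$ in each term to use only $2m_{\ell^\ast}-1$ derivatives and contribute $n^{-(2m_{\ell^\ast}-1)/2}$ rather than $n^{-m_{\ell^\ast}}$ — again the unproven odd-order univariate bound — and this bookkeeping is not the "entirely routine" step you describe. Until that lemma is actually proved, the argument gives only the even-$s$ case (equivalently, $n^{-s/2}\|f\|_{C^{2\lceil s/2\rceil}}$, which is weaker than claimed when $s$ is odd).
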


\begin{proof}
	It suffices to prove that, for all $s\geq 1$ and $f\in C^s([0,1]^d)$, we have 
	\begin{equation}
		\label{eq:Uhelp1}
		\|f-U_{n,\lceil s/2 \rceil}(f)\|_\infty 
		=\|(I-B_n)^{\lceil s/2\rceil}(f)\|_\infty
		\lesssim_{s,d} \|f\|_{C^s}n^{-s/2}.
	\end{equation}

	We proceed by strong induction. The $s=1,2$ cases hold by Propositions \ref{prop:lipbern} and \ref{prop:C2bern}. Hence, assume that inequality \eqref{eq:Uhelp1} holds for $s=1,2,\dots,r$ for some $r\geq 2$. Now we will prove that the statement holds for $s=r+1$.  
	
	To this end, fix a $f\in C^{r+1}([0,1]^d)$ and any $\bfx\in [0,1]^d$. For each $0\leq \bfk\leq n$, there is a $\xi_{\bfk,\bfx}$ such that 
	\begin{align*}
		f\( \frac{\bfk}{n}\)
		&=f(\bfx)+\sum_{0<|\bfalpha| \leq r} \frac{\partial^{\bfalpha} f(\bfx)}{\bfalpha!} \(\frac{\bfk}{n}-\bfx\)^{\bfalpha} + \sum_{|\bfalpha|=r+1} \frac{\partial^{\bfalpha} f(\xi_{\bfk,\bfx})}{\bfalpha!} \(\frac{\bfk}{n}-\bfx\)^{\bfalpha}.
	\end{align*}
	From here, we see that
	\begin{gather*}
		\label{eq:Uhelp-1}
		\begin{split}
			&(B_n-I)(f)(\bfx) \\
			&=\sum_{0<|\bfalpha| \leq r} \sum_{0\leq \bfk\leq n} \frac{\partial^{\bfalpha} f(\bfx)}{\bfalpha!} \(\frac{\bfk}{n}-\bfx\)^{\bfalpha}p_{n,\bfk}(\bfx) + \underbrace{\sum_{|\bfalpha|=r+1} \sum_{0\leq \bfk\leq n} \frac{\partial^{\bfalpha} f(\xi_{\bfk,\bfx})}{\bfalpha!} \(\frac{\bfk}{n}-\bfx\)^{\bfalpha} p_{n,\bfk}(\bfx)}_{:=G_{n,r+1}(\bfx)}.
		\end{split}
	\end{gather*}
	To simplify the first term, notice that since the first central moment of Bernstein polynomials is identically zero, $T_{n,1}=0$. Hence all terms for which $|\bfalpha|=1$ disappear. For each $2\leq |\bfalpha|\leq r$ and $2\leq m\leq r$, we define the functions,
	$$
	F_{n,\bfalpha}
	:=\frac{\partial^{\bfalpha} f \ T_{n,\bfalpha}}{n^{|\bfalpha|/2} \bfalpha!}, 
	\quad\text{and}\quad
	F_{n,m}:= \sum_{|\bfalpha|=m} F_{n,\bfalpha}. 
	$$
	It follows that 
	$$
	(B_n-I)(f)(\bfx)
	=\sum_{2\leq |\bfalpha|\leq r} \frac{F_{n,\bfalpha}(\bfx)}{n^{|\bfalpha|/2}} +G_{n,r+1}(\bfx)
	=\sum_{m=2}^r \frac{F_{n,m}(\bfx)}{n^{m/2}}  + G_{n,r+1}(\bfx). 
	$$
	This identity holds for each $\bfx$. Noting that $\lceil (r+1)/2\rceil \geq 1$ since $r\geq2$, We have 
	\begin{equation}
		\label{eq:Uhelp2}
		(B_n-I)^{\lceil (r+1)/2 \rceil}(f)
		=\sum_{m=2}^r \frac{(B_n-I)^{\lceil (r+1)/2 \rceil-1} F_{n,m}}{n^{m/2}} + (B_n-I)^{\lceil (r+1)/2 \rceil-1} G_{n,r+1}. 
	\end{equation}
	
	We concentrate on the primary term in \eqref{eq:Uhelp2} first. To apply the inductive hypothesis, we claim that for each $2\leq m\leq r$, we have $F_{n,m}\in C^{r+1-m}$ and 
	\begin{equation}
		\label{eq:Uhelp3}
		\|F_{n,m}\|_{C^{r+1-m}}
		\leq \|f\|_{C^{r+1}} 2^{r+1-m} \sum_{|\bfalpha|=m} \sqrt{A_{\bfalpha}}. 
	\end{equation} 
	The key part of this assertion is that the upper bound for $\|F_{n,m}\|_{C^{r+1-m}}$ does not depend on $n$. Note that for each $|\bfalpha|=m$, we have $\partial^{\bfalpha} f\in C^{r+1-m}$ due to the initial assumption that $f\in C^{r+1}$. Also $T_{n,\bfalpha}$ is infinitely differentiable since it is a multinomial, so we see that $F_{n,m}\in C^{r+1-m}$. By Leibniz, for each $|\bfbeta|\leq r+1-m$, we have
	$$
	\partial^{\bfbeta} F_{n,m}
	=\sum_{|\bfalpha|=m} \frac{1}{n^{m/2} \bfalpha!} \sum_{0\leq \bfgamma\leq \bfbeta} \binom{\bfbeta}{\bfgamma} \partial^{\bfalpha+\bfbeta-\bfgamma} f \ \partial^{\bfgamma} T_{n,\bfalpha}. 
	$$
	Since $T_{n,\bfalpha}(\bfx)$ is a polynomial in $x_\ell$ of degree at most $\alpha_\ell$, the inside summation can be taken over $0\leq \bfgamma\leq \min(\bfalpha,\bfbeta)$. Additionally, Bernstein's inequality for algebraic polynomials and central moment bounds \eqref{eq:Tbound} yield
	\begin{align*}
		\|\partial^{\bfgamma} T_{n,\bfalpha}\|_\infty
		&\leq \prod_{\ell=1}^d \|\partial^{\gamma_\ell} T_{n,\alpha_\ell}\|_\infty
		\leq \prod_{\ell=1}^d \frac{\alpha_\ell!}{(\alpha_\ell-\gamma_\ell)!} \|T_{n,\alpha_\ell}\|_\infty \\
		&\leq \prod_{\ell=1}^d \frac{\alpha_\ell!}{(\alpha_\ell-\gamma_\ell)!} \sqrt{\|T_{n,2\alpha_\ell}\|_\infty}
		\leq \frac{\bfalpha!}{(\bfalpha-\bfgamma)!} \sqrt{ A_{\bfalpha}}\, n^{|\bfalpha|/2}. 
	\end{align*}
	From this and that $|\bfalpha+\bfbeta-\bfgamma|\leq r+1$, it follows that 
	\begin{align*}
		\|\partial^{\bfbeta} F_{n,m}\|_\infty
		&\leq \sum_{|\bfalpha|=m} \sqrt{A_{\bfalpha}} \sum_{0\leq \bfgamma\leq \min(\bfalpha,\bfbeta)} \binom{\bfbeta}{\bfgamma} \frac{1}{(\bfalpha-\bfgamma)!} \, \|\partial^{\bfalpha+\bfbeta-\bfgamma} f\|_\infty \\
		&\leq \|f\|_{C^{r+1}} \sum_{|\bfalpha|=m} \sqrt{A_{\bfalpha}} \sum_{0\leq \bfgamma\leq \bfbeta} \binom{\bfbeta}{\bfgamma}
		= \|f\|_{C^{r+1}} 2^{|\bfbeta|} \sum_{|\bfalpha|=m} \sqrt{A_{\bfalpha}}. 
	\end{align*}
	This completes the proof of \eqref{eq:Uhelp3}.
	
	Returning back to the proof at hand, notice that for each $2\leq m\leq r$, if we define
	$$
	q(m)
	:=\Big\lceil \frac{r+1}{2} \Big\rceil-1 - \Big\lceil \frac{r+1-m}{2}  \Big\rceil,  
	$$
	then we have $q(m)\geq 0$ since $m\geq 2$, and $q(m)\leq (m-1)/2$.
	It follows from the inductive hypothesis that there exist constants $M_{r+1-m,d}>0$ for each $2\leq m\leq r$ such that
	\begin{gather}
		\label{eq:Uhelp4}
		\begin{split}
			\big\|(B_n-I)^{\lceil (r+1)/2 \rceil-1 }F_{n,m}\big\|_\infty 
			&\leq \|B_n-I\|_\infty^{q(m)} \Big\|(B_n-I)^{\lceil (r+1-m)/2 \rceil}F_{n,m} \Big\|_\infty \\
			&\leq 2^{(m-1)/2} M_{r+1-m, d} \|F_{n,m}\|_{C^{r+1- q}} n^{-(r+1-m)/2}. 
		\end{split}
	\end{gather}
	
	We next control the remainder term involving $G_{n,r+1}$. We have the following upper bound for $\|G_{n,r+1}\|_\infty$. By Cauchy-Schwarz, the partition of unity property \eqref{eq:partitionunity}, and central moment bounds \eqref{eq:Tbound2}, we see that for each $\bfx$,
	\begin{gather}
		\label{eq:Uhelp0}
		\begin{split}
			|G_{n,r+1}(\bfx)|
			&\leq \frac{|f|_{\dot C^{r+1}}}{n^{r+1}}\sum_{0\leq \bfk\leq n} \sum_{|\bfalpha|=r+1} \frac{1}{\bfalpha!} \, \big|(\bfk-n\bfx)^{\bfalpha}\big| \ p_{n,\bfk}(\bfx) \\
			&\leq  \frac{|f|_{\dot C^{r+1}}}{n^{r+1}} \sum_{|\bfalpha|=r+1} \frac{1}{\bfalpha!} \   \(\sum_{0\leq \bfk\leq n} (\bfk-n\bfx)^{2\bfalpha} \, p_{n,\bfk}(\bfx)\)^{1/2} 
			\leq \frac{|f|_{\dot C^{r+1}}}{ n^{(r+1)/2}} \sum_{|\bfalpha|=r+1} \frac{\sqrt{A_{\bfalpha}}}{\bfalpha!}.
		\end{split}
	\end{gather}
	Moreover, $\|B_n-I\|_\infty \leq 2$, and $\lceil (r+1)/2\rceil-1\leq r/2$. Hence we have 
	\begin{equation}
		\label{eq:Uhelp-2}
		\|(B_n-I)^{\lceil (r+1)/2 \rceil-1} G_{n,r+1}\|_\infty 
		\leq 2^{\lceil (r+1)/2 \rceil-1} \|G_{n,r+1}\|_\infty
		\leq 2^{r/2} \|G_{n,r+1}\|_\infty. 
	\end{equation}
	
	Now we are ready to complete the proof. Combining \eqref{eq:Uhelp2}, \eqref{eq:Uhelp3},  \eqref{eq:Uhelp4}, \eqref{eq:Uhelp0}, and \eqref{eq:Uhelp-2}, we see that 
	\begin{align*}
		&\|(B_n-I)^{\lceil (r+1)/2 \rceil}(f)\|_\infty \\
		&\quad\leq \sum_{m=2}^r \frac{\|(B_n-I)^{\lceil (r+1)/2 \rceil-1} F_{n,m}\|_\infty}{n^{m/2}} + \|(B_n-I)^{\lceil (r+1)/2 \rceil-1} G_{n,r+1}\|_\infty \\ 
		&\quad\leq \sum_{m=2}^r \frac{2^{(m-1)/2} M_{r+1-m, d} \|F_{n,m}\|_{C^{r+1-m}}}{n^{(r+1)/2}} + \frac{2^{r/2} |f|_{\dot C^{r+1}}}{ n^{(r+1)/2}} \sum_{|\bfalpha|=r+1} \frac{\sqrt{A_{\bfalpha}}}{\bfalpha!} \\ 
		&\quad \lesssim_{r+1,d} \frac{\|f\|_{C^{r+1}}}{ n^{(r+1)/2}}. 
	\end{align*}
	This completes the proof by induction.
	
\end{proof}
	

Let us briefly discuss our result in the context of classical approximation results of Micchelli \cite{micchelli1973} and Felbecker \cite{felbecker1979}. The main distinction is that our result holds for iterates of multivariate Bernstein polynomials (formed as tensor products), while the classical papers only treat the univariate case. One small improvement we made deals the parity of $s$. Micchelli only treated the even case, which loses a $n^{-1/2}$ factor for the odd cases, whereas the odd case was satisfactorily derived by Felbecker. In our proof, we combined the even and odd cases together seamlessly. The extension of these classical results to higher dimensions was perhaps known by experts, though we were unable to find a reference.

On the other hand, there are alternative generalizations of Bernstein polynomials to the canonical simplex in $\R^d$, as opposed to the unit cube. Such polynomials are significantly different from the tensor product ones employed in this paper. For Bernstein polynomials on the simplex, approximation rates of Micchelli-Felbecker iterations have been studied, see \cite{gonska1994approximation,ding2008k} and references therein. 

\subsection{From iterated Bernstein to linear combinations}

In the next section on quantization, we will show to quantize the coefficients of a function written in the Bernstein basis. For this reason, we show how to relate the iterated Bernstein approximation $U_{n,r}(f)$ to the Bernstein polynomial $B_n(f_{n,r})$ of a possibly different function $f_{n,r}$, which can be found constructively via the formula, 
\begin{equation}
	\label{eq:fnr}
	f_{n,r}	:=\( I+\sum_{m=1}^{r-1} (I-B_n)^m\)(f). 
\end{equation}

\begin{theorem}
	\label{thm:iteratetolinear}
	For any integers $r,d\geq 1$, any $f\in C([0,1]^d)$, and any $n\geq 1$, we have 
	$$
	U_{n,r}(f)
	=B_n(f_{n,r}),
	$$
	where $f_{n,r}$ is defined in \eqref{eq:fnr}. Further, it holds that
	$$
	\|f_{n,r}\|_\infty 
	\leq \|f\|_\infty + (2^{r-1}-1)\|f-B_n(f)\|_\infty. 
	$$
\end{theorem}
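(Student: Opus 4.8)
The statement has two independent pieces — the operator identity $U_{n,r}(f)=B_n(f_{n,r})$ and the sup-norm estimate on $f_{n,r}$ — and I would treat them in that order.

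\textbf{The identity.} The plan is to reduce everything to one scalar polynomial identity and then substitute the operator $B_n$. Since $1-(1-t)^r$ vanishes at $t=0$ it is divisible by $t$; writing $1-(1-t)^r=t\,Q(t)$ with $Q(t)=\sum_{j=1}^r(-1)^{j-1}\binom{r}{j}t^{j-1}$, I would verify, purely by summing two finite geometric series, that
$$
(1-t)P_{r-2}(t)=(1-t)\sum_{j=0}^{r-2}t^{j}+(1-t)\sum_{j=0}^{r-2}(1-t)^{j}=(1-t^{r-1})+\sum_{j=1}^{r-1}(1-t)^{j},
$$
and hence that $t^{r-1}+(1-t)P_{r-2}(t)=Q(t)$ (equivalently, multiply by $t$ and check one recovers $1-(1-t)^r$). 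Because $U_{n,r}=I-(I-B_n)^r$, $B_n^{r-1}$ and $(I-B_n)P_{r-2}(B_n)$ are all polynomials in the single operator $B_n$ and therefore commute, this scalar identity lifts verbatim to operators, giving $U_{n,r}=B_n\big(B_n^{r-1}+(I-B_n)P_{r-2}(B_n)\big)$; applying both sides to $f$ yields $U_{n,r}(f)=B_n(f_{n,r})$ with $f_{n,r}$ as in \eqref{eq:fnr}. This part is routine bookkeeping.

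\textbf{The norm bound.} Put $g:=(I-B_n)(f)=f-B_n(f)$, so the target reads $\|f_{n,r}\|_\infty\le\|f\|_\infty+r\|g\|_\infty$. From $f_{n,r}=B_n^{r-1}(f)+P_{r-2}(B_n)(g)$ and the expansion $P_{r-2}(B_n)=\sum_{j=0}^{r-2}B_n^{j}+\sum_{j=0}^{r-2}(I-B_n)^{j}$, the first group telescopes against $g=(I-B_n)(f)$:
$$
\sum_{j=0}^{r-2}B_n^{j}(g)=\sum_{j=0}^{r-2}\big(B_n^{j}(f)-B_n^{j+1}(f)\big)=f-B_n^{r-1}(f),
$$
which cancels the leading $B_n^{r-1}(f)$, leaving $f_{n,r}=f+\sum_{j=0}^{r-2}(I-B_n)^{j}(g)$. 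It then remains to show $\big\|\sum_{j=0}^{r-2}(I-B_n)^{j}(g)\big\|_\infty\le r\|g\|_\infty$. The structural inputs I would use are the two elementary facts that $\|B_n h\|_\infty\le\|h\|_\infty$ for every continuous $h$ (so $\|B_n^{j}h\|_\infty\le\|h\|_\infty$) and that $h-B_n(h)$ vanishes at every vertex of $[0,1]^d$; a convenient organizing device is the recursion $f_{n,r}=f+(I-B_n)(f_{n,r-1})$, which follows from the same algebra and invites an induction on $r$ (the base case $f_{n,1}=f$ being immediate, and $r=2,3$ being easy checks: $f_{n,2}=2f-B_nf$, $f_{n,3}=3f-3B_nf+B_n^2f$).

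\textbf{Main obstacle.} The delicate point is precisely this last estimate. The crude bound $\|(I-B_n)^{j}(g)\|_\infty\le 2^{j}\|g\|_\infty$ — and more generally the fact that $(I-B_n)$ has sup-norm exactly $2$, not $1$ — makes the termwise triangle inequality produce a constant exponential in $r$, so one must exploit genuine near-cancellation in the alternating sum $\sum_{j=0}^{r-2}(I-B_n)^{j}(g)=\sum_{i=0}^{r-2}(-1)^{i}\binom{r-1}{i+1}B_n^{i}(g)$. The mechanism I would pursue is that, since $B_n$ is a sup-norm contraction, the sequence $i\mapsto B_n^{i}(g)(x)$ has increments $B_n^{i}(g)-B_n^{i+1}(g)=B_n^{i}((I-B_n)g)$ that remain controlled, so a summation by parts against the partial sums of the alternating binomial coefficients — together with the corner-vanishing of $g$ and the positivity/averaging character of $B_n$ — should collapse the blow-up down to a constant linear in $r$. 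Verifying that one really lands on the clean constant $r$ (uniformly in $n$ and $d$), rather than something like $c\,r$ or $\binom{r}{2}$, is the heart of the argument and the step I expect to require the most care.
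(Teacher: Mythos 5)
Your treatment of the operator identity $U_{n,r}(f)=B_n(f_{n,r})$ is essentially the same as the paper's: the paper factors $1-(1-x)^r-x^r = x(1-x)P_{r-2}(x)$ and then lifts this scalar identity to the commuting operator $B_n$, which is exactly your bookkeeping. That part is fine, and your alternate simplification $f_{n,r}=f+\sum_{j=0}^{r-2}(I-B_n)^j(g)$ (equivalently $f_{n,r}=\sum_{j=0}^{r-1}(I-B_n)^j(f)$, i.e.\ $f_{n,r}=Q(B_n)(f)$ with $Q(x)=\sum_{j=0}^{r-1}(1-x)^j$) is a correct and clarifying reorganization.

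The norm bound is where your proposal has a genuine gap: you reduce matters to $\bigl\|\sum_{j=0}^{r-2}(I-B_n)^j(g)\bigr\|_\infty\le r\|g\|_\infty$, observe correctly that the termwise estimate $\|(I-B_n)^j g\|_\infty\le 2^j\|g\|_\infty$ gives only the exponential constant $2^{r-1}-1$, and then stop short — the summation-by-parts scheme you sketch (playing the alternating coefficients $\binom{r-1}{i+1}$ against the contraction $\|B_n^i\|\le 1$) is not carried through, and it is not clear it lands on the clean constant $r$. You should be aware that the paper's own treatment of this step is also extremely terse: it keeps the decomposition $f_{n,r}=B_n^{r-1}(f)+P_{r-2}(B_n)(g)$ and asserts $\|P_{r-2}(B_n)\|_{\mathrm{op}}\le\|P_{r-2}\|_{\infty,[0,1]}=r$ from just $\|B_n^j\|_{\mathrm{op}}\le 1$. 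That inference is \emph{not} automatic: $P_{r-2}(B_n)=\sum_{j=0}^{r-2}\bigl(B_n^j+(I-B_n)^j\bigr)$ contains the pieces $(I-B_n)^j$ with operator norm up to $2^j$, and for a general Markov operator $B$ one can have $\|P(B)\|_{\mathrm{op}}>\sup_{[0,1]}|P|$ (e.g.\ $B$ a transposition matrix and $P(x)=1-2x$). Since $P_{r-2}(B_n)\mathbf{1}=r\mathbf{1}$, the bound would follow instantly if $P_{r-2}(B_n)$ were a positive operator, but that too is Bernstein-specific (it is false for general stochastic matrices, and in fact already for $n=3$ the discretization $P_2(M)$ acquires a small negative entry, so the positivity route is delicate and appears to depend on restricting to $g$ in the range of $I-B_n$). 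In short: you have correctly isolated the one nontrivial step, but your proposal does not prove it, and you should not expect to reconstruct it from the generic ``$\|B_n^j\|\le 1$'' fact alone; some structural property of the Bernstein operator (or of functions of the form $g=(I-B_n)f$, which vanish at the vertices of the cube) is needed to beat the $2^{r-1}$ barrier.
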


\begin{proof}
	\cite[Theorem 5]{onebitBernstein} proved this for $d=1$ case, but the same argument extends to the Bernstein operator on $C([0,1]^d)$ without any modifications to the proof. 
\end{proof}

This theorem shows that not only $U_{n,r}(f)=B_n(f_{n,r})$, but it also implies that the coefficients $\{a_{\bfk}\}_{0\leq \bfk\leq n}$ in the following theorem are not much larger than $\|f\|_\infty$. This will be important in the next section, where the coefficients will be fed into a particular quantization algorithm called $\Sigma\Delta$, which requires a $\ell^\infty$ assumption on the coefficients. 

\begin{theorem}
	\label{thm:bapprox}
	Let $d,s,n\geq 1$, $\mu,\delta\in (0,1)$, and $f\in C^s([0,1]^d)$ with $\|f\|_\infty\leq \mu$. If $s\geq 3$, also assume that 
	$
	\displaystyle n\geq \frac{\sqrt{2^{s+1}} d}{8\delta} \|f\|_{C^1\Lip}.
	$ 
	There exist $\{a_{\bfk}\}_{0\leq \bfk\leq n}$ such that $\|a\|_\infty\leq \mu+\delta$ and 
	\begin{equation}
		\label{eq:bapprox}
		\Big\|f-\sum_{0\leq \bfk\leq n} a_{\bfk} p_{n,\bfk}\Big\|_\infty
		\lesssim_{s,d} \|f\|_{C^s} n^{-s/2}. 	
	\end{equation}
\end{theorem}

\begin{proof}
	We consider different cases depending on $s$. 
	
	If $s=1$, then we let $a_{\bfk}=f(\bfk/n)$, so $\|a\|_\infty\leq \|f\|_\infty\leq \mu$. The conclusion follows from \cref{prop:lipbern}. In this case, the implicit constant in \eqref{eq:bapprox} is $\sqrt d/2$. 
	
	If $s=2$, then we let $a_{\bfk}=f(\bfk/n)$, so $\|a\|_\infty\leq \|f\|_\infty\leq \mu$. The conclusion follows from \cref{prop:C2bern}. In this case, the implicit constant in \eqref{eq:bapprox} is $d^2/8$.
	
	Suppose $s\geq 3$. Consider the function $h:=U_{n,\lceil s/2\rceil}(f)$. From \cref{thm:approxsmooth}, we have 
	$$
	\|f-h\|_\infty \lesssim_{s,d} \|f\|_{C^s}  n^{-s/2}. 
	$$
	From \cref{thm:iteratetolinear} and \cref{prop:C2bern}, we have $h=B_n(f_{n,\lceil s/2\rceil})$, where $f_{n,\lceil s/2\rceil}$ is defined in \eqref{eq:fnr}, and 
	\begin{align*}
		\|f_{n,\lceil s/2\rceil}\|_\infty
		&\leq \|f\|_\infty + (2^{\lceil s/2\rceil}-1) \,\|f-B_n(f)\|_\infty 
		\leq \mu + \frac{\sqrt{2^{s+1}}d}{8n} \|f\|_{C^1\Lip}. 
	\end{align*}
	Pick any $n$ sufficiently large so that the right hand side is bounded above by $\mu+\delta$. We set $a_{\bfk}:=f_{n,\lceil s/2\rceil}(\bfk/n)$ so that
	$$
	h
	=B_n(f_{n,\lceil s/2\rceil})
	=\sum_{0\leq \bfk\leq n} a_{\bfk} p_{n,\bfk}, 
	$$
	which completes the proof. 
\end{proof}

\section{Quantization error}
\label{sec:Bquan}

The goal of this section is to show that, given a polynomial whose coefficients in the Bernstein basis is $\{a_{\bfk}\}_{0\leq \bfk\leq n}$ where $\|a\|_\infty \leq \mu$, for a prescribed $\mu\in (0,1)$, we can find a sequence $\{\sigma_{\bfk}\}_{0\leq \bfk\leq n}\subset \{\pm 1\}$ such that the quantization error induced on the Bernstein basis, 
\begin{equation*}
	\calE_{n,a}^{\text{quan}}(\bfx)
	:=\sum_{0\leq \bfk\leq n} (a_{\bfk}-\sigma_{\bfk}) \ p_{n,\bfk}(\bfx),
\end{equation*}
is small in a suitable sense. \cref{thm:rsigmadelta} will provide us with an explicit algorithm that will convert $\{a_{\bfk}\}_{0\leq \bfk\leq n}$ into an appropriate $\{\sigma_{\bfk}\}_{0\leq \bfk\leq n}$.

\subsection{Background on $\Sigma\Delta$ quantization}

\label{sec:quanprelim}

$\Sigma\Delta$ quantization (or modulation) refers to a large family of algorithms designed to convert any given sequence $y:=\{y_k\}_{k\in\N}$ of real numbers in a given set $\calY$ to another sequence $q:=\{q_k\}_{k\in\N}$ taking values in a prescribed discrete set $\A$, typically selected as an arithmetic progression. This is done in such a way that the quantization error is a ``high-pass'' sequence, in the sense that inner products of $q$ with slowly varying sequences are small. 

A canonical way of ensuring this is to ask that for any $y\in \calY$, there exist $q$ and a ``state" sequence $\{u_k\}_{k\in \Z}$ that satisfy the $r$-th order difference equation
\begin{equation} 
	\label{sigmadeltaeq}
	y - q = \Delta^r u, 
	\andspace 
	(\Delta u)_k:= u_k - u_{k-1}.
\end{equation}
If this is possible, we then say that $q$ is an $r$-th order noise-shaped quantization of $y$. When \eqref{sigmadeltaeq} is implemented recursively, it means that each $q_k$ is found by means of a quantization rule of the form
\begin{equation*} 
	q_k = F(u_{k-1},u_{k-2},\dots,y_k,y_{k-1},\dots),
\end{equation*}
and $u_k$ is updated via
\begin{equation*}
	u_k = \sum_{j=1}^r (-1)^{j-1} \binom{r}{j} u_{k-j} + y_k - q_k.
\end{equation*}

\begin{definition}
	A quantization rule is stable for $\calY$ if for each $y\in \calY$, there is a sequence $u$ satisfying \eqref{sigmadeltaeq} such that $\|u\|_\infty$ is bounded uniformly in $y$.
\end{definition} 


Stability is a desirable property for a quantization algorithm as it allows one to control the error $y-q$ uniformly in $y\in\calY$. Establishing the existence of a stable $r$-th order scheme for fixed and finite $\calA$, especially in the extreme one-bit case where $\calA=\{\pm a\}$ for some $a\not=0$, is difficult. The first breakthrough on this problem was made in the seminal paper of Daubechies and DeVore \cite{DD}. There it was shown for $\A = \{\pm 1\}$, any $r\geq 1$, and any $\mu \in (0,1)$, there exists a stable $r$-th order $\Sigma\Delta$ quantizer such that whenever $\|y\|_\infty \leq \mu$, it holds that $\|u\|_\infty \leq C_{r,\mu}$. The constant $C_{r,\mu}$ depends only on $r$ and $\mu$, and blows up as $r\to\infty$ or $\mu \to 1$, except for when $r=1$. For $r=1$, it suffices to take $C_{r,\mu}=1$ and $\mu\in (0,1]$. Another family of stable $\Sigma\Delta$ quantizers, but with more favorable $C_{r,\mu}$, was subsequently proposed in \cite{exp_decay}.

In this paper, $r$-th order $\Sigma\Delta$ quantization refers to either of the stable rules in \cite{DD} and \cite{exp_decay} for when $\A = \{\pm 1\}$. Except for \cref{sec:finalremarks}, we will use $\Sigma\Delta$ solely as a method of approximation, and will not need explicit descriptions of these rules or precise estimates for $C_{r,\mu}$. If new stable $r$-th order $\Sigma\Delta$ quantization methods are developed, those can be used instead of the ones mentioned here, without changes to the remaining parts of this paper. 


We will need to extend $\Sigma\Delta$ to sequences indexed by $\N^d$. Let $u$ be a function on $\N^d$. For each integer $1\leq \ell\leq d$, we denote the finite difference operator in the $\ell$-th coordinate by $\Delta_\ell$, which acts on $u$ by the formula 
$$
(\Delta_\ell u)_{\bfk}
:= u_{k_1,\dots,k_{\ell-1},k_\ell,k_{\ell+1},\dots,k_d} - u_{k_1,\dots,k_{\ell-1},k_\ell-1,k_{\ell+1},\dots,k_d}
$$
The following shows that there exists a stable directional $r$-th order $\Sigma\Delta$ quantizer.  

\begin{proposition}
	\label{prop:sigdeltaell}
	For any $\mu\in (0,1)$, and integers $r,d\geq 1$, there exists a $C_{r,\mu}>0$ that depends only on $r$ and $\mu$ such that the following hold. For any function $y$ defined on $\N^d$ such that $\|y\|_\infty \leq \mu$ and any integer $1\leq \ell\leq d$, there exist a function $q$ on $\N^d$ where $q_{\bfk}\in \{\pm 1\}$ for all $\bfk\in\N^d$ and a function $u$ on $\Z^d$ supported in $\N^d$, such that $\|u\|_{\infty}\leq C_{r,\mu}$ and
	$$
	y-q = \Delta_\ell^r u.
	$$
	For $r=1$, we only require that $\mu\in (0,1]$ and the statement holds for $C_{1,\mu}=1$.
\end{proposition}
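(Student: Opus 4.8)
The plan is to reduce the multivariate directional statement to the one-dimensional $\Sigma\Delta$ results of Daubechies--DeVore \cite{DD} and \cite{exp_decay} that are recalled in \cref{sec:preliminaries}, by viewing the finite difference operator $\Delta_\ell$ as acting on each one-dimensional ``fiber'' of $\N^d$ in the $\ell$-th coordinate direction, with all other coordinates held fixed. Concretely, fix $\ell$ and, for each choice of the remaining coordinates $k' := (k_1,\dots,k_{\ell-1},k_{\ell+1},\dots,k_d) \in \N^{d-1}$, consider the univariate sequence $t \mapsto y_{k',t}$ (abuse of notation: $t$ inserted in slot $\ell$). Since $\|y\|_\infty \le \mu$, each such fiber is a univariate sequence bounded by $\mu$, so the stable $r$-th order $\Sigma\Delta$ rule $F_{r,\mu}$ from \cite{DD} produces a one-bit sequence $t \mapsto q_{k',t} \in \{\pm 1\}$ and an auxiliary sequence $t \mapsto u_{k',t}$ with $\|u_{k',\cdot}\|_\infty \le C_{r,\mu}$ satisfying the one-dimensional difference equation $y_{k',\cdot} - q_{k',\cdot} = \Delta^r u_{k',\cdot}$.

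Next I would reassemble these fiberwise outputs into functions $q$ and $u$ on $\N^d$ (extending $u$ by zero to $\Z^d$) by gluing over all $k' \in \N^{d-1}$. Because the operator $\Delta_\ell^r$ only couples values within a single fiber, the glued functions automatically satisfy $y - q = \Delta_\ell^r u$ on all of $\N^d$, and $q_k \in \{\pm 1\}$ everywhere. The uniform bound $\|u\|_\infty \le C_{r,\mu}$ is immediate because it is the supremum over all fibers of the individual fiber bounds, and $C_{r,\mu}$ from \cite{DD} depends only on $r$ and $\mu$, not on which fiber we are on, nor on $d$ (the scheme is genuinely one-dimensional and applied independently in each fiber). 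The support condition ``$u$ supported in $\N^d$'' follows from the convention in \eqref{sigmadeltaupdate} that the one-dimensional recursion is initialized with $u_j = 0$ for $j < 0$, applied in each fiber.

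The only genuinely substantive point to be careful about — and what I expect to be the main (mild) obstacle — is the bookkeeping of the difference equation across fibers: one must check that the definition of $\Delta_\ell$ given in the statement matches exactly the univariate $\Delta$ used in \eqref{sigmadeltaeq}/\eqref{sigmadeltaupdate} when restricted to a fiber, so that the glued $u$ really does solve $y - q = \Delta_\ell^r u$ pointwise. This is a direct comparison of the two formulas and presents no real difficulty, but it is the one place where an index slip could occur. There is no interaction between different fibers to control, so no new stability analysis is needed; the whole proof is a ``tensor-trivial'' lift of the univariate theory. I would therefore keep the write-up short: set up the fiber notation, invoke \cite{DD} (or \cite{exp_decay}) fiberwise, glue, and observe that all three conclusions ($q \in \{\pm1\}^{\N^d}$, $\|u\|_\infty \le C_{r,\mu}$, $y - q = \Delta_\ell^r u$) hold fiberwise hence globally.
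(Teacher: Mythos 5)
Your proposal is correct and follows essentially the same fiberwise reduction that the paper uses: restrict to each one-dimensional fiber in the $\ell$-th direction, run the stable univariate $r$-th order $\Sigma\Delta$ rule of \cite{DD} or \cite{exp_decay} on each fiber, and glue, noting that $\Delta_\ell^r$ decouples across fibers so stability and the difference equation are inherited with the same constant $C_{r,\mu}$.
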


\begin{proof}
	Fix an integer $1\leq \ell \leq d$, and for each $\bfk\in\N^d$, we define 
	$$
	\bfk'=(k_1,\dots,k_{\ell-1},k_{\ell+1},\dots,k_d)\in \N^{d-1}. 
	$$
	Now we form a sequence from $y$ by setting all indices except for the $\ell$-th one to be $\bfk'$, 
	$$
	y_{\bfk'}:= \big\{y_{k_1,\dots, k_{\ell-1}, j,k_{\ell+1},\dots,k_d} \big\}_{j\in\N} . 
	$$
	Using either one of the two quantization rules pointed out in \cref{sec:quanprelim}, we define a $q$ on $\N^d$ such that $q_{\bfk'}\in \{\pm1\}$ and is defined as a solution to the $r$-th order difference equation
	$$
	y_{\bfk'}-q_{\bfk'}
	=\Delta^r (u_{\bfk'}).  
	$$
	This scheme is stable, in the sense that there exists $C_{r,\mu}>0$ such that for each $\bfk'$, we have  
	$
	\|u_{\bfk'}\|_\infty 
	\leq C_{r,\mu}.
	$
	Hence $\|u\|_\infty\leq C_{r,\mu}$. 
\end{proof}

It follows from this proposition that any stable $r$-th order $\Sigma\Delta$ can be used to generate a stable $r$-th order scheme in arbitrary dimensions. 

\begin{definition}
	\label{def:dirSigDelta}
	A $r$-th order $\Sigma\Delta$ applied to the $\ell$-th direction is a map that satisfies the conclusions of \cref{prop:sigdeltaell}. 
\end{definition}

\subsection{Directional $\Sigma\Delta$ quantization on Bernstein polynomials}

Going back to our original goal of quantizing the coefficients $a=\{a_{\bfk}\}_{0\leq \bfk\leq n}$ of a polynomial in the Bernstein basis, we process the coefficients with a $r$-th order $\Sigma\Delta$ applied to the $\ell$-th direction, as in \cref{prop:sigdeltaell}. Letting $\{\sigma_{\bfk}\}_{0\leq\bfk\leq n}$ be the $\pm 1$ sequence produced by this algorithm and $u$ denote the state, the quantization error induced on the Bernstein basis is
\begin{align}
	\label{eq:summedbyparts}
	\calE_{n,a}^{\text{quan}}(\bfx) 
	= \sum_{0\leq \bfk\leq n} (\Delta_\ell ^r u)_{\bfk} \, p_{n,\bfk}(\bfx)
	= \sum_{0\leq \bfk\leq n} u_{\bfk} \, \big((\Delta^*_\ell)^r p_{n,\cdot}(\bfx)\big)_{\bfk}. 
\end{align}
Here, we let $\Delta_\ell^*$ be the adjoint of $\Delta_\ell$, and 
$$
(\Delta_\ell^* \, p_{n,\cdot}(\bfx))_{\bfk}
:= \big(p_{n,k_\ell}(x_\ell)-p_{n,k_\ell+1}(x_\ell)\big) \, \prod_{j\not=\ell} p_{n,k_j}(x_j).
$$
Notice that with our convention that $p_{n,{\bfk}}=0$ if there is a $\ell$ such that $k_\ell>n$, all the boundary terms are correctly included in \eqref{eq:summedbyparts}. We are now ready to state the following result for first order $\Sigma\Delta$ on the Bernstein basis. 

\begin{proposition}
	\label{prop:1stsigdelta}
	For any integers $n,d\geq 1$, $\{a_{\bfk}\}_{0\leq \bfk\leq n}$ with $\|a\|_\infty \leq 1$, and $1\leq \ell\leq d$, if $\{\sigma_{\bfk}\}_{0\leq \bfk\leq n}$ is the output of a stable first order $\Sigma\Delta$ quantizer applied in the $\ell$-th direction with input $a$, then 
	$$
	\Big|\sum_{0\leq \bfk\leq n} (a_{\bfk}-\sigma_{\bfk}) \ p_{n,\bfk}(\bfx)\Big|
	\leq \min\big(2, n^{-1/2} x_\ell^{-1/2}  (1-x_\ell)^{-1/2} \big). 
	$$
	
\end{proposition}

\begin{proof}
	It follows from the inequality $\|u\|_\infty \leq C_{1,\mu}=1$ and identity \eqref{eq:summedbyparts} for $r=1$, we have 
	\begin{equation*}
		\Big|\sum_{0\leq \bfk\leq n} (a_{\bfk}-\sigma_{\bfk}) \ p_{n,\bfk}(\bfx)\Big| 
		\leq \sum_{0\leq \bfk\leq n} \big|(\Delta^*_\ell p_{n,\cdot}(\bfx))_{\bfk} \big|.
	\end{equation*}
	The consecutive differences of the $p_{n,k}$, for the one dimensional case, satisfies the following identity: for all $x\in [0,1]$ and $0\leq k\leq n$, 
	\begin{equation}
		\label{eq:Bdiff}
		p_{n,k}(x) - p_{n,k+1}(x) = \frac{(k+1)-(n+1)x}{(n+1)x(1-x)} \
		p_{n+1,k+1}(x).
	\end{equation}
	See \cite[Chapter 10]{DL} and \cite{lorentz1} for this identity. When interpreting the right hand side of this equation for $x=0$ or $x=1$, it should be observed that the polynomial $ ((k{+}1){-}(n{+}1)x)p_{n+1,k+1}(x)$ is divisible by $x(1-x)$ for each $k$. We proceed to extend \eqref{eq:Bdiff} to the multivariate case. For each integer $1\leq \ell\leq d$, we have 
	\begin{equation}
		\label{eq:adjointB}
		\big(\Delta_\ell^* \, p_{n,\cdot}(\bfx)\big)_{\bfk}
		= \( \frac{(k_\ell+1)-(n+1)x_\ell}{(n+1)x_\ell(1-x_\ell)} \
		p_{n+1,k_\ell+1}(x_\ell) \) \prod_{j\not=\ell} p_{n,k_j}(x_j).
	\end{equation}

	By identity \eqref{eq:adjointB}, the partition of unity property \eqref{eq:partitionunity}, Cauchy-Schwarz, and central moment bounds \eqref{eq:Tbound}, we have
	\begin{align*}
		\sum_{0\leq \bfk\leq n} \big|(\Delta^*_\ell p_{n,\cdot}(\bfx))_{\bfk} \big|
		&=\sum_{0\leq \bfk\leq n}  \Big| \frac{(k_\ell+1)-(n+1)x_\ell}{(n+1)x_\ell(1-x_\ell)} \Big| \
		p_{n+1,k_\ell+1}(x_\ell)  \, \prod_{j\not=\ell} p_{n,k_j}(x_j) \\
		&=\frac{1}{(n+1)x_\ell (1-x_\ell)}\sum_{k_\ell=0}^n \big|(k_\ell+1)-(n+1)x_\ell\big| \
		p_{n+1,k_\ell+1}(x_\ell) \\
		&\leq \frac{\sqrt{T_{n+1,2}(x_\ell)}}{(n+1)x_\ell (1-x_\ell)} \(\sum_{k_\ell=0}^n p_{n+1,k_\ell+1}(x_\ell)\)^{1/2} \\
		&= \frac{1}{\sqrt{(n+1)x_\ell (1-x_\ell)}}. 
	\end{align*}
	On the other hand, we have the trivial bound for the quantization error, 
	$$
	\Big|\sum_{0\leq \bfk\leq n} (a_{\bfk}-\sigma_{\bfk}) \ p_{n,\bfk}(\bfx)\Big| 
	\leq \|a-\sigma\|_\infty \sum_{0\leq \bfk\leq n} p_{n,\bfk}(\bfx)
	= \|\Delta_\ell \, u\|_\infty 
	\leq 2. 
	$$
\end{proof}

For larger values of $r$, due to an increasing complexity in the formulas for $(\Delta^*_\ell)^rp_{n,\cdot}(\bfx)$, we do not provide explicit upper bounds. Our strategy for deriving an upper bound for the quantization error builds upon a corresponding one-dimensional result in \cite[Theorem 6]{onebitBernstein}. 

\begin{theorem}
	\label{thm:rsigmadelta}
	For any integers $n,d,r\geq 1$, $\mu\in (0,1)$, $\{a_{\bfk}\}_{0\leq \bfk\leq n}$ with $\|a\|_\infty\leq\mu$, and $1\leq \ell\leq d$, if $\{\sigma_{\bfk}\}_{0\leq \bfk\leq n}$ is the output of a stable $r$-th order $\Sigma\Delta$ quantization applied in the $\ell$-th direction with input $\{a_{\bfk}\}_{0\leq \bfk\leq n}$, then
	$$
	\Big|\sum_{0\leq \bfk\leq n} (a_{\bfk}-\sigma_{\bfk}) \ p_{n,\bfk}(\bfx)\Big|
	\lesssim_{r,\mu} \min \big( 1, n^{-r/2} x_\ell^{-r}(1-x_\ell)^{-r}\big).
	$$
\end{theorem}

\begin{proof}
	Since $\Delta_\ell^*$ is applied to the $\ell$-th direction and the multivariate Bernstein polynomials are tensor products, we have 
	$$
	\big((\Delta^*_\ell)^r \, p_{n,\cdot}(\bfx)\big)_{\bfk}
	=\big((\Delta^*)^r p_{n,\cdot}(x_\ell) \big)_{k_\ell} \(\prod_{j\not=\ell} p_{n,k_j}(x_j) \) .
	$$
	Employing \eqref{eq:summedbyparts}, non-negativity of the Bernstein polynomials, and partition of unity \eqref{eq:partitionunity},
	\begin{align*}
	\Big|\sum_{0\leq \bfk\leq n} (a_{\bfk}-\sigma_{\bfk}) \ p_{n,\bfk}(\bfx)\Big|
	&\leq \|u\|_\infty \sum_{0\leq \bfk\leq n} \big|((\Delta^*_\ell)^r \, p_{n,\cdot}(\bfx))_{\bfk} \big|
	\leq C_{\mu,r} \sum_{k_\ell=0}^n \Big|\big((\Delta^*)^r p_{n,\cdot}(x_\ell)\big)_{k_\ell}\Big|. 
	\end{align*}
	This quantity was upper bounded in \cite[Theorem 6]{onebitBernstein}, and by the referenced bound, we have
	$$
	\sum_{k_\ell=0}^n \Big|\big((\Delta^*)^r p_{n,\cdot}(x_\ell)\big)_{k_\ell}\Big| \lesssim_{r} n^{-r/2} x_\ell^{-r}(1-x_\ell)^{-r}. 
	$$
	On the other hand, we have the trivial upper bound for the quantization error, 
	$$
	\Big|\sum_{0\leq \bfk\leq n} (a_{\bfk}-\sigma_{\bfk}) \ p_{n,\bfk}(\bfx)\Big|
	\leq \|a-\sigma\|_\infty \sum_{0\leq \bfk\leq n} p_{n,k}(\bfx)
	= \|\Delta^r_\ell \, u\|_\infty 
	\leq 2^r\|u\|_\infty
	\leq 2^r C_{r,\mu}. 
	$$
\end{proof}

\subsection{Comments on the quantization results}

In contrast to \cref{thm:rsigmadelta} for the Bernstein basis, we explain why $\{\pm1\}$ linear combinations of the power basis $\mathcal{P}_n:=\{x\mapsto x^k\}_{k=0}^n$, by which we mean any function of the form 
$
p(x)=\sum_{k=0}^n \sigma_k x^k
$
where $\sigma_k\in \{\pm 1\}$ for each $k$, cannot accurately approximate continuous functions on $[0,1]$. We provide two different set of explanations. 

All possible real numbers that can be realized as an output of a $\{\pm 1\}$ linear combination in the power basis up to degree $n$ with input $x$ is the set
$$
P_n(x)=\Big\{\sum_{k=0}^n \sigma_k x^k \colon \sigma_k\in \{\pm 1\} \text{ for all } k \Big\}.
$$
A plot of $P_{10}$ is shown in \cref{fig:Px}. It is straightforward to see that 
$$
P_n(x) \subset [1-r_n(x),\, 1+r_n(x)]\cup [-1-r_n(x),\, -1+r_n(x)], \quad\text{where}\quad r_n(x)=x\frac{1-x^n}{1-x}. 
$$
Hence, $P_n(x)$ is a strict subset of $[-2,2]$ whenever $x\in (0,1/2)$ and it becomes an increasingly smaller subset as $x\to 0$. For any $c\in (0, 1/2)$, all sufficiently small $\epsilon>0$, and all $n\geq 1$, we have 
$$
\sup_{f\in C([0,1])} \, \inf_{\sigma_0,\dots,\sigma_n\in \{\pm 1\}} \, \sup_{x\in [0,c]} \, \Big| f(x)-\sum_{k=0}^n \sigma_k x^k \Big| >\epsilon. 
$$
The key component of this statement is that $c$ is fixed independent of $n$. In contrast to \cref{thm:rsigmadelta}, the regions for which the approximation error is large shrinks to zero as $n\to\infty$. For all sufficiently large $n$, this region consists of two $d$-dimensional rectangles  whose measures are $O(1/\sqrt n)$.

\begin{figure}[h]
	\centering
	\includegraphics[width=0.5\textwidth]{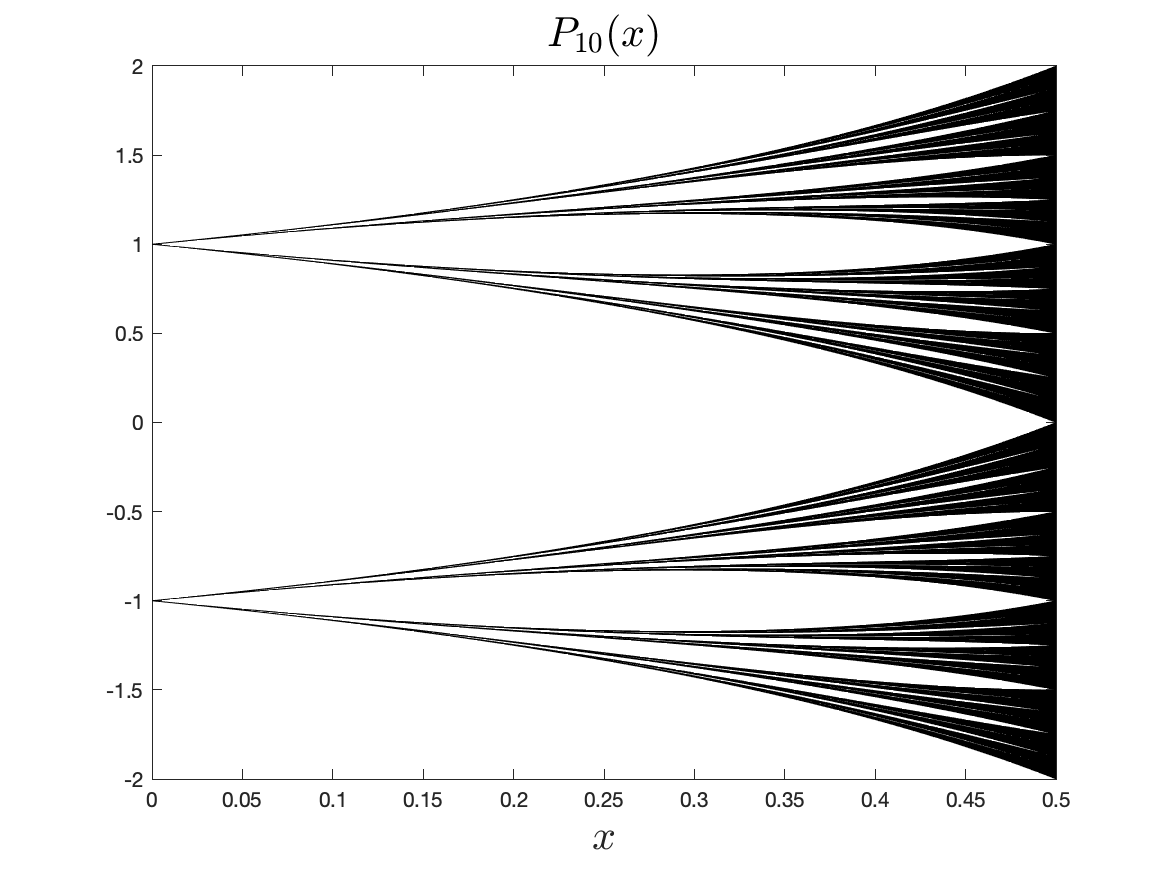}
	\label{fig:Px}
	\caption{Plot of $P_{10}$ on $[0,\frac{1}{2}]$.}
\end{figure} 

Related to this discussion is the observation that $P(x):=\bigcup_{n=0}^\infty P_n(x)$ for $x\in (0,1/2)$ has a fractal structure and is totally disconnected. Indeed, $P(x)$ is the set of all real numbers that can be written in base $x$ with $-1,0,+1$ digits and of length $n$.  for any $a,b\in P(x)$ with $a<b$, we can express them as a sequence in base $x$, and let $m$ be the first digit for which their base $x$ expansions disagree. Call their first $m-1$ digits $\sigma_0,\dots,\sigma_{m-1}$. Any $c\in P(x)$ with $a<c<b$ can be written in base $x$ as $\sigma_0,\dots,\sigma_{m-1},\epsilon_m,\dots$, where $\epsilon_k\in \{-1,0,1\}$. We recall the basic observation that $\sum_{k=m}^\infty \epsilon_k x^k \in (-x^m/(1-x),x^m/(1-x))\subset (-x^{m-1},x^{m-1})$ since $x\in (0,1/2)$. Hence we can always find a $c$ such that $a<c<b$ and $c\not\in P(x)$. 

Of course, the previous discussion only pertains to the power basis' inability to approximate continuous functions near the origin. It was shown in \cite{gunturk2005approximation} that $\{\pm 1\}$ linear combinations in the power basis are able to approximate certain power series in the complex plane near the point $z=1$, so approximation is possible in other regions.  

From a quantization perspective, \cref{thm:rsigmadelta} is perhaps surprising because previous applications of noise-shaping quantization (including $\Sigma\Delta$) \cite{DD,exp_decay,gunturk2005approximation,noiseshaping,gunturk2022quantization}, utilize some notion of redundancy in the system. However, the multivariate Bernstein polynomials of order $n$ forms a basis for the space of multivariate polynomials whose degree in each variable is at most $n$, so it does not exhibit redundancy in the traditional sense.

Instead, the Bernstein system exhibits a different type of redundancy. To make this notion more precise, we define the synthesis operator $S_n\colon \R^{n+1}\to \calP_n$ by 
$
S_nu := \sum_{k=0}^n u_k p_{n,k}. 
$
Using the usual inner products on both $\R^{n+1}$ and $\calP_n$, a direct calculation shows that the frame operator $S_n^*S_n\colon \R^{n+1}\to \R^{n+1}$ is represented as the matrix $B$ such that $B_{j,k}:=\int_0^1 p_{n,j}(x) p_{n,k}(x)\, dx$. The $\epsilon$ numerical rank of a $(n+1)\times (n+1)$ matrix $A$ by 
$$
d_\epsilon(A):=\max\{0\leq m\leq n \colon \sigma_m(A)\geq \epsilon_0(A)\}.
$$
Then \cite[Appendix B]{onebitBernstein} showed that for fixed $\epsilon>0$, we have
$$
d_\epsilon(B)= \sqrt{2\ln(1/\epsilon)}\sqrt{n}(1+o(1))
\quad\text{as}\quad n\to\infty. 
$$
This partially explains why noise-shaping quantization in the Bernstein basis is possible and why Theorems \ref{thm:approxsmooth} and \ref{thm:rsigmadelta} exhibit decay rates of $(\sqrt n)^{-s}$ and $(\sqrt n)^{-r}$ respectively. 

Continuing this line of discussion and to connect it with the prior comparison between the Bernstein and power basis, notice that the synthesis operator for the power basis is $T_n\colon \R^{n+1}\to \mathcal{P}_n$ where $T_nu=\sum_{k=0}^n u_kx^k$. Then the frame operator $T_n^*T_n$ can be identified with the matrix $H$ which has entries $H_{j,k}:=\int_0^1 x^jx^k\, dx=1/(j+k+1)$. This is precisely the Hilbert matrix, and it follows from \cite[Corollary 4.2]{beckermann2017singular} that
$$
d_{\epsilon}(H)
\leq \Big\lceil \frac{\log(8n-4)\log(4/\epsilon)}{\pi^2} \Big\rceil.
$$
Hence, the numerical rank for the power basis is roughly $\log(n)$ compared to that of $\sqrt n$ for the Bernstein basis.

\subsection{Proof of Theorem \ref{thm:mainbernstein}}
\label{sec:proofthmmain1}

\begin{proof}
	According to \cref{thm:bapprox}, using $(1-\mu)/2$ as $\delta$ in the referenced theorem and the assumption that
	$\displaystyle
	n\geq \frac{\sqrt{2^{s+1}}d}{4(1-\mu)} \|f\|_{C^1\Lip},
	$ if $s\geq 3$, there exist $\{a_{\bfk}\}_{0\leq \bfk\leq n}$ such that $\|a\|_\infty \leq (\mu+1)/2<1$ and  
	$$
	\Big\|f-\sum_{0\leq \bfk \leq n} a_{\bfk} p_{n,\bfk}\Big\|_\infty 
	\lesssim_{s,d} \|f\|_{C^s} n^{-s/2}. 
	$$
	For any $1\leq \ell\leq d$, applying $s$-th order $\Sigma\Delta$ in the $\ell$-th direction on $\{a_{\bfk}\}_{0\leq \bfk\leq n}$, noting that $\|a\|_\infty \leq (\mu+1)/2<1$, we obtain $\{\sigma_{\bfk}\}_{0\leq \bfk\leq n}\subset \{\pm 1\}$. According to \cref{thm:rsigmadelta}, the induced quantization error satisfies for all $\bfx\in [0,1]^d$, 
	$$
	\Big|\sum_{0\leq \bfk \leq n} a_{\bfk} p_{n,\bfk}(\bfx) -\sum_{0\leq \bfk \leq n} \sigma_{\bfk} p_{n,\bfk}(\bfx) \Big|
	\lesssim_{s,\mu} \min\big(1,n^{-s/2}x_\ell^{-s} (1-x_\ell)^{-s} \big). 
	$$
	Combining these inequalities completes the proof.
\end{proof}

\section{Implementation error}

\label{sec:Bimplement}

In this section, we concentrate on the implementation error by one-bit quantized neural networks. That is, suppose for some $\{\sigma_{\bfk}\}_{0\leq \bfk\leq n}$ such that $\sigma_{\bfk}\in \calA$, we would like find functions $\{b_{n,\bfk}\}_{0\leq \bfk\leq n}$ such that $\sum_{0\leq \bfk\leq n} \sigma_{\bfk} b_{n,\bfk}$ is implementable by a strict $\calA$-quantized neural network with activation $\beta$ for which the implementation error 
$$
\calE^{\text{imp}}_{n,\sigma,\calA,\beta}(\bfx):= \sum_{0\leq \bfk\leq n} \sigma_{\bfk} p_{n,\bfk}(\bfx)- \sum_{0\leq \bfk\leq n} \sigma_{\bfk} b_{n,\bfk}(\bfx),
$$
is suitably controlled. We address two pairs of activation functions and one-bit alphabets. 
\begin{enumerate}[(a)]
	\item 
	The first pair is the quadratic activation $\rho\colon \R\to\R$ defined as $\rho(t)=\frac{1}{2} t^2$, together with the one-bit alphabet $\calA_1 := \{\pm 1\}$. 
	
	
	\item 
	The second pair is the ReLU activation $\sigma\colon\R\to\R$ defined as $\sigma(t)=\max(t,0)$, together with the one-bit alphabet $\calA_{1/2}:=\{\pm \frac 1 2\}$. 
	
\end{enumerate}

For both activation functions, the main mechanism behind our construction hinges on a Pascal triangle interpretation of the Bernstein polynomials, which we proceed to explain in the univariate case first. Each Bernstein polynomial of degree $m+1$ can be made by multiplying at most two pairs of Bernstein polynomials, with degrees $m$ and $1$, and adding them together. The key formulas are, for each $m\geq 1$,
\begin{equation}
	\label{eq:bernrecurrence}
	p_{m+1,k}(x)=
	\begin{cases}
		\ (1-x)p_{m,0}(x) &\quad\quad \text{if } k=0,\\
		\ x p_{m,k-1}(x)+(1-x)p_{m,k}(x) &\quad\quad \text{if } 0<k<m, \\
		\ x p_{m,m}(x) &\quad\quad  \text{if } k=m+1.
	\end{cases}
\end{equation} 
These recurrence relations are summarized in a Pascal-like network as shown in Figure \ref{fig:Bpascal}. Due to the combinatorial factors that appear in the Bernstein polynomials, the Pascal triangle interpretation is crucial when attempting to implement approximations of the Bernstein polynomials in a stable way when using only parameters from a small set.  

Definitions of strict (unquantized and quantized) neural networks were provided in Definitions \ref{def:strictNN} and \ref{def:quanNN}. As discussed earlier, strict neural networks do not use skip connections and all intermediate layers use the same activation. Here, we introduce additional definitions and some basic concepts that will help facilitate the subsequent proofs. 

One cumbersome issue is that if $F\colon \R^d\to\R^m$ and $G\colon \R^m\to \R^n$ are both implementable by strict neural networks, then their composition $G\circ F$ is not necessarily implementable because by using the output of $F$ as the input of $G$ results in network with a layer without activation. For this reason, we introduce the following definition. 

\begin{definition}
	\label{def:activatedNN}
	An activated neural network with activation $\beta$ is any function $F\colon\R^d\to\R^m$ of the form, 
	\begin{equation*}
		F(\bfx):= \beta(W_L \beta(W_{L-1} \cdots \beta(W_1(\bfx)))), \quad W_\ell(\bfu):=A_\ell \bfu + \bfb_\ell,	\quad \text{for} \quad  \ell=1,\dots,L. 	
	\end{equation*}	
\end{definition}

Hence, an activated neural network also does not have skip connections, but each layer, including the output layer, uses the same activation function $\beta$. We say the network is unquantized if the weights and biases are allowed to use any real number, while it is $\calA$-quantized if they are selected from $\calA$ only. The number of layers, nodes, and parameters of an activated neural network are defined in the same way as for strict neural networks. With this terminology in place, we make the following basic observations. 
\begin{enumerate}[(a)]
	\item 
	If $F\colon \R^d\to\R^m$ is implementable by an activated network of size $(L_1,N_1,P_1)$ and $G\colon \R^m\to\R^n$ is implementable by a strict (resp., activated) neural network of size $(L_2,N_2,P_2)$, then $G\circ F$ is implementable by a strict (resp., activated) network of size $(L_1+L_2,N_1+N_2,P_1+P_2)$. Naturally, we say that the second network is appended to the first or that the networks are composed. 
	\item
	If $F\colon \R^d\to\R^m$ and $G\colon \R^d\to\R^n$ are both implementable by  strict (resp., activated) networks of size $(L,N_1,P_1)$ and $(L,N_2,P_2)$, then the function $F\oplus G\colon \R^d\to \R^{m+n}$ is implementable by a strict (resp., activated) network of size $(L,N_1+N_2,P_1+P_2)$. We say that these networks are placed in parallel. 
\end{enumerate}

\subsection{Overview for $\{\pm 1\}$-quantized quadratic networks}
\label{sec:quadratic}

In this subsection we provide a high level discussion of our constructions for $\calA_1$-quantized neural networks with quadratic activation $\rho$, and postpone the details for \cref{sec:quadappendix}. In the expository portions, we just use the term ``network" since the alphabet and activation do not change in this subsection. 

To turn the schematic diagrams shown in Figures \ref{fig:Bnetwork} and \ref{fig:Bpascal} into a proper network, it suffices to convert multiplications by $1-x$ and $x$ into neural network operations. Our methodology is inspired by the basic identity, 
\begin{equation}
	\label{eq:multbinaryquad}
	ab = \rho(a+b) - \rho(a) - \rho(b).
\end{equation}
It is important to remark that this identity cannot be (directly) used for intermediate layers, since strict neural networks require every node to be activated and hence such a node can only produce $\rho(ab)$, not $ab$ itself. Hence, we cannot realize the product function as an activated neural network.

This technical issue can be circumvented under many situations in light of the following observation. If an intermediate layer's nodes output $\rho(a+b)$, $\rho(a)$, $\rho(b)$, and $c$, then a subsequent intermediate layer can implement $\rho(ab+c)$ in view the identity
$$
\rho(ab+c)
=\rho(\rho(a+b) - \rho(a) - \rho(b)+c). 
$$
Hence, although $ab$ cannot be directly implemented in an intermediate layer, it can be used as an input into a subsequent intermediate or final layer. This type of reasoning can be adapted to more general situations beyond implementing $\rho(ab+c)$. 

To see how this observation is relevant to Bernstein, we define the functions 
\begin{align*}
	U(x)&:= \rho(1-x), \quad 
	V(x):=\rho(x), \quad 
	X_{0,0}(x):= \rho(1), \\
	Y_{0,0}(x)&:= \rho(1-x+1), \quad  Z_{0,0}(x):=\rho(x+1). 
\end{align*}
For each integer $m\geq 1$ and $k=0,\dots,m$, we define
\begin{equation*}
	\label{eq:XYZ}
	X_{m,k}(x) = \rho(p_{m,k}(x)), \quad 
	Y_{m,k}(x) = \rho(1-x+p_{m,k}(x)), \quad 
	Z_{m,k}(x) = \rho(x+p_{m,k}(x)).
\end{equation*} 
It follows from the recurrence relation \eqref{eq:bernrecurrence} and identity \eqref{eq:multbinaryquad} that 
\begin{equation}
	\label{eq:bernrecurrence2}
	p_{m,k} = 
	\begin{cases}
		\ Y_{m-1,0} - U - X_{m-1,0} &\quad\text{if } k=0,\\
		\ Z_{m-1,k-1} - V - X_{m-1,k-1} + Y_{m-1,k} - U - X_{m-1,k} &\quad\text{if }  0 < k < m, \\
		\ Z_{m-1,m-1} - V - X_{m-1,m-1} &\quad\text{if } k=m.
	\end{cases}
\end{equation}
These formulas show that each $U,V,X,Y,Z$ can be realized as an intermediate layer's output by composing networks, e.g., $X_{m+1,k}$ can be produced given $X_{m,j},Y_{m,j},Z_{m,j},U,V$ for each $0\leq j\leq m$. While each Bernstein polynomial does not actually correspond to the output of a network used in our final construction, it will be a node's preactivation. This implies that any linear combination 
$
\sum_{k=0}^n \sigma_k p_{n,k} 
$
with $\sigma_k\in \{\pm 1\}$ for each $k$, is implementable by a network. 

For the multivariate case, in view of \eqref{eq:bernrecurrence2}, each $p_{n,\bfk}$ is a sum of tensor products consisting of various combinations from $U,V,X,Y,Z$. To implement tensor products, we need multiplication. Similar to the bivariate case \eqref{eq:multbinaryquad}, a multivariate product cannot be realized as an activated network, and it will instead be made implicitly in a subsequently layer. For example, to use the product $abc$ as an argument in a future intermediate layer, if one layer outputs  
$$
\rho(a), \, \rho(a+b), \, \rho(b), \, \rho(c), \, \rho(c+1), \, \rho(1),
$$ then it possible for the next layer to output 
$
\rho(ab+c), \, \rho(ab), \, \rho(c),
$
due to the identities 
\begin{align*}
	\rho(ab+c)&=\rho\big( \rho(a+b)-\rho(a)-\rho(b) + \rho(c+1)-\rho(c)-\rho(1)\big), \\
	\rho(ab)&= \rho(a+b)-\rho(a)-\rho(b), \\
	\rho(c) &= \rho(c+1)-\rho(c)-\rho(1). 
\end{align*}
The significance here is that a $\{\pm 1\}$ linear combination of $\rho(ab+c), \, \rho(ab), \, \rho(c),$ is $abc$ which can be implicitly formed in the layer afterwards. By continuing this process, it is possible to create the terms necessary to produce multivariate multiplication. After these considerations, it is possible to show that for any $\{\sigma_{\bfk}\}_{0\leq \bfk\leq n}$ with $\sigma_{\bfk}\in \{\pm 1\}$, the sum $\sum_{0\leq \bfk\leq n} \sigma_{\bfk} p_{n,\bfk}$ is implementable.

\subsection{Overview for $\{\pm \frac{1}{2}\}$-quantized ReLU networks}
\label{sec:relu}

In this subsection we provide a high level discussion of our constructions for $\calA_{1/2}$-quantized neural networks with ReLU activation $\sigma$, and postpone the details for \cref{sec:reluappendix}. In the expository portions, we just use the term ``network" since the alphabet and activation do not change in this subsection. 

Since ReLU is the conventional activation used in practice, the approximation properties of unquantized ReLU networks have been thoroughly studied. We use some standard ideas popularized by \cite{yarotsky2017error}. The starting point is the tent function $\phi\colon [0,1]\to [0,1]$, where $\phi(x):=2x$ for $x\in [0,\frac 1 2]$ and $\phi(x):=2-2x$ for $x\in [\frac 1 2,1]$, and the identity,
\begin{equation}
	\label{eq:blancmange}
	x(1-x)
	=\sum_{k=1}^\infty \frac{\phi^{\circ k}(x)}{4^k},
\end{equation}
where the series converges uniformly in $x$ and $\phi^{\circ k}$ refers to the composition of $\phi$ with itself $k$-times, with the convention that $\phi^{\circ 1}:=\phi$. Since $\phi$ can be implemented, this identity provides a natural method for implementation of the squaring function and hence multiplication via the basic identity 
\begin{equation}
	\label{eq:reluprod}
	ab = \frac{1}{2}(a+b)^2-\frac{1}{2}a^2-\frac{1}{2}b^2. 
\end{equation}

Let us make some comments about how this strategy can be modified in order to account for the lack of skip connections and one-bit quantization. For technical reasons, we observed that approximation of the squaring function is insufficient as it leads to requiring additional bits. We instead approximate $x^2$ from both above and below by nonnegative $S^+_\epsilon$ and $S^-_\epsilon$ respectively, with error $\epsilon$ uniformly in $x$. The implementations of $S_\epsilon^+$ and $S_\epsilon^-$ are also further complicated by the constraint of not being able to use skip connections. Although we use ``duplication" networks to pass values down a specified number of layers, exaggerated use results in bloated networks and care is taken to use them sparingly. By using $S_\epsilon^{\pm}$ and mimicking \eqref{eq:reluprod}, we define the approximate multiplication function
$$
P_\epsilon(x,y) 
=\sigma\( 2 S_{\epsilon/6}^-\( \frac{x+y}{2}\)- 2S_{\epsilon/6}^+\( \frac{x}{2}\)- 2S_{\epsilon/6}^+\( \frac{y}{2}\)\). 
$$
Not only does $P_\epsilon$ approximate the product function uniformly with error $\epsilon$, it also satisfies the inequalities $0\leq P_\epsilon(x,y)\leq xy$. This is important since we use strict neural networks and the ReLU function is the identity on nonnegative real numbers. 

With approximate bivariate multiplication at hand, approximations of univariate Bernstein polynomials can be constructed mimicking recurrence \eqref{eq:bernrecurrence}. That is, we define $\{b_{m,k}\}_{0\leq k\leq n}$ recursively starting with $b_{1,0}(x)=1-x$ and $b_{1,1}(x)=x$ and for $m\geq 1$, 
\begin{equation*}
	b_{m+1,k}(x)=
	\begin{cases}
		\ P_{\epsilon}(1-x, b_{m,0}(x)) &\quad\quad \text{if } k=0,\\
		\ P_{\epsilon} (x, b_{m,k-1}(x)) + P_{\epsilon}(1-x, b_{m,k}(x)) &\quad\quad \text{if } 0<k<m, \\
		\ P_{\epsilon}(x, b_{m,m}(x)) &\quad\quad  \text{if } k=m+1.
	\end{cases}
\end{equation*} 
These can be approximately multiplied together to yield a collection $\{b_{n,\bfk}\}_{0\leq \bfk\leq n}$ that approximate the multivariate Bernstein polynomials $\{p_{n,\bfk}\}_{0\leq \bfk\leq n}$ uniformly on $[0,1]^d$. Finally, we approximate any $f=\sum_{0\leq \bfk\leq n} \sigma_{\bfk}p_{n,\bfk}$ by $\sum_{0\leq \bfk\leq n} \sigma_{n,\bfk} b_{n,\bfk}$, which is implementable under the assumption that $\sigma_{\bfk}\in \{\pm \frac 1 2\}$.

\subsection{Proof of Theorem \ref{thm:mainimplementation}}

\label{sec:proofthmmain2}

\begin{proof}
		\underline{Quadratic case and $\{\pm 1\}$ alphabet}. From identity \eqref{eq:bernrecurrence2}, we see that each univariate Bernstein polynomial $p_{n,k}$ is a $\pm1$ linear combination of at most six functions in the set 
		$$
		\{U, V, X_{n-1,k}, Y_{n-1,k}, Z_{n-1,k}\colon k=0,\dots,n-1\}.
		$$
		To simplify the notation, denote this set of $3n+2$ functions by $\{\psi_j\}_{j=1}^{3n+2}$. Then for each $0\leq k\leq n$, there exists $\{\epsilon_{k,j}\}_{j=1}^{3n+2}$ such that $\epsilon_{k,j}\in\{0,1\}$ with at most six that are nonzero, and
		$
		p_{n,k}=\sum_{j=1}^{3n+2} \epsilon_{k,j} \psi_j. 
		$
		
		To handle the multivariate case, we set $\psi_{\bfj}(\bfx):=\psi_{j_1}(x_1)\cdots\psi_{j_d}(x_d)$ and $\epsilon_{\bfk,\bfj}:=\epsilon_{k_1,j_1}\cdots \epsilon_{k_d,j_d}$. Then we have 
		\begin{equation}
			\label{eq:psihelp}
			\sum_{0\leq \bfk \leq n} \sigma_{\bfk} p_{n,\bfk}
			=\sum_{0\leq \bfk \leq n} \sum_{1\leq \bfj\leq 3n+2} \sigma_{\bfk} \epsilon_{\bfk,\bfj}\psi_{\bfj}
			=\sum_{1\leq \bfj\leq 3n+2}  \Big( \sum_{0\leq \bfk \leq n} \sigma_{\bfk} \epsilon_{\bfk,\bfj} \Big) \psi_{\bfj}.	
		\end{equation}
		It is important to remark that this is not necessarily a $\pm 1$ combination of $\psi_{\bfj}$'s because it is possible for a $\psi_{\bfj}$ to repeat. In fact, there will be many repetitions as $U(x_1)U(x_2)\cdots U(x_d)$ appears $(n-1)^d$ times. However, it is possible to express the right side of \eqref{eq:psihelp} as a $\pm 1$ sum of $\psi_{\bfj}$'s if we allow for repetitions, and that this $\pm 1$ sum only requires at most $6^d(n+1)^d$ terms, in view of the observation that 
		$
		|\{\bfj \colon \epsilon_{\bfk,\bfj}\not=0\}|\leq 6^d 
		$
		uniformly in $\bfk$. Hence, there is a finite sequence $I$ with $|I|\leq 6^d(n+1)^d$ and $\{\tilde\sigma_{\bfj}\}_{\bfj\in I}$ with $\tilde\sigma_{\bfj}\in \{\pm 1\}$ such that 
		\begin{equation*}
			\sum_{0\leq \bfk \leq n} \sigma_{\bfk} p_{n,\bfk}(\bfx)
			=\sum_{\bfj \in I} \tilde \sigma_{\bfj} \psi_{\bfj}(\bfx).
		\end{equation*}
		
		We are now ready to implement this approximation strategy as a neural network. If $d=1$, notice that 
		\begin{align*}
			\sum_{k=0}^n \sigma_k p_{n,k} 
			&= \sigma_0 Y_{n-1,0} - \sigma_0 U - \sigma_0 X_{n-1,0} \\ 
			&\quad + \,\sum_{k=1}^{n-1} (\sigma_k Z_{n-1,k-1} - \sigma_k V - \sigma_k X_{n-1,k-1} + \sigma_k Y_{n-1,k} - \sigma_k U - \sigma_k X_{n-1,k}) \\
			&\quad  + \,\sigma_{n-1} Z_{n-1,n-1} - \sigma_{n-1} U - \sigma_{n-1} X_{n-1,n-1}.	
			\label{output-layer}
		\end{align*}
		Notice that each $X_{n-1,k}$ appears twice. We use two networks described in \cref{prop:XYZ} placed parallel with each other, so that they outputs all terms on the right hand side, and a linear layer produces the final summation. This network has size $O(n)$.  
				
		For $d\geq 2$, we use \cref{prop:XYZ} to generate $d$ networks in parallel, so that layer $n+1$ produces, for each $1\leq \ell\leq d$, the outputs
		\begin{equation}
			\label{eq:XYZl}
			U(x_\ell), \, V(x_\ell), \, \{X_{n-1,k}(x_\ell)\}_{k=0}^{n-1}, \, \{Y_{n-1,k}(x_\ell)\}_{k=0}^{n-1}, \, \{Z_{n-1,k}(x_\ell)\}_{k=0}^{n-1}.
		\end{equation}
		Doing so requires a network of size $O(n)$. Let us momentarily fix a $\bfj\in I$. Since $\psi_{\bfj}(\bfx)=\psi_{j_1}(x_1)\cdots \psi_{j_d}(x_d)$ and each $\psi_{j_\ell}(x_\ell)$ is a $\pm 1$ summation of terms in \eqref{eq:XYZl} due to identity \eqref{eq:bernrecurrence2}, we use the network constructed in \cref{prop:multseveralquadratic} that outputs the quantities
		$$
		\rho\( \prod_{\ell\leq d_*} \psi_{\bfj}(x_\ell) + \prod_{\ell >d_*} \psi_{\bfj}(x_\ell)\) , \, \rho\( \prod_{\ell \leq d_*} \psi_{\bfj}(x_\ell)\), \, \rho\( \prod_{\ell>d_*} \psi_{\bfj}(x_\ell) \).
		$$
		We do this for each $\bfj\in I$ and place these networks in parallel, hence further requiring a network with $O(1)$ layers and $O(n^d)$ nodes and parameters. The final linear layer produces 
		$
		\sum_{\bfj \in I} \tilde \sigma_{\bfj} \psi_{\bfj}(\bfx)
		$
		since $\tilde \sigma_{\bfj}\in \{\pm 1\}$ and $\psi_{\bfj}(\bfx)$ is a $\pm 1$ linear combination of the above terms.

		\medskip 
		
		\underline{ReLU case and $\{\pm \frac{1}{2}\}$ alphabet}. Let $\epsilon>0$. As shown in \cref{prop:bernrelu}, there is an activated $\{\pm \frac{1}{2}\}$-quantized ReLU neural network that implements a set of functions $\{b_{n,\bfk}\}_{0\leq \bfk\leq n}$ such that $\|p_{n,\bfk}-b_{n,\bfk}\|_\infty \leq \epsilon$ for each $0\leq \bfk\leq n$. This network has $O(n\log (n/\epsilon))$ layers and $O( n^2\log (n/\epsilon) + n^d\log(1/\epsilon))$ nodes and parameters, as $n\to\infty$ and $\epsilon\to 0$. We use two copies of these networks placed in parallel, so that the function 
		$$
		f_{NN,\sigma}
		:=\sum_{0\leq \bfk \leq n} \sigma_{\bfk} b_{n,\bfk}
		=\sum_{0\leq \bfk \leq n} \frac{\sigma_{\bfk}}{2} b_{n,\bfk}+\sum_{0\leq \bfk \leq n} \frac{\sigma_{\bfk}}{2} b_{n,\bfk}
		$$
	 	is still implementable by a $\{\pm \frac{1}{2}\}$-quantized neural network after placing a linear layer with weights given by two copies of $\{\sigma_{\bfk}/2\}_{0\leq \bfk\leq n}$. This last layer has size $(1,1,2(n+1)^d)$. In total, $f_{NN,\sigma}$ is implementable by a $\{\pm \frac{1}{2}\}$-quantized neural network with $O(n\log (n/\epsilon))$ layers and $O( n^2\log (n/\epsilon) + n^d\log(1/\epsilon))$ nodes and parameters. Since $|\sigma_{\bfk}|=1$, we have 
		$$
		\Big\| \sum_{0\leq \bfk \leq n} \sigma_{\bfk} p_{n,\bfk}-\sum_{0\leq \bfk \leq n} \sigma_{\bfk} b_{n,\bfk}\Big\|_\infty
		\leq \sum_{0\leq \bfk \leq n} \|p_{n,\bfk}-b_{n,\bfk}\|_\infty
		\leq (n+1)^d\epsilon.
		$$
	
\end{proof}

\subsection{Proof of Theorem \ref{thm:main}}
\label{sec:proofthmmain3}

\begin{proof}
	\underline{Quadratic case and $\{\pm 1\}$ alphabet}. It follows from \cref{thm:mainbernstein} that for any $f\in C^s([0,1]^d)$ with $\|f\|_\infty \leq \mu$, there exist $\{\sigma_{\bfk}\}_{0\leq \bfk\leq n}\subset \{\pm 1\}$ such that for all $\bfx\in [0,1]^d$, 
	$$
	\Big|f(\bfx) -\sum_{0\leq \bfk \leq n} \sigma_{\bfk} p_{n,\bfk}(\bfx) \Big|
	\lesssim_{s,\mu} \|f\|_{C^s} \min\big(1,n^{-s/2}x_\ell^{-s} (1-x_\ell)^{-s} \big). 
	$$
	Using \cref{thm:mainimplementation} for the quadratic case completes the proof.
	
	\underline{ReLU case and $\{\pm \frac{1}{2}\}$ alphabet}. We apply \cref{thm:mainbernstein} to obtain there exist $\{\sigma_{\bfk}\}_{0\leq \bfk\leq n}\subset \{\pm 1\}$ such that for all $\bfx\in [0,1]^d$, 
	$$
	\Big|f(\bfx) -\sum_{0\leq \bfk \leq n} \sigma_{\bfk} p_{n,\bfk}(\bfx) \Big|
	\lesssim_{s,\mu} \|f\|_{C^s} \min\big(1,n^{-s/2}x_\ell^{-s} (1-x_\ell)^{-s} \big). 
	$$
	Using \cref{thm:mainimplementation} for the ReLU case with $\epsilon = \|f\|_{C^s} n^dn^{-s/2}$ completes the proof.		  
\end{proof}

\section{Final Remarks}
\label{sec:finalremarks}

\subsection{Algorithm for computing the one-bit coefficients and stability to noise}

The binary sequence $\{\sigma_{\bfk}\}_{0\leq \bfk\leq n}$ that appears in \cref{thm:mainbernstein}, which is also used in the neural network constructions in \cref{thm:main}, can be numerically computed from samples of $f$ on the lattice $\{\bfk/n\}_{0\leq \bfk\leq n}$ without any other additional information about $f$, and is summarized in \cref{alg:binarybernalg}. 

\begin{algorithm}[h]
	\begin{algorithmic}
		\Require smoothness of the target function $s$, direction $\ell$ for $\Sigma\Delta$, samples $\{f(\frac{\bfk}{n})\}_{0\leq \bfk\leq n}$.
		
		\State 1. Calculate $\{a_{\bfk}\}_{0\leq \bfk\leq n}$ defined to be 
		$
		a_{\bfk}:=f_{n,\lceil s/2 \rceil }(\frac{\bfk}{n}).
		$
		\State2. Abort if $\|a\|_\infty\geq 1$. 
		\State3. Apply $s$-th order $\Sigma\Delta$ in direction $\ell$ on $\{a_{\bfk}\}_{0\leq \bfk\leq n}$. 
		\Ensure One-bit coefficients $\{\sigma_{\bfk}\}_{0\leq \bfk\leq n}$ and approximant $\sum_{0\leq \bfk\leq n} \sigma_{\bfk} p_{n,\bfk}(\bfx)$.			
	\end{algorithmic}
	\caption{Binary Bernstein algorithm}
	\label{alg:binarybernalg}
\end{algorithm}

Tracing through the proof of \cref{thm:mainbernstein}, the first step of our approximation scheme is to compute the real coefficients
$$
a_{\bfk}:=f_{n,\lceil s/2 \rceil}\(\frac{\bfk}{n} \)
:=\( \sum_{m=0}^{\lceil s/2\rceil-1} (I-B_n)^m(f)\)\(\frac{\bfk}{n} \).
$$

We explain how to compute these from the samples $\{f(\bfk/n)\}_{0\leq \bfk\leq n}$. Fix any bijection from $\{\bfk\colon 0\leq \bfk\leq n\}$ to $\{1,\dots,(n+1)^d\}$, such as the lexicographic ordering. Let $\bff\in \R^{(n+1)^d}$ be the vector such that $\bff_{\bfk}=f(\bfk/n)$, where the subscript on $\bff$ should be understood as the image of $\bfk$ under whichever ordering was selected. Let $P$ be the $(n+1)^d\times (n+1)^d$ matrix whose $(\bfj,\bfk)$ entry is $p_{n,\bfk}(\bfj/n)$. It follows from a direct calculation that 
$
B_n(f)({\bfk}/{n})
=(P\bff)_{\bfk}. 
$
Furthermore, it holds that for each $m\geq 1$, 
$$
B_n^m(f)\(\frac{\bfk}{n}\)
=(P^m \bff)_{\bfk}.
$$
This can be shown by induction since by linearity of the Bernstein operator, 
$$
B_n^{m+1}(f)\(\frac{\bfk}{n}\)
=\sum_{0\leq\bfl\leq n} B_n^m(f)\(\frac{\bfl}{n}\) p_{n,\bfl}\(\frac{\bfk}{n}\)
=\sum_{0\leq\bfl\leq n} (P^m\bff)_{\bfl} P_{\bfk,\bfl}
=(P^{m+1} \bff)_{\bfk}. 
$$
It follows from the above that
$$
a_{\bfk}
=f_{n,\lceil s/2 \rceil}\(\frac{\bfk}{n}\)
=\(\sum_{m=0}^{\lceil s/2 \rceil-1} (I-P)^m \bff\)_{\bfk}. 
$$ 
Hence computation of $\{a_{\bfk}\}_{0\leq\bfk\leq n}$ amounts to matrix vector operations. 

It follows from \cref{thm:iteratetolinear} that for sufficiently large $n$, we can guarantee that $\|a\|_\infty<1$. From here, calculation of $\{\sigma_{\bfk}\}_{0\leq \bfk\leq n}$ just requires feeding them into a stable $s$-th order $\Sigma\Delta$ quantization scheme applied to the $\ell$-th direction, for any $\ell$ chosen beforehand. Since directional $\Sigma\Delta$ follows directly from its corresponding one-dimensional version, we drop the dependence on $\ell$ in this expository portion. 

For the reader's convenience, let us fully describe the scheme introduced in \cite{exp_decay}. Instead of solving \eqref{sigmadeltaeq} directly, one considers the equation (where $y,q,h,v$ are sequences indexed by $\Z$),
\begin{equation}
	\label{eq:sigmadelta2}
	y_k-q_k = v_k-(h*v)_k, \quad\text{where}\quad (h*v)_k = \sum_{j=1}^k h_j v_{k-j}.
\end{equation}
Let $\mu\in (0,1)$, which will serve as an upper bound for $\|y\|_\infty$, and fix an integer $r\geq 1$. Pick any natural number $\gamma>6$ such that $\mu\leq 2-\cosh(\pi\gamma^{-1/2})$. Define the integers $z_k:=\gamma(k-1)^2+1$ for $k=1,\dots,r$, and let $h$ be a sequence supported in $\{z_1,\dots,z_r\}$ with $h_{z_k}=d_k$, where $\{d_1,\dots,d_r\}$ are found as solutions to the Vandermonde system,
$$
\begin{bmatrix}
	1 &1 &\cdots &1 \\
	z_1 &z_2 &\cdots & z_r \\
	\vdots &\vdots & &\vdots \\
	z_{1}^{r-1} &z_2^{r-1} &\cdots &z_r^{r-1} 
\end{bmatrix}
\begin{bmatrix}
	d_1 \\ d_2 \\ \vdots \\ d_r
\end{bmatrix}
=\begin{bmatrix}
	1 \\ 0 \\ \vdots \\ 0
\end{bmatrix}.
$$
Hence $h$ is readily computed by solving a linear system. Given input $y$, we compute the $\{\pm1\}$ sequence $q$ recursively by
$$
q_k:=\sign((h*v)_k+y_k), \andspace v_k:= y_k-q_k+(h*v)_k.
$$
Here, we use the convention that $\sign(0)=1$. As for computations, this is enough since we are only interested in computing $q$, which serves as the $\{\sigma_{\bfk}\}_{k_\ell=0}^n$ for fixed $\bfk'=(k_1,\dots,k_{\ell-1},k_{\ell+1},k_d)$. 

For theoretical purposes, in order to control the quantization error, it was shown in the referenced paper that there exists an auxiliary sequence $g$ for which $v_k-(h*v)_k=\Delta^r(g*v)$, so \eqref{eq:sigmadelta2} is in fact a $\Sigma\Delta$ scheme as in \eqref{sigmadeltaeq} with $u=g*v$ instead. With this transformation in place, it was shown that 
$$
\|u\|_\infty 
\leq \frac{3}{\sqrt{2\pi r}} (\gamma e)^r r^r.
$$
Hence, the right hand side term serves as the implicit constant $C_{\mu,r}$ that controls the stability of this particular quantization scheme. 

Next, we examine the stability of our approximation scheme to perturbations of the input function $f$. Since the approximation method only depends on the samples $\{f(\bfk/n)\}_{0\leq \bfk\leq n}$, we consider noisy samples of the form, 
$$
\tilde y_{\bfk} = f\(\frac{\bfk}{n}\)+\eta_{\bfk},
$$
where $\{\eta_{\bfk}\}_{0\leq\bfk\leq n}$ represents any unknown perturbation. We measure the noise level in via the $\ell^\infty$ norm $\|\eta\|_\infty$. The following theorem  evaluates the resulting approximation error if we use the perturbed samples of $f$ in \cref{alg:binarybernalg}.

\begin{theorem}
	\label{thm:approxnoise}
	Let $s,d,n\geq 1$, $\epsilon>0$, and $f\in C^{s}([0,1]^d)$ such that 
	\begin{equation}
		\label{eq:beta}
		\beta:=\|f\|_\infty + \sqrt{2^{s+1}}\epsilon + \frac{\sqrt{2^{s-1}}d}{8n}\|f\|_{C^1\Lip} <1. 
	\end{equation}
	For any $1\leq \ell\leq d$ and $\{\eta_{\bfk}\}_{0\leq \bfk\leq n}$ such that $\|\eta\|_\infty\leq \epsilon$, let $\{\tilde \sigma_{\bfk}\}_{0\leq \bfk\leq n}\subset \{\pm 1\}$ be the output of \cref{alg:binarybernalg} given inputs $\tilde y_{\bfk}=f(\bfk/n)+\eta_{\bfk}$ for each $0\leq\bfk\leq n$. Then we have 
	$$
	\Big| f(x)-\sum_{0\leq\bfk\leq n} \tilde\sigma_{\bfk} p_{n,\bfk}(x)\Big|
	\lesssim_{s,d,\beta} \|f\|_{C^s}  \min\big(1,n^{-s/2}x_\ell^{-s} (1-x_\ell)^{-s} \big)+\epsilon.
	$$
\end{theorem}

\begin{proof}
	Let $\{\tilde a_{\bfk}\}_{0\leq\bfk\leq n}$ be the real coefficients produced by the first step of \cref{alg:binarybernalg} given input $\{\tilde y_{\bfk}\}_{0\leq \bfk\leq n}$. It will be helpful to produce a function $\tilde f$ for which $\tilde f(\bfk/n)=y_{\bfk}$. Let $\varphi$ be any $C^\infty$ function compactly supported in $[-\frac{1}{4},\frac{1}{4}]^d$ such that $\varphi(0)=1$ and $0\leq \varphi\leq 1$. Define 
	$$
	\tilde f(x) := f(x)+\sum_{0\leq \bfk\leq n} \eta_{\bfk} \varphi(nx - \bfk),
	$$
	so that $\tilde f\in C^s([0,1]^d)$ and $\tilde f(\bfk/n)=\tilde y_{\bfk}$. Note that the set of functions $\{\varphi(n\cdot -\bfk)\}_{0\leq \bfk\leq n}$ have disjoint supports, and consequently,
	$$
	\|\tilde f-f\|_\infty
	=\Big\| \sum_{0\leq \bfk\leq n} \eta_{\bfk} \varphi(n\cdot  - \bfk)\Big\|_\infty 
	\leq \|\eta\|_\infty
	\leq \epsilon. 
	$$
	To simplify the following notation, let $r=\lceil s/2\rceil$. Define the function 
	$$
	\tilde f_{n,r}
	=\sum_{m=0}^{r-1} (I-B_n)^m(\tilde f). 
	$$
	By \cref{thm:iteratetolinear}, we have that $B_n(\tilde f_{n,\lceil s/2\rceil})=U_{n,\lceil s/2\rceil}(f)$ and 
	\begin{align*}
		\|\tilde f_{n,r}\|_\infty 
		&\leq \|\tilde f\|_\infty + (2^{r-1}-1)\|\tilde f-B_n(\tilde f)\|_\infty \\
		&\leq \|f\|_\infty + \|\eta\|_\infty +(2^{r-1}-1)\big( \|\tilde f- f\|_\infty +\|f-B_n(f)\|_\infty +\|B_n(f)-B_n(\tilde f)\|_\infty \big) \\
		&\leq \|f\|_\infty + (2^r-1)\|\eta\|_\infty + \frac{(2^{r-1}-1)d}{8n} \|f\|_{C^1\Lip},
	\end{align*} 
	where the final inequality follows from \cref{prop:C2bern}. 
	
	Note that $\tilde a_{\bfk}=\tilde f_{n,r}(\bfk/n)$. Hence, under assumption \eqref{eq:beta}, we see that 
	$
	\|\tilde a\|_\infty 
	\leq \|\tilde f_{n,\lceil s/2\rceil}\|_\infty
	\leq \beta
	<1. 
	$
	This permits us to employ a stable $s$-th order $\Sigma\Delta$ scheme in direction $\ell$ to generate signs $\{\tilde\sigma_{\bfk}\}_{0\leq \bfk\leq n}$ from $\{\tilde a_{\bfk}\}_{0\leq \bfk\leq n}$, which satisfy the difference equation 
	$
	\tilde a-\tilde\sigma=(\Delta_\ell)^r \tilde u,
	$
	for a bounded $\tilde u$. We proceed to compare $\{\tilde a_{\bfk}\}$ with $\{a_{\bfk}\}_{0\leq \bfk\leq n}$, which are coefficients generated by \cref{alg:binarybernalg} given noiseless samples. Note that $a_{\bfk}:=f_{n,r}(\bfk/n)$ and
	$$
	|a_{\bfk}-\tilde a_{\bfk}|
	=\Big|\tilde f_{n,r}\(\frac{\bfk}{n}\) -  f_{n,r}\(\frac{\bfk}{n}\)\Big|
	\leq \Big\|\sum_{m=0}^{r-1} (I-B_n)^m(\tilde f-f) \Big\|_\infty
	\leq 2^r \|\tilde f-f\|_\infty 
	\leq 2^r\epsilon. 
	$$
	Using this bound on $\|a-\tilde a\|_\infty$ and that the Bernstein polynomials are nonnegative and form a partition of unity, we have  
	\begin{align*}
	\Big| f(x)-\sum_{0\leq\bfk\leq n} \tilde\sigma_{\bfk} p_{n,\bfk} \Big|
	&\leq \Big| f(x)-\sum_{0\leq\bfk\leq n} a_{\bfk} p_{n,\bfk} \Big|+\Big| \sum_{0\leq\bfk\leq n} \big((\Delta_\ell)^r \tilde u\big)_{\bfk} \, p_{n,\bfk} \Big|+\Big| \sum_{0\leq\bfk\leq n} (a_{\bfk}-\tilde a_{\bfk}) \, p_{n,\bfk} \Big| \\
	&\leq \Big| f(x)-\sum_{0\leq\bfk\leq n} a_{\bfk} p_{n,\bfk} \Big|+\Big| \sum_{0\leq\bfk\leq n} \big((\Delta_\ell)^r \tilde u\big)_{\bfk} \, p_{n,\bfk} \Big|+2^r\epsilon. 
	\end{align*}
	Notice that the first term is the approximation error by iterated Bernstein operators and can be controlled using \cref{thm:bapprox}. The second term is the quantization error on the Bernstein basis and is upper bounded using \cref{thm:rsigmadelta}. Doing so completes the proof. 
\end{proof}

Several comments about this theorem are in order. First, notice that the error bound in \cref{thm:approxnoise} is that of \cref{thm:mainbernstein} plus a contribution from the noise. This theorem holds for arbitrary (hence deterministic and adversarial) perturbations and perhaps could be improved if additional assumptions are made, such as a statistical model. Moreover, the upper bound does not increase in $n$, which is not obvious since the noise energy $\|\eta\|_2$ may grown in $n$ without additional assumptions on the noise. Second, condition \eqref{eq:beta} ensures that the samples $\{\tilde f_{n,r}(\bfk/n)\}$ have absolute value strictly less than 1, which is only used to guarantee that a $s$-th order directional $\Sigma\Delta$ scheme is stable. Such a condition is not always necessarily since it may be plausible that $\Sigma\Delta$ is stable for a larger class of sequences, especially those generated from uniformly sampling smooth functions which have significant correlation between samples. Third, notice that condition \eqref{eq:beta} becomes easier, not more difficult, to satisfy for fixed $s$ and increasing $n$. Again, it is not obvious that such a behavior is possible since the noise energy $\|\eta\|_2$ increases in $n$.

\subsection{Bits and minimax code length}

In this subsection, we discuss our main results on approximation by quantized neural networks in the context of codes. Any function that can be implemented by a strict quantized neural network can be identified by its parameters and topology. Due to \cref{thm:main}, every smooth function can be encoded with small loss of information, and then approximately reconstructed by mapping back to its associated network. We must first explain what we mean by the number of bits. 

We define the minimax code length in an abstract setting before returning back to neural networks. Let $(X,d_X)$ be a metric space and $\calF\subset X$ a class of functions. Suppose that for any $\epsilon>0$, there is an integer $B_\epsilon\geq 0$ and maps 
$
E\colon \calF\to \{\pm 1\}^{B_\epsilon}$ and $D\colon \{\pm 1\}^{B_\epsilon}\to X$, for which 
$$
\sup_{f\in\calF} \ \ d_X\big(f,D(E(f))\big) \leq \epsilon.
$$
In which case, we call $(E,D)$ as an $\epsilon$-approximate encoder-decoder pair and $B_\epsilon$ the number of bits required for this $\epsilon$-approximate encoder-decoder pair. Usually the quantity of interest is the growth rate of $B_\epsilon$ as $\epsilon\to 0$. To examine the optimality of a given pair, the smallest $B_\epsilon$ for which the above holds is defined to be the minimax code length, 
$$
B_\epsilon^*
:=\min\Big\{B\colon \exists \, (E,D) \text{ for which } \sup_{f\in\calF} \  d\big(f,D(E(f))\big)\leq \epsilon \Big\}. 
$$
Of course $B_\epsilon\geq B^*_\epsilon$, but to determine $B^*_\epsilon$, it is well known that the minimax code length is precisely the smallest natural number that upper bounds the Kolmogorov $\epsilon$-entropy of $\calF$. 

Let us now discuss how this is connected to quantized neural networks, in an abstract setting. Let $\calA$ be finite, and for each $\epsilon>0$, suppose $\calN_\epsilon\subset X$ is a finite set of $\calA$-quantized neural networks, where $\calA$ is assumed to be finite, for which we have the approximation property that: for each $f\in\calF$, there is a $g\in \calN_\epsilon$ such that $d_X(f,g)\leq \epsilon$. Since $\calA$ is finite and any $g\in\calN_\epsilon$ can be identified by its network parameters and topology, there is a $B_\epsilon$ and injective $E\colon N_{\epsilon}\to\{\pm 1\}^{B_\epsilon}$. By definition, there is a decoder $D$ such that $D\circ E$ is the identity map on $\calN_\epsilon$. We extend $E$ to $\calF$ by first mapping $f$ to an $\epsilon$ approximation $g$ and then using the bit representation of $g$. Hence $(E,D)$ is an $\epsilon$-approximate encoder-decoder pair, and we call $B_\epsilon$ the number of bits used by the neural network. 

When one refers to the ``number of bits", perhaps an immediate inclination is to envision the number of nonzero bits required to store the weights of a single network into memory. This is usually the perspective taken if the goal is to quantize a particular network to reduce its memory cost, see \cite{courbariaux2015binaryconnect}. The number of bits to specify a single neural network is (often significantly) smaller than the minimax code length, which we proceed to explain below. 

For each $\calA$-quantized neural network $g\in\calN_\epsilon$, we can list out the parameters and topology of the network as a finite sequence $y(q)$ in $\calA\cup \{0\}$, where `0' is used to specify that a weight or bias is not being used from one node to another. Since $\calA$ is a finite set, it can be encoded with $\lceil\log|\calA|\rceil$ bits. Discarding each $0$ in $y$ and replacing each nonzero term in $y$ with its corresponding $\pm 1$ representation, we obtain a finite sequence $q(g)\in \{\pm 1\}^{B_\epsilon(g)}$. This quantity $B_\epsilon(g)$ is sometimes referred to as the number of bits required to store the network $g$. However, if we compare $B^\circ_\epsilon:=\sup_{g\in \calN_\epsilon} B_\epsilon(g)$ to the number of bits $B_\epsilon$, it is possible that $B^\circ_\epsilon = o(B_\epsilon)$ as $\epsilon\to0$. The short explanation is that the map $\calN_\epsilon\mapsto \{\pm 1\}^{B^\circ_\epsilon}$ given as $g\mapsto q(g)$ is not necessarily injective because `0' is required to specify the network's topology, which has been omitted from the quantity $B_\epsilon(g)$. In this case, if $\calN_\epsilon$ contains networks with vastly different topologies, then additional bits may be required to distinguish between the networks' topologies and it may not be feasible to simply discard all zeros. 

Going back to the content of this paper, our strategy is to approximate $f\in C^s([0,1]^d)$ by a one-bit linear combination of (possibly approximate) Bernstein polynomials. The latter set only depends on the function class and prescribed error, and not on a particular $f$. Hence the networks used to approximate $C^s([0,1]^d)$ all have the same topology, and for this reason, we do not need to include `0' as a bit. The following theorem quantifies the number of bits.   

\commentout{

To make this discussion more concrete, let us consider the approximation strategy developed by Yartosky \cite{yarotsky2017error}, whereby each $f\in C^s([0,1])$ with $\|f\|_{C^s}\leq 1$ is approximated by an unquantized ReLU neural network that implements the approximant 
$$
\sum_{0\leq \bfk\leq N} \sum_{|\bfalpha|\leq s} c_{\bfk,\bfalpha} \phi_{\bfk}(\bfx) \(\bfx-\frac{\bfk}{N}\)^{\bfalpha}.
$$
Here, $\{\phi_{\bfk}\}_{0\leq \bfk\leq N}$ forms a partition of unity of $[0,1]^d$, $\phi_{\bfk}$ is compactly supported in the cube of side length $3/N$ centered at $\bfk/N$, only the coefficients $\{c_{\bfk,\bfalpha}\}_{0\leq \bfk\leq N, |\bfalpha|\leq s}$ depend on $f$, and $N$ is a parameter that is chosen appropriately as to balance the approximation and size of the resulting network.

Although Yartosky did not consider quantized networks, the results in \ref{sec:relu} can be adapted to show that $\big\{ \phi_{\bfk}(\bfx) \(\bfx-\frac{\bfk}{N}\)^{\bfalpha}\}_{0\leq \bfk\leq N, |\bfalpha|\leq s}$ are implementable by one-bit ReLU networks. This is not central to the discussion, as this set of functions does not depend on $f$ and any encoding is an overhead cost. The core of this discussion revolves around the coefficients $\{c_{\bfk,\bfalpha}\}_{0\leq \bfk\leq N, |\bfalpha|\leq s}$. In fact, $c_{\bfk,\bfalpha}$  is the Taylor coefficient of $f$ expanded at $\bfk/N$ and of order $\bfalpha$. Due to the assumption that $\|f\|_{C^s}\leq $, we have $|c_{\bfk,\bfalpha}|\leq 1$, but we cannot say more than that. 
}

\begin{theorem}
	\label{thm:mainbits}
	For any $s,d\geq 1$ and sufficiently small $\mu>0$, the following hold. There exist an encoder $E\colon C^s([0,1]^d)\to \{\pm 1\}^{B}$ and decoder $D\colon \{\pm 1\}^B\to\calN(\{\pm 1\},\rho,L,N,P)$ such that $L=O(\epsilon^{-2/s})$ and $\max(B,N,P)=O(\epsilon^{-2d/s})$ as $\epsilon\to0$, and
	$$
	\sup_{\|f\|_\infty\leq \mu, \, \|f\|_{C^s}\leq 1} \|f-D(E(f))\|_\infty \leq \epsilon.
	$$
\end{theorem}

\begin{proof}
	For any $f\in C^s([0,1]^d)$, Whitney's extension theorem provides us with a $F\in C^s(\R^d)$ such that $F=f$ on $[0,1]^d$ and $\|F\|_{C^s(\R^d)}\lesssim_{d,s} \|f\|_{C^s([0,1]^d)}$, where importantly, the implicit constant that appears in this inequality does not depend on $f$, see \cite{whitney1934analytic} and also \cite[Chapter 6, Theorem 4]{stein1970singular}. 
	
	Consider the set $U:=[-\frac{1}{2},\frac{3}{2}]\times[0,1]^{d-1}$. We will only consider Whitney extensions $f$ of that are compactly supported in $U$ -- this is possible since given any Whitney extension of $f$, which might not necessarily be compactly supported, we can multiply it by a smooth function that is compactly supported in $U$ and identically equal to 1 on $[0,1]^d$. 
	
	Hence, let $W_0$ and $W_2$ be the best possible constants for which $\|F\|_{L^\infty(E)} \leq W_0\|f\|_\infty$ and $\|F\|_{C^2(E)} \leq W_2\|f\|_{C^2}$, where $F$ is a Whitney extension of $f$ that is compactly supported in $U$. Note that $W_0$ and $W_2$ only depend on $d,s,U$. 
	
	For now, fix a $f\in C^s([0,1]^d)$ such that $\|f\|_\infty\leq \mu$ and $\|f\|_{C^s}\leq 1$. Let $F\in C^s(\R^d)$ be a Whitney extension of $f$ that is compactly supported in $U$. Consider the function $\tilde F\in C^s([0,1]^d)$ defined as $\tilde F(\bfx):=F(2x_1-\frac{1}{2},x_2,\dots,x_d)$. Notice that $\|\tilde F\|_\infty =\|F\|_\infty \leq W_0\|f\|_\infty \leq W_0\mu$ and that $\|\tilde F\|_{C^2([0,1]^d)}\leq 4\|F\|_{C^2}\leq 4W_2 \|f\|_{C^2}$.
	
	From here onward, assume that $\mu$ is sufficiently small so that $W_0\mu \leq 1/2$, and note that $\mu$ only needs to be sufficiently small depending on $s,d$ since $W_0$ only depends on these quantities. For any integer $\displaystyle n\geq 2sd^2W_2$, so that $n\geq {sd^2 W_2}/({1-W_0\mu})$, we apply \cref{thm:mainbernstein} to $\tilde F$ with $\ell=1$. Hence, there exists $\{\sigma_{\bfk}\}_{0\leq \bfk\leq n}\subset\{\pm 1\}$ such that for all $\bfx\in [0,1]^d$ with $\frac{1}{4}\leq x_1\leq \frac{3}{4}$, we have 
	\begin{equation}
		\label{eq:thm3help}
		\big|\tilde F(\bfx)-\tilde H(\bfx)\big|
		\lesssim_{s,d} n^{-s/2}, \quad\text{where}\quad \tilde H(\bfx):=\sum_{0\leq \bfk\leq n} \sigma_{\bfk} p_{n,\bfk}(\bfx).
	\end{equation}
	Define $H\colon [0,1]^d\to\R$ by $H(\bfx):=\tilde H(\frac{1}{2}x_1+\frac{1}{4},x_2,\dots,x_d)$. For each $\bfx\in [0,1]^d$, 
	\begin{equation}
		\label{eq:thm3help2}
		|f(\bfx)-H(\bfx)|
		=\Big|\tilde F\Big(\frac{1}{2}x_1+\frac{1}{4},x_2,\dots,x_d\Big) - \tilde H\Big(\frac{1}{2}x_1+\frac{1}{4},x_2,\dots,x_d\Big)\Big|
		\lesssim_{s,d} n^{-s/2},
	\end{equation}
	where the final inequality follows from \eqref{eq:thm3help}. Hence, for any $\epsilon$ sufficiently small, by making $n$ large enough depending only on $\epsilon$, $d$, and $s$, the right hand side of \eqref{eq:thm3help2} can be made smaller than $\epsilon$. We see that $n=O(\epsilon^{-2/s})$ as $\epsilon\to0$ suffices. 
	
	We next show that $H$ can be implemented by a neural network and we determine the number of bits used in this encoding. Recall that 
	$$
	H(\bfx)=\sum_{0\leq \bfk\leq n} \sigma_{\bfk} p_{n,\bfk}\Big(\frac{1}{2}x_1+\frac{1}{4},x_2,\dots,x_d\Big). 
	$$
	The function $x_1\mapsto \frac{1}{2}x_1+\frac{1}{4}$ can be made by a neural of size $(4,4,9)$ because we first make $\frac{1}{2}=\rho(0\cdot x_1+1)$, then $\frac{1}{4}=\rho(\frac{1}{2})$, then $\rho(x_1+\frac{1}{4})$ and $\rho(x_1-\frac{1}{4})$, and finally $\frac{1}{2}x_1+\frac{1}{4}=\rho(x_1+\frac{1}{4})-\rho(x_1-\frac{1}{4})+\frac{1}{4}$. As shown in \cref{thm:mainimplementation}, the multivariate Bernstein polynomials $\{p_{n,\bfk}\}_{0\leq \bfk\leq n}$ are implementable by a $\{\pm 1\}$-quantized quadratic neural network that does not depend on the target function $f$, and it has $O(n)$ layers and $O(n^d)$ nodes and parameters as $n\to\infty$. Hence, we can implement the functions
	$$
	\Big\{ \bfx\mapsto p_{n,\bfk}\Big(\frac{1}{2}x_1+\frac{1}{4},x_2,\dots,x_d\Big)\Big\}_{0\leq \bfk\leq n}
	$$
	using a $\{\pm 1\}$-quantized quadratic network that only depends on $n$ (which is selected in terms of $s,d$) and not on $f$. Hence the number of bits to encode this part is $O(\frac{2d}{s}\log(1/\epsilon))$. By incorporating a linear last layer of size $(1,1,(n+1)^d)$, whose weights are specified by $\{\sigma_{\bfk}\}_{0\leq \bfk\leq n}$, we see that $H$ is implementable. The signs $\{\sigma_{\bfk}\}_{0\leq \bfk\leq n}$ depend on $f$ and $\sigma_{\bfk}\in \{\pm 1\}$, so the total number of bits used to encode this part is $(n+1)^d=O(\epsilon^{-2d/s})$ as $\epsilon\to0$.	

%

\end{proof}

To put \cref{thm:mainbits} in context, Kolmogorov entropy tells us that the minimum number of bits necessary to encode the unit ball of $C^s([0,1]^d)$ with error at most $\epsilon$ measured in the uniform norm is $O(\epsilon^{-d/s})$, regardless of which encode-decoder pair is used, see \cite[Chapter 7.5]{shiryayev1993selected} and \cite{kolmogorov1959varepsilon} for the one-dimensional version and \cite[page 86]{vitushkin1961theory} for a full proof the multidimensional case. Although our approximation strategy via Bernstein polynomials does not attain the entropy rate, it has several other desirable features that are not captured by entropy considerations. Our method only uses a one-bit alphabet (which is not enforced by bit counting considerations), can be readily computed from queries, and is stable to perturbations of the unknown function. 

\subsection{Beyond directional $\Sigma\Delta$?}

One unsatisfying aspect of the quantization schemes employed and analyzed in Section \ref{sec:Bquan} is that they are inherently one-dimensional. It is due to this that the quantization error bound in \cref{thm:rsigmadelta} contains the 
$
x_\ell^{-r}(1-x_\ell)^{-r}
$
term which blows up at the boundaries $x_\ell=0$ and $x_\ell=1$. This term is then propagated to Theorems \ref{thm:mainbernstein} and \ref{thm:main}. 

On one hand, the inability to approximate the target function at the endpoints is unavoidable. To see why, $p_{n,k}(0)=\delta_{0,k}$ and $p_{n,k}(1)=\delta_{n,k}$, and consequently, if $\sigma_k\in \{\pm 1\}$ for each $k$, then $\sum_{k=0}^n \sigma_k p_{n,k}$ can only be equal to $\pm 1$ at the endpoints. This means that a $\{\pm 1\}$ linear combination of Bernstein polynomials cannot possibly approximate any continuous function uniformly on $[0,1]$, without additional assumptions on the target function near the endpoints. 

On the other hand, one may wonder if it is possible to spread the error out to other faces of the $d$-dimensional cube, instead of concentrating all it to the faces where $x_\ell=0$ and $x_\ell=1$. More precisely, perhaps one could replace the term $x_\ell^{-r}(1-x_\ell)^{-r}$ with 
$$\bfx^{-\bfr}(1-\bfx)^{-\bfr} = \prod_{\ell=1}^d x_\ell^{-r_\ell} (1-x_\ell)^{-r_\ell}, \quad\text{for any}\quad  |\bfr| = r.$$

The main bottleneck of carrying out the more general case stems from difficulties with $\Sigma\Delta$ quantization. Currently, there is no such available stable one-bit multidimensional higher order $\Sigma\Delta$ scheme, beyond the directional one used in this paper. More precisely, we define a stable one-bit $\bfr$-th order $\Sigma\Delta$ quantizer with alphabet $\calA$ to be a map that takes any sequence $y$ with $\|y\|_\infty\leq\mu<1$ and outputs $q$, such that $q_{\bfk}\in\calA$ and there is an associated $u$ with $\|u\|_\infty\leq C_{\mu,\bfr,\calA}$ (that does not depend on $y$) satisfying the equation 
$$
y-q = \Delta^{\bfr}u, \quad \text{where}\quad \Delta^{\bfr}=\Delta_d^{r_d}\Delta_{d-1}^{r_{d-1}} \cdots \Delta_1^{r_1}.
$$
We say this scheme is truly multidimensional if there are $j\not=k$ such that $r_j,r_k\geq 1$. 

If a stable $\bfr$-th order $\Sigma\Delta$ scheme exists, then it is straightforward to modify the analysis given in the proof of \cref{thm:rsigmadelta}. Indeed, let $\{\sigma_{\bfk}\}_{0\leq \bfk\leq n}$ with $\sigma_{\bfk}\in \calA$ be the output given input $\{a_{\bfk}\}_{0\leq \bfk\leq n}$. Since $a-\sigma=\Delta^{\bfr}u$ for some $\|u\|_\infty \leq C_{\mu,\bfr,\calA}$, a summation by parts yields
\begin{equation*}
	\Big|\sum_{0\leq \bfk\leq n} (a_{\bfk}-\sigma_{\bfk}) \ p_{n,\bfk}(\bfx)\Big| 
	=\Big|\sum_{0\leq \bfk\leq n} (\Delta^{\bfr} u )_{\bfk} \ p_{n,\bfk}(\bfx)\Big|
	\lesssim_{\mu,\bfr,\calA} \sum_{0\leq \bfk\leq n} \big|\big((\Delta^{\bfr})^* p_{n,\cdot} (\bfx)\big)_{\bfk} \big|.
\end{equation*}
Note that $(\Delta^{\bfr})^*=(\Delta_d^*)^{r_d} (\Delta_{d-1}^*)^{r_{d-1}} (\Delta_1^*)^{r_1}$. The Bernstein polynomials are tensor products, so 
$$
\sum_{0\leq \bfk\leq n} \big|\big((\Delta^{\bfr})^* p_{n,\cdot} (\bfx)\big)_{\bfk} \big|
=\prod_{\ell=1}^d \sum_{k_\ell=0}^n  \Big| \big( (\Delta^*)^{r_\ell} p_{n,\cdot}(x_\ell)\big)_{k_\ell} \Big| \lesssim_{r_1,\dots,r_d} \prod_{\ell=1}^d n^{-r_\ell} x_{\ell}^{-r_\ell} (1-x_\ell)^{-r_\ell},
$$
where we used the one dimensional result in \cite[Theorem 6]{onebitBernstein}. Together with the trivial estimate that the quantization error is bounded by 2, we have proved that for all $\bfx\in [0,1]^d$,
$$
\Big|\sum_{0\leq \bfk\leq n} (a_{\bfk}-\sigma_{\bfk}) \ p_{n,\bfk}(\bfx)\Big|
\lesssim_{\mu,\bfr,\calA} \min\big(1, n^{-|\bfr|/2}\bfx^{-\bfr}(1-\bfx)^{-\bfr} \big).
$$
With this at hand, Theorem \cref{thm:mainbernstein} can be modified as follows. Under the same assumptions on $d,s,n,\mu,f$, for any $\bfs$ with $|\bfs|=s$, there exist $\{\sigma_{\bfk}\}_{0\leq \bfk\leq n}$ with $\sigma_{\bfk}\in \calA$ such that 
$$
\Big|f(\bfx)-\sum_{0\leq \bfk\leq n} \sigma_{\bfk} p_{n,\bfk}(\bfx)\Big|
\lesssim_{\bfs,d,\mu,\calA} \|f\|_{C^s}  \min\big(1,n^{-s/2} \bfx^{-\bfs} (1-\bfx)^{-\bfs} \big).
$$
In the second step of \cref{alg:binarybernalg}, one can use a $\bfs$-th order $\Sigma\Delta$ scheme instead of a directional one. It is important to emphasize again that there is no known stable one-bit $\Sigma\Delta$ quantizer, so this discussion cannot be extended to the one-bit case.

\appendix
\section{Neural network constructions}

\label{sec:NNappendix}

This appendix constructs the desired one-bit neural networks. In the subsequent proofs, we just use the term ``network" since the alphabet and activation do not vary in each subsection. To visualize the topology of network, we typically draw a schematic diagram. Nodes belonging to the same layer are placed horizontally, layers are stacked vertically, with the input nodes on top and output nodes on bottom. We say the $k$-th node in layer $\ell-1$ is connected to the $j$-th node in layer $\ell$ if either $(A_\ell)_{j,k}$ or $(\bfb_\ell)_{j}$ is nonzero. Two connected nodes are represented in a schematic diagram by a line segment adjoining them. 

\subsection{Implementation of $\{\pm 1\}$-quantized quadratic networks}

\label{sec:quadappendix}

\begin{figure}[h]
	\centering
	\begin{subfigure}[b]{0.45\textwidth}
		\centering
		\includegraphics[width=0.8\textwidth]{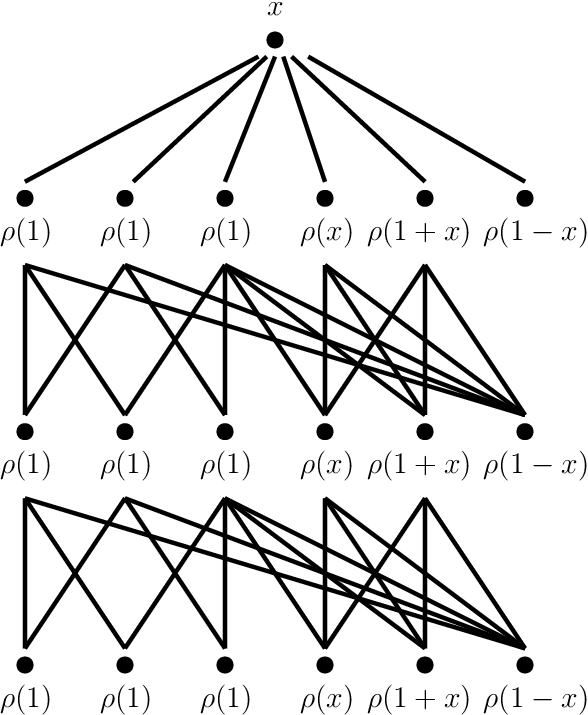}
		\vspace{1em}
		\caption{$\rho(1),\rho(x),\rho(1+x),\rho(1-x)$}	
		\label{fig:6node}
	\end{subfigure} \quad \quad 
	\begin{subfigure}[b]{0.45\textwidth}
		\centering
		\includegraphics[width=0.9\textwidth]{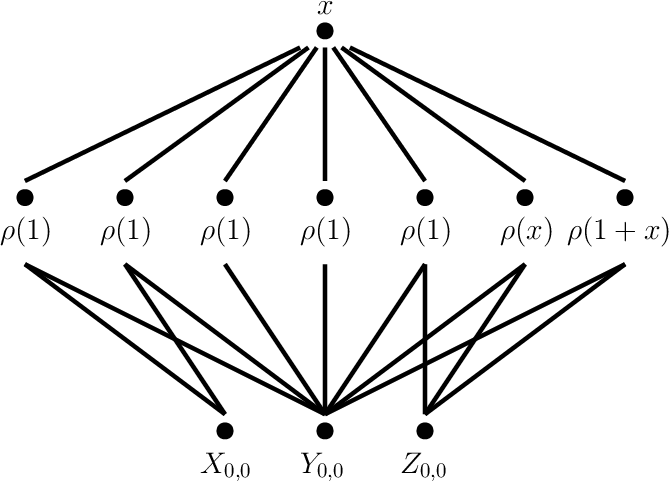}
		\vspace{3em}
		\caption{$X_{0,0}, Y_{0,0}, Z_{0,0}$}
		\label{fig:XYZ}
	\end{subfigure}
	\caption{Helper activated one-bit quadratic networks}
\end{figure}

\begin{lemma}
	\label{prop:XYZ}
	For any integer $n\geq 1$, there exists an activated $\calA_{1}$-quantized quadratic neural network with $n+1$ layers and $O(n)$ nodes and parameters that implements the function 
	$$
	x\mapsto \(U(x), \, V(x), \, \{X_{n-1,k}(x)\}_{k=0}^{n-1}, \, \{Y_{n-1,k}(x)\}_{k=0}^{n-1}, \, \{Z_{n-1,k}(x)\}_{k=0}^{n-1}\).
	$$
\end{lemma}

\begin{proof}
	We will create an activated network such that $\{X_{m,k}\}_{k=0}^{m}$, $\{Y_{m,k}\}_{k=0}^{m}$ and $\{Z_{m,k}\}_{k=0}^{m}$ are outputs in layer $m+2$. Throughout this proof, we will implicitly use the identity $x=\rho(x+1)-\rho(x)-\rho(1)$ without explicit mention. To carry out this strategy, notice that \eqref{eq:XYZ} and \eqref{eq:bernrecurrence2} imply
	
	\begin{gather}
		\begin{split}
			X_{m,k} & = 
			\begin{cases}
				\ \rho(Y_{m-1,0}-U-X_{m-1,0}) &\text{if } k=0,\\
				\ \rho(Z_{m-1,k-1}-V-X_{m-1,k-1}+Y_{m-1,k}-U-X_{m-1,k}) &\text{if } 0 < k < m, \\
				\ \rho(Z_{m-1,m-1}-V-X_{m-1,m-1}) &\text{if } k=m.
			\end{cases} \\
			Y_{m,k} & =  
			\begin{cases}
				\ \rho(1-x+Y_{m-1,0}-U-X_{m-1,0}) & \text{if } k=0,\\
				\ \rho(1-x+Z_{m-1,k-1}-V-X_{m-1,k-1}+Y_{m-1,k}-U-X_{m-1,k}) &\text{if }  0 < k < m, \\
				\ \rho(1-x+Z_{m-1,m-1}-V-X_{m-1,m-1}) &\text{if } k=m.
			\end{cases} \\
			Z_{m,k} & =  
			\begin{cases}
				\ \rho(x+Y_{m-1,0}-U-X_{m-1,0}) &\text{if } k=0,\\
				\ \rho(x+Z_{m-1,k-1}-V-X_{m-1,k-1}+Y_{m-1,k}-U-X_{m-1,k}) &\text{if }  0 < k < m, \\
				\ \rho(x+Z_{m-1,m-1}-V-X_{m-1,m-1}) &\text{if } k=m.
			\end{cases} 
		\end{split}
		\label{eq:XYZrecurrence} 	
	\end{gather}

	To employ the recurrence \eqref{eq:XYZrecurrence}, we first construct a network with $n+1$ layers such that each layer has six nodes that output $\rho(1)$, $\rho(1)$, $\rho(1)$, $\rho(x)$, $\rho(1+x)$, and $\rho(1-x)$. For layer 1, these six outputs are readily made from the input $x$. The same six outputs can be generated by $\rho$ applied to $\pm 1$ linear combinations of the same terms, since 
	\begin{align*}
		\rho(1)&=\rho(\rho(1)+\rho(1)), \quad 
		\rho(1-x)=\rho(\rho(1)+\rho(1)+\rho(1)-\rho(1+x)+\rho(x)), \\
		\rho(x)&=\rho(\rho(1+x)-\rho(x)-\rho(1)), \quad
		\rho(1+x)=\rho(\rho(1+x)-\rho(x)+\rho(1)).
	\end{align*}
	This is shown in Figure \ref{fig:6node}. Since a constant number of nodes and parameters are added to increase the depth by one, terminating at layer $n+1$, this network has size $O(n)$. This establishes that $U,V$ are outputs of layer $n+1$.
	
	We can produce $X_{0,0},Y_{0,0},Z_{0,0}$ with a network of size $(2,10,20)$, as shown in Figure \ref{fig:XYZ}, since 
	\begin{align*}
		X_{0,0}&=\rho(1)=\rho( \rho(1)+\rho(1)), \\
		Y_{0,0}&=\rho(1-x+1)=\rho(\rho(1)+\rho(1)-\rho(x+1)+\rho(x)+\rho(1)+\rho(1)+\rho(1)), \\
		Z_{0,0}&=\rho(x+1)=\rho(\rho(x+1)-\rho(x)+\rho(1)).
	\end{align*} 
	It follows from recurrence \eqref{eq:XYZrecurrence} that if layer $m$ outputs the quantities $\{X_{m-1,k}\}_{k=0}^{m-1}$, $\{Y_{m-1,k}\}_{k=0}^{m-1}$, $\{Z_{m-1,k}\}_{k=0}^{m-1}$ and $\rho(1),\rho(1),\rho(1),\rho(x),\rho(1+x)$, then it is possible to generate the quantities $\{X_{m,k}\}_{k=0}^m$, $\{Y_{m,k}\}_{k=0}^{m}$, $\{Z_{m,k}\}_{k=0}^{m}$ in the subsequent layer $m+1$ by adding only a constant number of nodes and parameters.
\end{proof}


\begin{figure}
	\centering
	\begin{subfigure}{0.45\textwidth}
		\centering
		\includegraphics[width=0.9\textwidth]{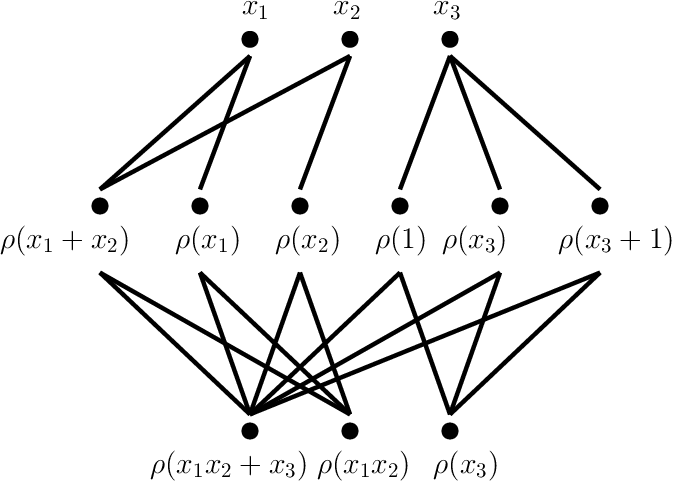}
		\smallskip 
		\caption{Three terms}
		\label{fig:3mult} 
	\end{subfigure}
	\begin{subfigure}{0.45\textwidth}
		\centering
		\includegraphics[width=0.8\textwidth]{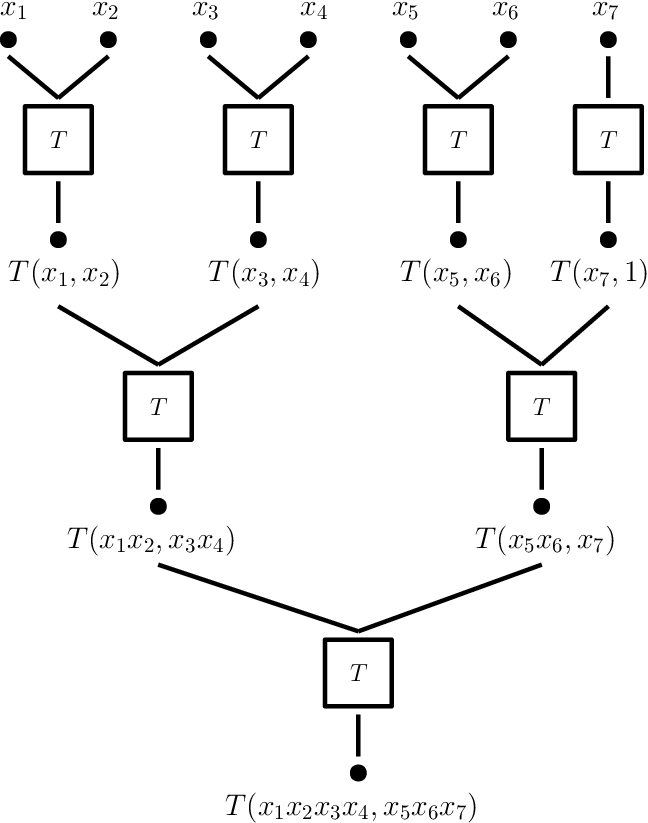}
		\smallskip 
		\caption{Seven terms}
		\label{fig:7mult} 
	\end{subfigure}
	\caption{Pseudo-multiplication networks}
\end{figure}

\begin{lemma}
	\label{prop:multseveralquadratic}
	For any $d\geq 2$, there exist an integer $d_*$ and an activated $\calA_1$-quantized quadratic neural network with $O(\log d)$ layers and $O(d)$ nodes and parameters, as $d\to\infty$, that implements the function 
	$$
	\bfx =(x_1,\dots,x_d)
	\mapsto \Big( \rho\( \prod_{j\leq d_*} x_j + \prod_{j>d_*} x_j \), \, \rho\( \prod_{j\leq d_*} x_j\), \, \rho\( \prod_{j>d_*} x_j \)\Big). 
	$$
\end{lemma}

\begin{proof}	
	The $d=2$ case is handled by \eqref{eq:multbinaryquad} whereby $d_*=1$. For $d=3$, we create a two layer network of size (2,9,20) whose output is $(\rho(x_1x_2+x_3), \rho(x_1x_2), \rho(x_3))$, as shown in Figure \ref{fig:3mult}. This is possible because $
	x_1x_2=\rho(x_1+x_2)-\rho(x_1)-\rho(x_2)$ and $x_3=\rho(x_3+1)-\rho(x_3)-\rho(1)$, which are $\pm 1$ linear combinations of outputs from the first layer. Hence, $d_*=2$ when $d=3$.
	
	From now on, assume that $d\geq 4$. The main idea is that each layer, we generate the three terms on the right hand side \eqref{eq:multbinaryquad} for $\lfloor d/2 \rfloor$ pairs of products, and then iterate the identity, which finally terminates after $O(\log d)$ iterations. Each step of the iteration depends on the number theoretic properties of $d$. The basic observation that allows for termination of this argument is that: if $abcd=\prod_{j=1}^d x_j$ and there is a layer that outputs 
	$$
	\rho(a),\rho(a+b),\rho(b), \rho(c),\rho(c+d),\rho(d),
	$$
	then by appending another network of size $(1,3,12)$, we can implement 
	$\rho(ab)$, $\rho(ab+cd)$, and $\rho(cd)$, which would complete the proof. To simplify the resulting argument, we introduce the shorthand notation 
	$$
	T(a,b):=(\rho(a),\rho(a+b),\rho(b)).
	$$
	
	Each step of the reduction depends on the factorization of $d$. For layer 1, if $d$ is even, we create $3d/2$ nodes whose outputs are 
	$$
	T(x_1,x_2), T(x_3,x_4), \dots, T(x_{d-1},x_d).
	$$
	If $d$ is odd, in layer 1, we create $3\lceil d/2\rceil +1$ nodes whose outputs are 
	$$
	T(x_1,x_2), T(x_3,x_4), \dots, T(x_{d-2},x_{d-1}), T(x_d,1), \rho(1). 
	$$
	Let $d_1:=\lceil d/2\rceil$, and we can assume that $d_1\geq 4$, otherwise we are finished by the above basic operation. The construction of layer 2 depends on both the parities of $d_1$ and $d$. If both $d$ and $d_1$ are even, we use \eqref{eq:multbinaryquad} again to create $3d_1/2$ nodes whose outputs are 
	$$
	T(x_1x_2,x_3x_4), \dots, T(x_{d-3} x_{d-2},x_{d-1} x_d).
	$$
	The remaining three cases are fairly similar except for when both $d$ and $d_1$ are odd. In which case, we can simply reproduce $T(x_d,1),\rho(1)$ in the next layer from the $T(x_d,1), \rho(1)$ nodes in the previous layer. Indeed, in layer 2, we have nodes that output 
	$$
	T(x_1x_2,x_3x_4), \dots, T(x_{d-4} x_{d-3},x_{d-2} x_{d-1}),T(x_d,1), \rho(1).
	$$
	
	We continue this strategy until we end with the situation expressed in the basic observation. Notice that there are at most $O(\log d)$ iterations, and each iteration requires appending a network with only a single layer. Furthermore, the number of nodes and parameters used in layer $\ell$ is $O(d/2^\ell)$, so there are at most $O(d)$ many nodes and parameters in total.
\end{proof}

Figure \ref{fig:7mult} displays the corresponding network constructed in \cref{prop:multseveralquadratic} for $d=7$ whereby $d_*=4$. The next theorem shows that any $\pm 1$ linear combination of multivariate Bernstein polynomials is implementable by a neural network. We remark that the following does not claim that the Bernstein polynomials are implementable by an activated neural network.

\subsection{Implementation of $\{\pm \frac{1}{2}\}$-quantized ReLU networks}
\label{sec:reluappendix}


We begin with some basic properties of the ReLU function, which we will use without explicit reference. For any $t\geq 0$, $s\in\R$, and $\bfx,\bfy\in\R^n$, it holds that 
$$
\sigma(t\bfx)=t\sigma(\bfx), \quad \sigma(\sigma(\bfx))=\sigma(\bfx), \quad \sigma(\bfx+\bfy)\leq \sigma(\bfx)+\sigma(\bfy), \andspace |\sigma(s)-t|\leq |s-t|.
$$
We next describe a few functions that are implementable by an activated $\calA_{1/2}$-quantized ReLU network. For any input $x\in\R$, the number $1/2$ can be implemented by a network of size $(1,1,1)$ since $1/2=\sigma(0x+1/2)$. Implementation of the map $x\mapsto 2^{-m}\sigma(x)$ for any integer $m\geq 1$ can be achieved by a network of size $(m,m,m)$ by
$$
x\mapsto \sigma\(\frac{1}{2}\sigma\(\frac{1}{2}\sigma\(\cdots \sigma\(\frac{1}{2}x\)\)\)\)=\frac{1}{2^m}\sigma(x).  
$$
Summation of nonnegative numbers can be carried out by a $\calA_{1/2}$-quantized network of size $(2,9,16)$ because for nonnegative $a,b$, we have 
$$
a+b = \sigma\( \frac{1}{2}\sigma\(\frac{1}{2}a\) + \cdots + \frac{1}{2}\sigma\(\frac{1}{2}a\) +  \frac{1}{2}\sigma\(\frac{1}{2}b\) + \cdots + \frac{1}{2}\sigma\(\frac{1}{2}b\)\). 
$$
Likewise, for nonnegative $a,b$, the quantity $\sigma(a-b)$ is implementable as well.  

\begin{figure}[t]
	\centering 
	\begin{subfigure}[b]{0.45\textwidth}
		\centering
		\includegraphics[width=0.9\textwidth]{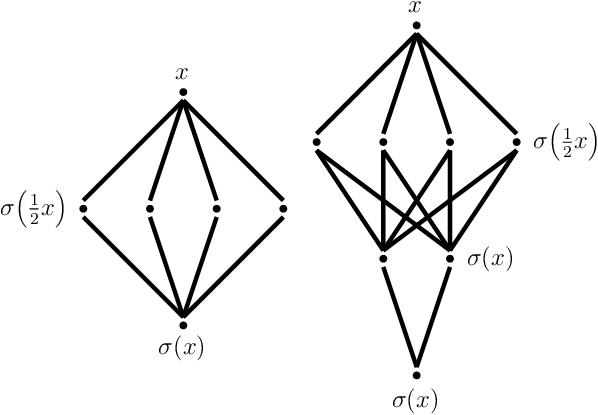}
		\caption{Duplication networks}
		\label{fig:duplication}
	\end{subfigure}	\quad
	\begin{subfigure}[b]{0.45\textwidth}
		\centering
		\includegraphics[width=\textwidth]{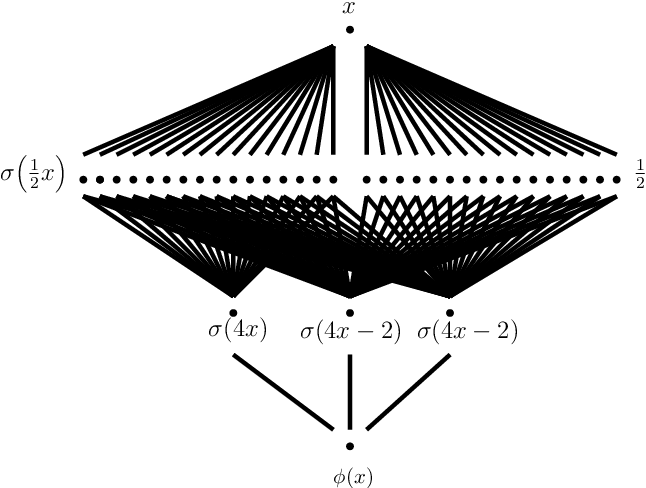}
		\caption{$\phi$-block }	
		\label{fig:phi}
	\end{subfigure} 
	
	\bigskip \bigskip 
	
	\begin{subfigure}{0.45\textwidth}
		\centering
		\includegraphics[width=0.5\textwidth]{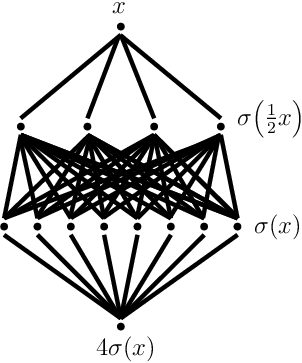}
		\caption{$\times 4$ network}
		\label{fig:times4}
	\end{subfigure} \quad 
	\begin{subfigure}{0.45\textwidth}
		\centering
		\includegraphics[width=\textwidth]{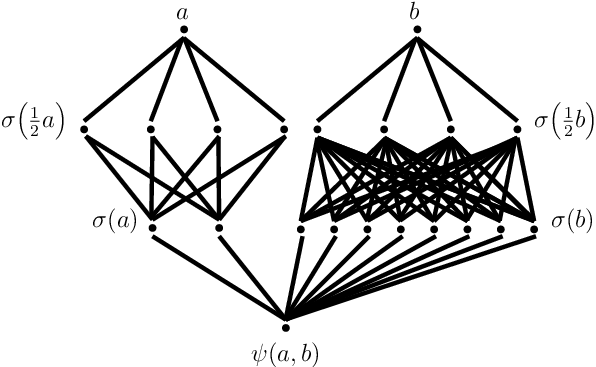}
		\caption{$\psi$ network}
		\label{fig:psi}
	\end{subfigure}
	\caption{Implementations of basic functions}
\end{figure}

For any integer $L\geq 2$, there is a $L$ layer activated $\calA_{1/2}$-quantized ReLU network $\zeta_L$ with $O(L)$ nodes and parameters that implements the map $x\mapsto\sigma(x)$. For $L=2$ and $L=3$, this can be done via the two networks shown in Figure \ref{fig:duplication}, which have size $(2,5,8)$ and $(3,7,14)$ respectively. For any $L\geq 4$, we can compose these maps so that the resulting $\zeta_L$ network has $L$ layers and $O(L)$ nodes and parameters. We call $\zeta_L$ a $L$ layer duplication network, and will be used to propagate nonnegative numbers down an arbitrary number of layers without the use of skip connections. 

The tent function $\phi$ can be implemented by a ReLU network with real parameters since $\phi(x)=\sigma(2x) - 2\sigma(2x-1)$. By making some adjustments, it is not difficult to see that $\phi$ can be implemented with an activated $\calA_{1/2}$-quantized network of size $(3,36,107)$ because of the identities
\begin{gather*}
	\begin{split}
		\phi(x)
		&=\sigma\( \frac{1}{2}\sigma(4x) - \frac{1}{2}\sigma(4x-2)- \frac{1}{2}\sigma(4x-2)\), \\
		\sigma(4x)&=\sigma\(\underbrace{\frac{1}{2}\sigma\(\frac{1}{2}x\)+\cdots+\frac{1}{2}\sigma\(\frac{1}{2}x\)}_{\text{16 times}}\), \\
		\sigma(4x-2)&= \sigma\(\underbrace{\frac{1}{2}\sigma\(\frac{1}{2}x\)+\cdots+\frac{1}{2}\sigma\(\frac{1}{2}x\)}_{\text{16 times}}-\underbrace{\frac{1}{2}\cdot\frac{1}{2}-\cdots-\frac{1}{2}\cdot\frac{1}{2}}_{\text{8 times}}\) . 
	\end{split}	
\end{gather*}
For convenience, we call this network a $\phi$-block and it is shown in \cref{fig:phi}.

\begin{lemma}
	\label{prop:squaring}
	For any $\epsilon>0$, there exist nonnegative functions $S_\epsilon^+,S_\epsilon^-\colon [0,1]\to\R$ such that 
	$$
	\text{for all } x\in [0,1], \quad S_\epsilon^-(x)\leq x^2\leq S_\epsilon^+(x), \quad  
	|S_\epsilon^+(x)-x^2|\leq \epsilon, \quad
	|S_\epsilon^-(x)-x^2|\leq \epsilon. 
	$$
	Furthermore, there exist two activated $\calA_{1/2}$-quantized ReLU neural network both with the same number of layers and each of size $O(\log(1/\epsilon))$ that implement $S_\epsilon^+$ and $S_\epsilon^-$.
\end{lemma}

\begin{proof}
	Fix $\epsilon>0$ and $m\geq 1$ will be an integer chosen later depending only on $\epsilon$. We define
	\[
	S_\epsilon^+(x)
	:=\sigma\(x-\sum_{k=1}^m \frac{\phi^{\circ k}(x)}{4^k}\) \quad\text{and}\quad S_\epsilon^-(x)
	:=\sigma\(S_\epsilon^+(x)-\frac{1}{2}\frac{1}{4^m}\). 
	\]
	Since $\phi^{\circ k}$ is nonnegative for all $k\geq 1$, it follows from \eqref{eq:blancmange} that $S_\epsilon^+(x)\geq x^2$ and
	$$
	|S_\epsilon^+(x)-x^2|
	=S_\epsilon^+(x)-x^2
	=\sum_{k=m+1}^\infty \frac{\phi^{\circ k}(x)}{4^k}
	\leq \sum_{k=m+1}^\infty  \frac{1}{4^k} 
	=\frac{1}{3}\frac{1}{4^m}.
	$$
	We first show that $S_\epsilon^-(x)\leq x^2$. This trivially holds if $S_\epsilon^+(x)\leq 1/(2\cdot 4^m)$. Otherwise, for $x\in [0,1]$ such that $S_\epsilon^+(x)\geq 1/(2\cdot 4^m)$, we have 
	\[
	S_\epsilon^-(x)
	=S_\epsilon^+(x)-\frac{1}{2}\frac{1}{4^m}
	=x^2+\sum_{k=m+1}^\infty \frac{\phi^{\circ k}(x)}{4^k}-\frac{1}{2}\frac{1}{4^m}
	\leq x^2,
	\]
	where the last inequality follows from the observation that 
	$$
	\sum_{k=m+1}^\infty \frac{\phi^{\circ k}(x)}{4^k}
	\leq \sum_{k=m+1}^\infty  \frac{\|\phi^{\circ k}\|_\infty}{4^k}
	\leq \sum_{k=m+1}^\infty  \frac{1}{4^k} 
	=\frac{1}{3}\frac{1}{4^m}
	<\frac{1}{2}\frac{1}{4^m}.
	$$
	Moreover, since $x^2\geq 0$, we have that 
	$$
	|S_\epsilon^-(x)-x^2|
	= \Big|\sigma\(S_\epsilon^+(x)-\frac{1}{2}\frac{1}{4^m}\)-x^2\Big|
	\leq |S_\epsilon^+(x)-x^2|+\frac{1}{2}\frac{1}{4^m}
	\leq \frac{1}{4^m}.
	$$
	Thus, we pick any integer $m\geq \log(1/\epsilon)/2$. 
	
	We proceed to discuss implementations of $S_\epsilon^+$ and $S_\epsilon^-$ by activated $\calA_{1/2}$-quantized ReLU neural networks, as shown in Figure \ref{fig:squaring}. The network for $S_\epsilon^+$ consists of only the black nodes and weights, and has a two column structure. The network for $S_\epsilon^-$ consists of all the nodes and weights, and has a three column structure. 
	
	We focus on $S_\epsilon^+$ first. The left column nodes implement the functions $\{\phi^{\circ k}\}_{k=1}^m$, which can be done by composing $m$ many $\phi$-blocks. Each $\phi^{\circ k}$ is produced in the $3k$-th layer, so the left column of the network has $3m$ layers and $O(m)$ nodes and parameters. The layer 3 right node outputs $4x$, which can be done by using the top network shown in Figure \ref{fig:times4}, which has size $(3,12,44)$. Recall that the layer 3 left node outputs $\phi(x)$. We can implement the function $\psi(a,b):= \sigma(4b-a)$ on nonnegative $a,b$ via a network of size $(3,19,58)$, which is the bottom network in Figure \ref{fig:psi}. If $4b-a\geq 0$, then $\psi(a,b)=4b-a$. Using this $\psi$ network, and that $16x-\phi(x)\geq 0$ by identity \eqref{eq:blancmange}, we see that the layer 6 right node produces $16x-\phi(x)$. Now we proceed a similar fashion. For $k=2,\dots,m-1$, the left and right nodes in layer $3k$ output $\phi^{\circ k}(x)$ and $4^{k-1}x-\sum_{j=1}^{k-1} 4^{k-1-j}\phi^{\circ j}(x)$, respectively. Using the $\psi$ network and identity \eqref{eq:blancmange} again, the $3k+3$ layer right node outputs $4^{k}x-\sum_{j=1}^k 4^{k-j}\phi^{\circ j}(x)$. Thus, generating $4^{m}x-\sum_{j=1}^m 4^{m-j}\phi^{\circ j}(x)$ requires a network with $3m+3$ layers and $O(m)$ nodes and parameters. Finally, we divide by $4^{m}$, which can be done with a network of size $(2m,2m,2m)$ to produce $S_\epsilon^+(x)$ in layer $5m+3$. We use a duplication network with two layers to produce $S_\epsilon^+(x)$ in layer $5m+5$. 
	
	For $S_\epsilon^-$, we use the same network as for $S_\epsilon^+$ and include a third column. The layer 1 rightmost node outputs $1/2$, which is implementable by a network of size $(1,1,1)$. We carry down $1/2$ another $3m+2$ layers by a network of size $O(m)$. Then we divide by $1/4^m$ via a network of size $(2m,2m,2m)$. In layer $5m+3$, the middle and right nodes output $S_\epsilon^+(x)$ and $1/2^{2m+1}$. From here, we can easily generate $S_\epsilon^-(x)$ using another two layer network that implements $(a,b)\mapsto \sigma(a-b)$. 
  
\end{proof}

\begin{figure}[t]
	\centering
	\includegraphics[width=0.6\textwidth]{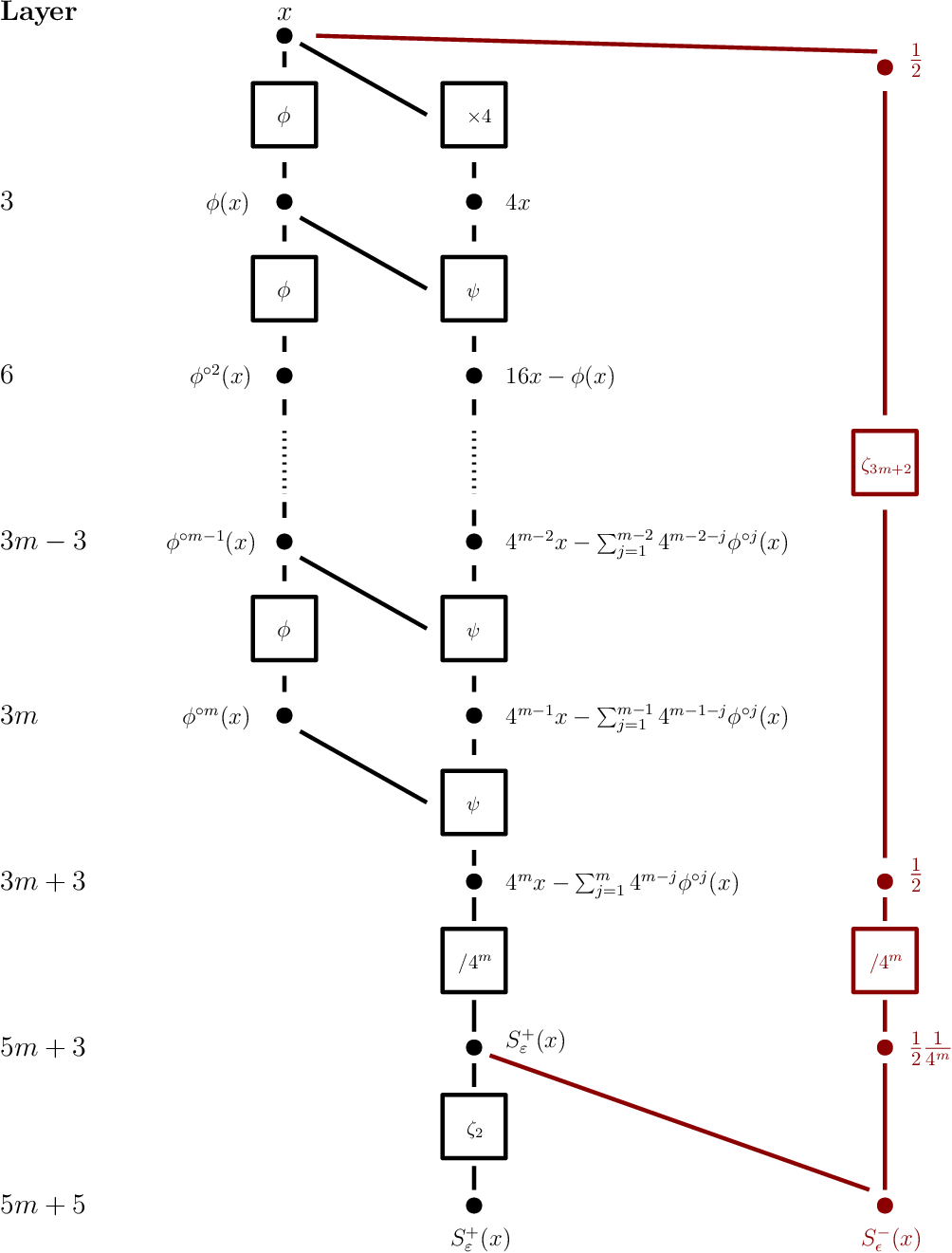}
	\caption{Implementations of $S_\epsilon^+$ and $S_\epsilon^-$}
	\label{fig:squaring}
	
\end{figure}

\begin{lemma}
	\label{prop:multbinary}
	For any $\epsilon>0$, there exists an activated $\calA_{1/2}$-quantized ReLU neural network of size $O(\log(1/\epsilon))$ that implements a nonnegative function $P_{\epsilon}\colon [0,1]^2\to\R$ such that 
	$$
	\text{for any } x,y\in [0,1], \quad  P_{\epsilon}(x,y)\leq xy, \quad\text{and}\quad |P_{\epsilon}(x,y)-xy|\leq \epsilon. 
	$$
\end{lemma}

\begin{proof}
	Let $\epsilon \in (0,1)$, and $S_\delta^+$ and $S_\delta^-$ be the approximate squaring function from \cref{prop:squaring}, where $\delta=\epsilon/6$. We define the functions
	\begin{align*}
		P_{\epsilon,0}(x,y)
		&:=2 \(S_\delta^-\( \frac{x+y}{2}\)-S_\delta^+\( \frac{x}{2}\)-S_\delta^+\( \frac{y}{2}\)\) 
		\quadand
		P_{\epsilon}(x,y)
		:=\sigma(P_{\epsilon}(x,y)). 
	\end{align*}
	Since
	$
	xy = 2(\frac{x+y}{2})^2-2( \frac{x}{2})^2- 2(\frac{y}{2})^2
	$
	and $xy\geq 0$, it follows from \cref{prop:squaring} that
	\begin{equation*} 
		\big|P_{\epsilon}(x,y) - xy\big|
		=\big|\sigma(P_{\epsilon,0}(x,y)) - xy\big|
		\leq \big|P_{\epsilon,0}(x,y) - xy\big|
		\leq 6 \delta
		= \epsilon.
	\end{equation*}
	Additionally, using that $S_\epsilon^+(x)\geq x^2$ and $S_\epsilon^-(x)\leq x^2$, we see that 
	$$
	P_{\epsilon,0}(x,y)
	\leq 2\(\(\frac{x+y}{2}\)^2-\( \frac{x}{2}\)^2- \(\frac{y}{2}\)^2\)
	=xy. 
	$$
	This inequality together implies $P_{\epsilon}(x,y)	\leq xy$ as well, because either $P_{\epsilon,0}(x,y)\leq 0$ in which case $P_\epsilon(x,y)=0\leq xy$, or $P_{\epsilon,0}(x,y)>0$ and $P_{\epsilon}(x,y)=P_{\epsilon,0}(x,y)\leq xy$. 
	
	Now we count the number of parameters. For inputs $x,y\in  [0,1]$, we first generate $x/2=\sigma(x/2)$, $y/2=\sigma(y/2)$, and $(x+y)/2=\sigma(x/2+y/2)$. This can be done with a $\calA_{1/2}$-quantized ReLU network of size $(1,3,4)$. Next, we place three networks in parallel, one $S_\delta^-$ network that takes $(x+y)/2$ as input, and two $S_\delta^+$ networks that take $x/2$ and $y/2$ as input. Note the implementations of $S_\delta^+$ and $S_\delta^-$ each has size $O(\log(1/\epsilon))$ and the same number of layers. Then we generate
	$$
	\sigma\(\frac{1}{2}S_\delta^-\( \frac{x+y}{2}\)-\frac{1}{2}S_\delta^+\( \frac{x}{2}\)-\frac{1}{2}S_\delta^+ \( \frac{y}{2}\)\)
	$$
	with a network of size $(1,1,3)$. Finally, we need a multiplication by $4$, which can be implemented by the top network shown in Figure \ref{fig:times4}.  
\end{proof}

\begin{figure}
	\centering 
	\includegraphics[width=0.7\textwidth]{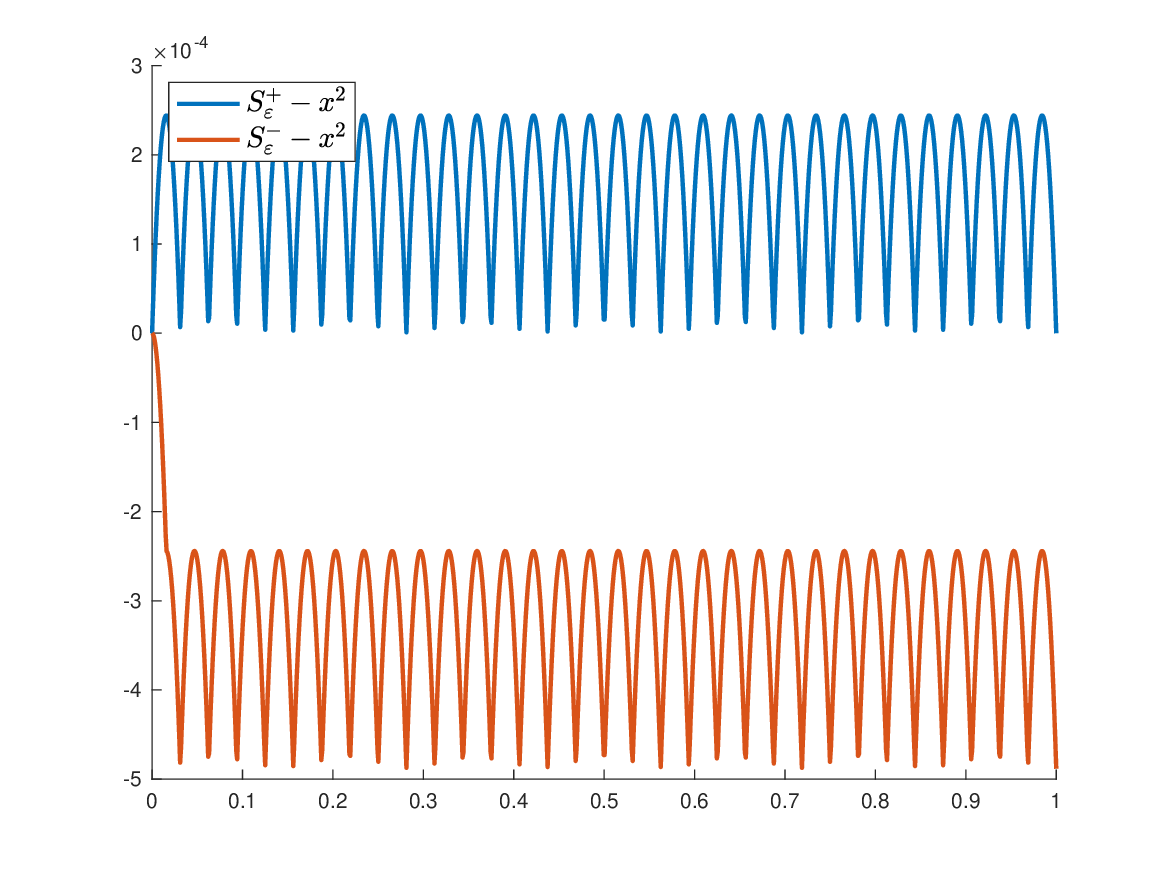}
	\caption{$S_\epsilon^+ -x^2$ and $S_\epsilon^--x^2$ for $\epsilon=10^{-3}$}
	\label{fig:Sfunctions}
\end{figure}

While $S_\epsilon^+$ approximates the quadratic function from above, we also constructed $S_\epsilon^-$, which approximates the quadratic function from below. An example of their approximation errors is shown in Figure \ref{fig:Sfunctions}. We are not aware of other neural network papers that have used $S_\epsilon^-$, but it is necessary to ensure that $P_\epsilon$ is nonnegative, which is an important property in subsequent results. 

Now we move onto the product of several real numbers. In many neural network approximation papers, approximate multiplication is done sequentially, such as $P_\delta(a,P_\delta(b,P_\delta(c,d)))$ for four inputs $a,b,c,d$. This strategy is inefficient without use of skip connections since $a,b$ would need to be brought down a considerable number of layers via duplication networks. We perform multiplication of $d$ numbers in a dyadic manner for improved efficiency. 

\begin{lemma}
	\label{prop:multseveral}
	For any integer $d\geq 2$ and $\epsilon >0$, there exists an activated $\calA_{1/2}$-quantized ReLU neural network of size $O( \log(1/\epsilon))$ that implements a nonnegative function $P_{\epsilon,d}\colon [0,1]^d\to\R$ such that  
	$$
	\text{for all } \bfx\in [0,1]^d, \quad 0\leq P_{\epsilon,d}(\bfx)\leq x_1\cdots x_d,
	\quad\text{and}\quad 
	|P_{\epsilon,d}(\bfx)-x_1\cdots x_d |\leq \epsilon. 
	$$
\end{lemma}

\commentout{
	\begin{proof}
		Let $\epsilon\in (0,1)$ and $P_{\delta}$ be the approximate multiplication function in \cref{prop:multbinary}, where $\delta>0$ will be chosen later. We define the function 
		$$
		P_{\epsilon,d}(\bfx)
		:=P_{\delta}\Big(x_1,P_{\delta}\big(\cdots (x_{d-2},P_{\delta}(x_{d-1},x_d) ) \big) \Big). 
		$$
		In other words, if we define the following quantities,
		\begin{align*}
			u_d := x_d \andspace u_j := P_{\delta}(x_j,u_{j+1}) \quad\text{for}\quad j=d-1,\dots,1, 
		\end{align*}
		then we have that
		$
		P_{\epsilon,d}(\bfx)=u_1. 
		$
		
		We explain why $P_{\epsilon,d}(\bfx)$ is well-defined. From \cref{prop:multbinary}, for each $1\leq j\leq d-1$, we have that 
		$$
		u_j=P_\delta(x_j,u_{j+1}) \in [0,x_ju_{j+1}]\subset [0,u_{j+1}].
		$$ 
		Then by induction, starting with $u_d=x_d\in [0,1]$ shows that $u_j\in [0,1]$ for each $1\leq j\leq d$. It follows immediately from the definition of $P_{\epsilon,d}$ and \cref{prop:multbinary} that $P_{\epsilon,d}(\bfx)\leq x_1\cdots x_d$ for all $\bfx\in [0,1]^d$. 
		
		To bound the approximation error between the actual product function and $P_{\epsilon,d}$, we use the triangle inequality and \cref{prop:multbinary} to see that
		\begin{align*}
			\big| P_{\epsilon,d}(\bfx)-x_1\cdots x_d\big|
			&\leq \big|P_{\delta}(x_1,u_2)-x_1 u_2\big| + \big|x_1 u_2 - x_1\cdots x_d \big| \\
			&\leq \delta + |u_2-x_2\cdots x_d\big| \\
			&=\delta + \big|P_{\delta}(x_2,u_3)-x_2\cdots x_d\big| \\
			&\leq \cdots \leq (d-1)\delta. 
		\end{align*}
		We pick $\delta=\epsilon/d$. 
		
		Now we count the number of parameters required to build the neural network $P_{\epsilon,d}$. This is a composition of $d-1$ many $P_{\delta}$ functions, which according to \cref{prop:multbinary}, is implementable by a network of size $O(1+\log(1/\delta))$. So $P_{\epsilon,d}$ has size $O(d+d\log(1/\delta))=O(d\log(d/\epsilon))$. 
	\end{proof}
}

\begin{proof}
	Let $\epsilon\in (0,1)$, set $m=\lceil \log d \rceil$, and we will pick $\delta_1,\dots,\delta_m$ in terms of $\epsilon$ and $d$ later. We let $P_{\delta_k}$ be both the function and its network implementation from Proposition \ref{prop:multbinary}, and $L_k=O(\log(1/\delta_k))$ be the number of layers in $P_{\delta_k}$. Fix an arbitrary $\bfx\in [0,1]^d$ and for reasons that will become apparent, let $u_{0,\ell}=x_\ell$.
	
	Every consecutive pair of input nodes $x_1,\dots,x_d$ are fed into $\lfloor d/2\rfloor$ approximate multiplication networks $P_{\delta_1}$ placed in parallel. If $d$ is odd, then we simply carry $x_d$ down $L_1$ layers with a copy network of size $O(L_1)$. In layer $L_1$, there are $n_1=\lceil d/2 \rceil$ nodes whose outputs we call $u_{1,1},\dots,u_{1,n_1}$. For example, $u_{1,1}=P_{\delta_1}(x_1,x_2)$. 
	
	Each consecutive pair of $u_{1,1},\dots,u_{1,n_1}$ are fed into at most $\lfloor n_1/2 \rfloor$ networks $P_{\delta_2}$ placed in parallel, and if $n_1$ is odd, then $u_{1,n_1}$ is carried down $L_2$ layers by a copy network of size $O(L_2)$. In layer $L_1+L_2$, we call the outputs $u_{2,1},\dots,u_{2,n_2}$, where $n_2=\lceil n_1/2 \rceil$. We continue this process with $P_{\delta_3},\dots,P_{\delta_m}$ and denote the outputs following each $P_{\delta_k}$ or copy network by $u_{k,1},\dots,u_{k,n_k}$. The final output is $u_m:=P_{\epsilon,d}(\bfx)$.  
	
	We first show how to pick the $\delta_1,\dots,\delta_m$ properly and quantify the approximation error. First note that from Proposition \ref{prop:multbinary}, for any $a,b\in [0,1]$ and $\delta>0$, we have $0\leq P_\delta(a,b)\leq ab\leq 1$, so by induction, we have that $0\leq u_{k,\ell}\leq 1$. The quantities 
	$
	U:=\{u_{k,\ell}\}_{k=0,\dots,m, \  \ell=1,\dots,n_k}
	$
	that are created have a tree structure with leaves $x_1,\dots,x_d$. For a $u\in U$, we let $p(u)$ be product of all $x_1,\dots,x_d$ that are connected to $u$. We have that 
	$$
	u_{1,\ell}\leq p(u_{1,\ell})\leq 1 \quad \text{and}\quad  |u_{1,\ell}-p(u_{1,\ell})|\leq \delta_1 \quad\text{for }\quad  \ell=1,\dots,n_1. 
	$$ 
	Fix any $u_{k,\ell}$. If $u_{k,\ell}$ is not generated as an output of a $P_{\delta_k}$ network, then it is created by copying a $u_{k-1,\ell'}$, in which case, $|u_{k,\ell}-p(u_{k,\ell})|=0$. If not, $u_{k,\ell}=P_{\delta_k}(u_{k-1,a},u_{k-1,b})$ for distinct $1\leq a,b\leq n_{k-1}$. Then
	\begin{align*}
		|u_{k,\ell}-p(u_{k,\ell})|
		&=|P_{\delta_k}(u_{k-1,a},u_{k-1,b})-p(u_{k,\ell})| \\
		&=|P_{\delta_k}(u_{k-1,a},u_{k-1,b})-u_{k-1,a}u_{k-1,b}|+|u_{k-1,a}u_{k-1,b}-u_{k-1,a}p(u_{k-1,b})| \\
		&\quad\quad +|u_{k-1,a}p(u_{k-1,b})-p(u_{k-1,a})p(u_{k-1,b})| \\
		&\leq \delta_k + |u_{k-1,b}-p(u_{k-1,b})|+|u_{k-1,a}-p(u_{k-1,a})|.
	\end{align*}
	Hence, the error made by $u_{k,\ell}$ is bounded above by $\delta_k$ plus the errors made in the previous two $u_{k-1,a}$ and $u_{k-1,b}$. Inducting on $k$, we see that 
	$$
	|u_m-p(u_m)|
	\leq 2^{m-1} \delta_1 + 2^{m-2}\delta_2 + \cdots + \delta_m
	\leq d \big( \delta_1+2^{-1} \delta_2+\cdots+ 2^{-m+1}\delta_m\big) ,
	$$
	where we noted that $2^{m-1}=2^{\lceil \log d \rceil-1}\leq d$. To make the final error bounded by $\epsilon$, we pick $\delta_k=\epsilon/(2d)$ for each $k=1,\dots,m$. This proves that $P_{\epsilon,d}(\bfx)-x_1\cdots x_d|\leq \epsilon$ for all $\bfx\in [0,1]^d$.
	
	It remains to count the size of this network that implements $P_{\epsilon,d}$. Generating $\{u_{k,\ell}\}_{\ell=1,\dots,n_k}$ from $\{u_{k-1,\ell}\}_{\ell=1,\dots,n_{k-1}}$ requires $n_k$ networks each of size $O(\log(1/\delta_k))=O(\log(1/\epsilon))$. Hence, the number of layers in $P_{\epsilon,d}$ is $O(m\log(1/\epsilon))=O(\log(1/\epsilon))$.  Since $n_k\leq 2d/2^k$ and 
	$$
	\sum_{k=1}^m n_k \log(1/\delta_k)
	\leq \sum_{k=1}^m \frac{2d}{2^k} \log(2d/\epsilon)
	\leq 2d \log(2d/\epsilon),
	$$
	the resulting network that implements $P_{\epsilon,d}$ has $O(\log(1/\epsilon))$ nodes and parameters.
\end{proof}

\begin{lemma}
	\label{prop:bernrelu1}
	For any $\epsilon>0$ and integer $n\geq 1$, there exists an activated $\calA_{1/2}$-quantized ReLU neural network with $O(n \log(n/\epsilon))$ layers and $O(n^2 \log(n/\epsilon))$ nodes and parameters that implements a function $b_n\colon [0,1]\to\R^{n+1}$ such that for each $0\leq k\leq n$,
	$$
	0\leq b_{n,k}\leq p_{n,k}, \quad \text{and}\quad \|b_{n,k}-p_{n,k}\|_\infty 
	\leq \epsilon. 
	$$
\end{lemma} 

\begin{proof}
	This construction will be done recursively. Fix an $\epsilon>0$, and we will pick an appropriate $\delta>0$ depending only on $\epsilon$ and $n$ later. Let $P_\delta$ denote both the approximate binary multiplication function and its network implementation from Proposition \ref{prop:multbinary} and $L_\delta=O(\log(1/\delta))$ be the number of layers. 
	
	Let $x\in [0,1]$ denote the input. We generate $1-x$ via the formula,
	$$
	1-x
	=\sigma\( \frac{1}{2}\sigma\(-\frac{1}{2}x+\frac{1}{2}\) +\frac{1}{2}\sigma\(-\frac{1}{2}x+\frac{1}{2}\)+\frac{1}{2}\sigma\(-\frac{1}{2}x+\frac{1}{2}\)+\frac{1}{2}\sigma\(-\frac{1}{2}x+\frac{1}{2}\)\).
	$$
	This is carried out by a network of size $(2,5,12)$. We next use a copy network to produce $x$ in the second layer. Hence, the first degree Bernstein polynomials $p_{1,0}(x)=1-x$ and $p_{1,1}(x)=x$ are exactly implementable, and are outputs of nodes in the second layer. We define
	$$
	b_{1,0}(x):=\sigma(x) \andspace b_{1,1}(x):=\sigma(1-x).
	$$
	
	Since we will need to use $x,1-x$ repeatedly throughout, we will use copy networks to bring them down however many layers we need. The size of these networks will be proportional to the total number of layers in $b_n$. We will see that the size of these copy networks will be dominated by the other portions of the final network. 
	
	Set $L_1=2$. We recursively define the following. For each $1\leq m\leq n-1$ and $1\leq k\leq m$, let 
	$$
	b_{m+1,k}(x)
	:=\text{sum}\( P_{\delta}\big(x,b_{m,k-1}(x)\big),P_{\delta}\big(1-x,b_{m,k}(x)\big) \),
	$$
	where sum$(\cdot,\cdot)$ refers to the two layer summation network. This shows that $b_{m+1,k}(x)$ can be generated provided that $b_{m,k-1}(x),b_{m,k}(x),x,1-x$ are all outputs of nodes that appear in layer $L_m$. If so, $b_{m+1,1}(x),\dots,b_{m+1,m}(x)$ as outputs of nodes in layer $L_{m+1}:=L_m+L_\delta+2$. For the remaining two endpoint cases, let $\zeta_2$ be a two layer duplication network. We define 
	\begin{equation*}
		b_{m+1,0}(x):= \zeta_2(P_{\delta}(1-x,b_{m,0}(x))) 
		\andspace b_{m+1,m+1}(x):=\zeta_2(P_{\delta}(x,b_{m,k-1}(x))),
	\end{equation*}
	which are also outputs in layer $L_{m+1}$. Finally, we copy $x,1-x$ from layer $L_m$ down to layer $L_{m+1}$ which requires a network of size $O(L_\delta)$. 
	
	We still need to show that these functions are well defined, because $P_\delta$ takes inputs in $[0,1]^2$. To establish this, we prove the stronger statement that $0\leq b_{m,k}\leq p_{m,k}$ for each $1\leq m\leq n$ and $0\leq k\leq m$. We proceed by induction on $m$. For the base case, we have $b_{1,0}=p_{1,0}$, and $b_{1,1}=p_{1,1}$. Assume that for some $m$, we have that $b_{m,k}\leq p_{m,k}$ for each $0\leq k\leq m$. Now, consider any $1\leq k\leq m+1$. By \cref{prop:multbinary}, for all $x\in [0,1]$, we have
	\begin{align*}
		b_{m+1,k}(x)
		&= P_{\delta}\big(x,b_{m,k-1}(x)\big) + P_{\delta}\big(1-x,b_{m,k}(x)\big) \\
		&\leq x b_{m,k-1}(x) + (1-x)b_{m,k}(x) \\
		&\leq x p_{m,k-1}(x) + (1-x)p_{m,k}(x)
		=p_{m+1,k}(x). 
	\end{align*}
	The remaining two cases $k=0$ and $k=m+1$ follow from an analogous argument. To summarize, we have shown that for $1\leq m\leq n-1$, layer $L_m$ has $m+3$ nodes whose outputs are $b_{m,0}(x),\dots,b_{m,m}(x),x,1-x$.
	
	We proceed to examine the approximation error. For convenience, let
	$$
	\alpha_{m,k}
	:=\|b_{m,k}-p_{m,k}\|_\infty, \andspace
	\alpha_{m}:=\max_{0\leq k\leq m} \alpha_{m,k}. 
	$$
	Hence $\alpha_1=0$. Using the recurrence relation \eqref{eq:bernrecurrence}, triangle inequality, and \cref{prop:multbinary}, we have 
	\begin{equation}
		\label{eq:help1}
		\begin{aligned}
			\alpha_{m+1,k}
			&\leq \sup_{x\in [0,1]} \(\Big| xp_{m,k-1}(x)- x b_{m,k-1}(x)+(1-x)p_{m,k}(x)-(1-x)b_{m,k}(x) \Big| \\
			&+ \Big| P_{\delta}(x,b_{m,k-1}(x)) - x b_{m,k-1}(x)+P_{\delta}(1-x,b_{m,k}(x))-(1-x)b_{m,k}(x) \Big| \) \\
			&\leq \sup_{x\in [0,1]} \( \delta + \delta + |x|\alpha_{m,k-1}+|1-x|\alpha_{m,k} \)\\
			&\leq \sup_{x\in [0,1]} \( 2\delta + x\alpha_{m}+(1-x)\alpha_{m} \)
			= 2\delta+\alpha_{m}. 
		\end{aligned}
	\end{equation}
	Repeating the same argument for $k=0$ and $k=m+1$ provides us with 
	\begin{equation}
		\label{eq:help2}
		\max(\alpha_{m+1,0},\alpha_{m+1,m+1})
		\leq \delta +\alpha_m. 
	\end{equation}
	Combining equations \eqref{eq:help1} and \eqref{eq:help2}, we see that 
	$
	\alpha_{m+1}
	\leq 2\delta+\alpha_m,
	$
	for all $1\leq m\leq n$. A telescoping argument shows that, $\alpha_m=\alpha_m-\alpha_0
	\leq 2(m-1)\delta$. Thus, we pick $\delta=\epsilon/(2n)$ to see that
	\begin{equation*}
		\alpha_m
		=\max_{0\leq k\leq m}\|b_{m,k}-p_{m,k}\|_\infty
		\leq \epsilon 
		\quad \text{for all } m\leq n. 
	\end{equation*}
	
	Now, we proceed to count the number of parameters. For the first row of this Pascal triangle, $b_{1,0}$ and $b_{1,1}$ can be made with a network of constant size. Computing each $b_{m+1,k}$ from the previous row $\{b_{m,k}\}_{k=0}^m$ requires at most two approximate multiplication networks $P_{\delta}$ with $\delta=\epsilon/(2n)$ and a summation, which requires a network of size $O(\log (n/\epsilon))$. Hence, computing $\{b_{m=1,k}\}_{k=0}^{m+1}$ from $\{b_{m,k}\}_{k=0}^m$ requires $O(\log (n/\epsilon))$ layers and $O(m\log(n/\epsilon))$ nodes and parameters. We do this from $m=1$ to $m=n-1$. 
\end{proof}

\begin{lemma}
	\label{prop:bernrelu}
	For any $\epsilon >0$ and integers $n,d\geq 1$, there exists an activated $\calA_{1/2}$-quantized ReLU neural network with $O(n\log (n/\epsilon))$ layers and $O( n^2\log (n/\epsilon) + n^d\log(1/\epsilon))$ nodes and parameters, as $n\to\infty$ and $\epsilon\to 0$, that implements a function $b_n\colon [0,1]^d\to\R^{(n+1)^d}$ such that 
	$$
	\text{for each } 0\leq \bfk\leq n, \quad 
	b_{n,\bfk}\geq 0 \andspace
	\|b_{n,\bfk}-p_{n,\bfk}\|_\infty 
	\leq \epsilon. 
	$$
	
\end{lemma}

\begin{proof}
	Fix $\epsilon\in (0,1)$, and we will pick appropriate $\delta,\gamma \in (0,1)$ later. Let $\bfx\in [0,1]^d$ be the input. For each $1\leq \ell\leq d$, we use \cref{prop:bernrelu1} to provide us with an activated network with $O(n\log(n/\gamma))$ layers and $O(n^2\log(n/\gamma))$ nodes and parameters that produces $\{b_{n,k_\ell}(x_\ell)\}_{k_\ell=0}^n$. Each of these $d$ networks have exactly the same number of layers, so placing all $d$ of them in parallel, we obtain a network that outputs $\{b_{n,k_\ell}(x_\ell)\}_{1\leq \ell\leq d, \, 0\leq k_\ell\leq n}$ in the same layer. We also have
	\begin{equation*}
		b_{n,k_\ell}\leq p_{n,k_\ell}, \quad\text{and}\quad \|p_{n,k_\ell}-b_{n,k_\ell}\|_\infty\leq \gamma.
	\end{equation*}
	For each $0\leq \bfk\leq n$, we use a $d$-term approximate product ReLU neural network $P_{\delta,d}$ as in \cref{prop:multseveral} and define
	$$
	b_{n,\bfk}(\bfx)
	:=P_{\delta,d}\big(b_{n,k_1}(x_1),\dots,b_{n,k_d}(x_d)\big).
	$$
	This is well-defined since $b_{n,k_\ell}\leq p_{n,k_\ell}\leq 1$. 
	
	We need $(n+1)^d$ many such $P_{\delta,d}$ networks placed in parallel, and each one has size $O(\log(1/\delta))$. The entire implementation of $\{b_{n,\bfk}(\bfx)\}_{0\leq \bfk\leq n}$ can be done by a network with
	\begin{equation*}
		\begin{split}
			O\big(n\log(n/\gamma)+ \log(1/\delta)\big) \quad &\text{layers}, \\
			O\big(n^2\log(n/\gamma) +  n^d\log(1/\delta) \big) \quad &\text{nodes and parameters}.
		\end{split}
	\end{equation*}
	
	Next we compute the errors between $p_{n,\bfk}$ and $b_{n,\bfk}$, and then optimize over the parameters. For each $0\leq \bfk\leq n$, we first apply  \cref{prop:multseveral} to get
	\begin{align*}
		|p_{n,\bfk}(\bfx)-b_{n,\bfk}(\bfx)|
		&\leq \Big| P_{\delta,d}\(b_{n,k_1}(x_1),\dots, b_{n,k_d}(x_d)\)-b_{n,k_1}(x_1)\cdots b_{n,k_d}(x_d)\Big| \\ 
		&\quad +| p_{n,\bfk}(\bfx)-b_{n,k_1}(x_1)\cdots b_{n,k_d}(x_d)| \\
		&\leq \delta  + | p_{n,\bfk}(\bfx)-b_{n,k_1}(x_1)\cdots b_{n,k_d}(x_d)|. 
	\end{align*}
	To control the right hand side, we use that $p_{n,\bfk}(\bfx)$ is a tensor product and peel off one term at a time. Then
	\begin{align*}
		|p_{n,\bfk}(\bfx)-&b_{n,k_1}(x_1)\cdots b_{n,k_d}(x_d)| \\
		&\leq |p_{n,k_1}(x_1)-b_{n,k_1}(x_1)| |p_{n,k_2}(x_2)\cdots p_{n,k_d}(x_d)| \\
		&\quad \quad + |b_{n,k_1}(x_1)| | p_{n,k_2}(x_2) \cdots p_{n,k_d}(x_d)-b_{n,k_2}(x_2)\cdots b_{n,k_d}(x_d)| \\
		&\leq \gamma + | p_{n,k_2}(x_2) \cdots p_{n,k_d}(x_d)-b_{n,k_2}(x_2)\cdots b_{n,k_d}(x_d)|. 
	\end{align*}
	Continuing in this manner, we obtain the inequality 
	$$
	|p_{n,\bfk}(\bfx)-b_{n,\bfk}(\bfx)|
	\leq \delta  + | p_{n,\bfk}(\bfx)-b_{n,k_1}(x_1)\cdots b_{n,k_d}(x_d)|
	\leq \delta+d\gamma.
	$$
	We select $\gamma=\epsilon/(2d)$ and $\delta=\epsilon/2$ to complete the proof.
\end{proof}

\section*{Acknowledgments}

Weilin Li was supported by the AMS--Simons Travel Grant and a startup fund provided by the CUNY Research Foundation.


\bibliography{ApproxQuanNN}

\begin{thebibliography}{10}

\bibitem{adell2015estimates}
Jos{\'e}~A Adell, Jorge Bustamante, and Jos{\'e}~M Quesada.
\newblock Estimates for the moments of {Bernstein} polynomials.
\newblock {\em Journal of Mathematical Analysis and Applications},
  432(1):114--128, 2015.

\bibitem{ashbrock2021stochastic}
Jonathan Ashbrock and Alexander~M. Powell.
\newblock Stochastic {M}arkov gradient descent and training low-bit neural
  networks.
\newblock {\em Sampling Theory, Signal Processing, and Data Analysis},
  19(2):1--23, 2021.

\bibitem{barron1994approximation}
Andrew~R. Barron.
\newblock Approximation and estimation bounds for artificial neural networks.
\newblock {\em Machine Learning}, 14(1):115--133, 1994.

\bibitem{beckermann2017singular}
Bernhard Beckermann and Alex Townsend.
\newblock On the singular values of matrices with displacement structure.
\newblock {\em SIAM Journal on Matrix Analysis and Applications},
  38(4):1227--1248, 2017.

\bibitem{belkin2019reconciling}
Mikhail Belkin, Daniel Hsu, Siyuan Ma, and Soumik Mandal.
\newblock Reconciling modern machine-learning practice and the classical
  bias--variance trade-off.
\newblock {\em Proceedings of the National Academy of Sciences},
  116(32):15849--15854, 2019.

\bibitem{bolcskei2019optimal}
Helmut Bolcskei, Philipp Grohs, Gitta Kutyniok, and Philipp Petersen.
\newblock Optimal approximation with sparsely connected deep neural networks.
\newblock {\em SIAM Journal on Mathematics of Data Science}, 1(1):8--45, 2019.

\bibitem{canziani2016analysis}
Alfredo Canziani, Adam Paszke, and Eugenio Culurciello.
\newblock An analysis of deep neural network models for practical applications.
\newblock {\em arXiv preprint arXiv:1605.07678}, 2016.

\bibitem{noiseshaping}
Evan Chou, C.~Sinan G{\"u}nt{\"u}rk, Felix Krahmer, Rayan Saab, and
  {\"O}zg{\"u}r Y{\i}lmaz.
\newblock Noise-shaping quantization methods for frame-based and compressive
  sampling systems.
\newblock {\em Sampling Theory, A Renaissance}, pages 157--184, 2015.

\bibitem{courbariaux2015binaryconnect}
Matthieu Courbariaux, Yoshua Bengio, and Jean-Pierre David.
\newblock Binaryconnect: Training deep neural networks with binary weights
  during propagations.
\newblock {\em Advances in neural information processing systems}, 28, 2015.

\bibitem{cybenko1989approximation}
George Cybenko.
\newblock Approximation by superpositions of a sigmoidal function.
\newblock {\em Mathematics of Control, Signals and Systems}, 2(4):303--314,
  1989.

\bibitem{DD}
Ingrid Daubechies and Ronald DeVore.
\newblock Approximating a bandlimited function using very coarsely quantized
  data: {A} family of stable sigma-delta modulators of arbitrary order.
\newblock {\em Annals of Mathematics}, 158(2):679--710, 2003.

\bibitem{daubechies2021nonlinear}
Ingrid Daubechies, Ronald DeVore, Simon Foucart, Boris Hanin, and Guergana
  Petrova.
\newblock Nonlinear approximation and (deep) {ReLU} networks.
\newblock {\em Constructive Approximation}, pages 1--46, 2021.

\bibitem{devore2021neural}
Ronald DeVore, Boris Hanin, and Guergana Petrova.
\newblock Neural network approximation.
\newblock {\em Acta Numerica}, 30:327--444, 2021.

\bibitem{DL}
Ronald~A. DeVore and George~G. Lorentz.
\newblock {\em Constructive Approximation}, volume 303.
\newblock Springer Science \& Business Media, 1993.

\bibitem{ding2008k}
Chunmei Ding and Feilong Cao.
\newblock K-functionals and multivariate {B}ernstein polynomials.
\newblock {\em Journal of Approximation Theory}, 155(2):125--135, 2008.

\bibitem{felbecker1979}
G{\"u}nter Felbecker.
\newblock Linearkombinationen von iterierten {B}ernsteinoperatoren.
\newblock {\em Manuscripta Mathematica}, 29(2):229--248, 1979.

\bibitem{fellhauer2016approximation}
Adrian Fellhauer.
\newblock Approximation of smooth functions using bernstein polynomials in
  multiple variables.
\newblock {\em arXiv preprint arXiv:1609.01940}, 2016.

\bibitem{gonska1994approximation}
Heinz~H Gonska and Xin-long Zhou.
\newblock Approximation theorems for the iterated boolean sums of bernstein
  operators.
\newblock {\em Journal of Computational and applied mathematics}, 53(1):21--31,
  1994.

\bibitem{gribonval2022approximation}
R{\'e}mi Gribonval, Gitta Kutyniok, Morten Nielsen, and Felix Voigtlaender.
\newblock Approximation spaces of deep neural networks.
\newblock {\em Constructive approximation}, 55(1):259--367, 2022.

\bibitem{exp_decay}
C.~Sinan G{\"u}nt{\"u}rk.
\newblock One-bit sigma-delta quantization with exponential accuracy.
\newblock {\em Communications on Pure and Applied Mathematics},
  56(11):1608--1630, 2003.

\bibitem{gunturk2005approximation}
C.~Sinan G{\"u}nt{\"u}rk.
\newblock Approximation by power series with $\pm$1 coefficients.
\newblock {\em International Mathematics Research Notices},
  2005(26):1601--1610, 2005.

\bibitem{onebitBernstein}
C.~Sinan G{\"u}nt{\"u}rk and Weilin Li.
\newblock Approximation with one-bit polynomials in {B}ernstein form.
\newblock {\em Submitted}, 2021.

\bibitem{gunturk2022approximation}
C.~Sinan G{\"u}nt{\"u}rk and Weilin Li.
\newblock Approximation with one-bit polynomials in {Bernstein} form.
\newblock {\em Constructive Approximation}, pages 1--30, 2022.

\bibitem{gunturk2022quantization}
C.~Sinan G{\"u}nt{\"u}rk and Weilin Li.
\newblock Quantization for spectral super-resolution.
\newblock {\em Constructive Approximation}, 56(3):619--648, 2022.

\bibitem{guo2018survey}
Yunhui Guo.
\newblock A survey on methods and theories of quantized neural networks.
\newblock {\em arXiv preprint arXiv:1808.04752}, 2018.

\bibitem{heitzinger2002simulation}
Clemens Heitzinger.
\newblock {\em Simulation and Inverse Modeling of Semiconductor Manufacturing
  Processes}.
\newblock PhD thesis, Technische Universität Wien: Vienna, Austria, 2002.

\bibitem{hildebrandt1933linear}
Theophil~H. Hildebrandt and Issac~J. Schoenberg.
\newblock On linear functional operations and the moment problem for a finite
  interval in one or several dimensions.
\newblock {\em Annals of Mathematics}, pages 317--328, 1933.

\bibitem{kolmogorov1959varepsilon}
Andrei~Nikolaevich Kolmogorov and Vladimir~Mikhailovich Tikhomirov.
\newblock $\varepsilon$-entropy and $\varepsilon$-capacity of sets in function
  spaces.
\newblock {\em Uspekhi Matematicheskikh Nauk}, 14(2):3--86, 1959.

\bibitem{lorentz1}
George~G. Lorentz.
\newblock {\em Bernstein Polynomials}.
\newblock Chelsea, 2nd edition, 1986.

\bibitem{lu2021deep}
Jianfeng Lu, Zuowei Shen, Haizhao Yang, and Shijun Zhang.
\newblock Deep network approximation for smooth functions.
\newblock {\em SIAM Journal on Mathematical Analysis}, 53(5):5465--5506, 2021.

\bibitem{lybrand2021greedy}
Eric Lybrand and Rayan Saab.
\newblock A greedy algorithm for quantizing neural networks.
\newblock {\em Journal of Machine Learning Research}, 22(156):1--38, 2021.

\bibitem{micchelli1973}
Charles Micchelli.
\newblock The saturation class and iterates of the {B}ernstein polynomials.
\newblock {\em Journal of Approximation Theory}, 8(1):1--18, 1973.

\bibitem{molteni2022explicit}
Giuseppe Molteni.
\newblock Explicit bounds for even moments of bernstein’s polynomials.
\newblock {\em Journal of Approximation Theory}, 273:105658, 2022.

\bibitem{shaham2018provable}
Uri Shaham, Alexander Cloninger, and Ronald~R. Coifman.
\newblock Provable approximation properties for deep neural networks.
\newblock {\em Applied and Computational Harmonic Analysis}, 44(3):537--557,
  2018.

\bibitem{shiryayev1993selected}
Albert~N. Shiryayev.
\newblock {\em Selected Works of AN Kolmogorov: Volume III: Information Theory
  and the Theory of Algorithms}, volume~27.
\newblock Springer, 1993.

\bibitem{stein1970singular}
Elias~M. Stein.
\newblock {\em Singular Integrals and Differentiability Properties of
  Functions}, volume~2.
\newblock Princeton University Press, 1970.

\bibitem{vitushkin1961theory}
Anatoli{\u\i}~Georgievich Vitushkin.
\newblock {\em Theory of the Transmission and Processing of Information}.
\newblock Pergamon Press, 1961.

\bibitem{whitney1934analytic}
Hassler Whitney.
\newblock Analytic extensions of differentiable functions defined in closed
  sets.
\newblock {\em Transactions of the American Mathematical Society},
  36(1):63--89, 1934.

\bibitem{yarotsky2017error}
Dmitry Yarotsky.
\newblock Error bounds for approximations with deep {ReLU} networks.
\newblock {\em Neural Networks}, 94:103--114, 2017.

\end{thebibliography}
\bibliographystyle{plain}

\end{document}